\newtheorem{thm}{Theorem}
\newtheorem{remark}{Remark}
\newtheorem{lemma}{Lemma}
\newtheorem{prop}{Proposition}
\newtheorem{condition}{Condition}
\newtheorem*{remark*}{Remark}
\newtheorem*{thm*}{Theorem}
\newtheorem*{lemma*}{Lemma}
\newtheorem*{prop*}{Proposition}
\newcommand\numberthis{\addtocounter{equation}{1}\tag{\theequation}}
\definecolor{darkred}{RGB}{150,0,0}
\definecolor{darkgreen}{RGB}{0,150,0}
\definecolor{darkblue}{RGB}{0,0,150}
\date{}
\newcommand*\samethanks[1][\value{footnote}]{\footnotemark[#1]}
\author{
Farzan Farnia\thanks{Equal contribution}  \thanks{Massachusetts Institute of Technology, Cambridge,
MA, USA, \{farnia,wwang314,jadbabai\}@mit.edu.}
\and 
William Wang\samethanks[1] \samethanks[2] 
\and
Subhro Das\thanks{MIT-IBM Watson AI Lab, IBM Research, Cambridge, MA, USA, subhro.das@ibm.com.}
\and
Ali Jadbabaie\samethanks[2]
}
\title{GAT-GMM: Generative Adversarial Training for\\
Gaussian Mixture Models}
\begin{document}

\maketitle

\begin{abstract}
Generative adversarial networks (GANs) learn the distribution of observed samples through a zero-sum game between two machine players, a generator and a discriminator. While GANs achieve great success in learning the complex distribution of image, sound, and text data, they perform suboptimally in learning multi-modal distribution-learning benchmarks including Gaussian mixture models (GMMs). In this paper, we propose Generative Adversarial Training for Gaussian Mixture Models (GAT-GMM), a minimax GAN framework for learning GMMs. Motivated by optimal transport theory, we design the zero-sum game in GAT-GMM using a random linear generator and a softmax-based quadratic discriminator architecture, which leads to a non-convex concave minimax optimization problem. We show that a Gradient Descent Ascent (GDA) method converges to an approximate stationary minimax point of the GAT-GMM optimization problem. In the benchmark case of a mixture of two symmetric, well-separated Gaussians, we further show this stationary point recovers the true parameters of the underlying GMM. We numerically support our theoretical findings by performing several experiments, which demonstrate that GAT-GMM can perform as well as the expectation-maximization algorithm in learning mixtures of two Gaussians.
\end{abstract}

\section{Introduction}
Learning the distribution of observed data is a basic task in unsupervised learning which has been studied for decades. The recently-introduced concept of Generative Adversarial Networks (GANs) \cite{goodfellow2014generative} has demonstrated great success in various distribution learning tasks. Unlike the traditional maximum-likelihood-based  approaches, GANs learn the distribution of observed data through a zero-sum game between two machine players, a generator $G$ mimicking the true distribution of data and a discriminator $D$ distinguishing the generator's produced samples from real data points. This zero-sum game is typically formulated through a minimax optimization problem where $G$ and $D$ optimize a minimax objective quantifying how dissimilar $G$'s generated samples and real training samples are.  

In GAN minimax optimization problems, the generator and discriminator functions are commonly chosen as two deep neural networks (DNNs). Leveraging the expressive power of DNNs, GANs have achieved state-of-the-art performance in learning complex distributions of image data \cite{karras2017progressive,zhang2018self,brock2018large}. This success, however, is achieved at the cost of their notoriously difficult training procedure which has introduced several challenges to the machine learning community. Addressing these challenges requires a deeper theoretical understanding of GANs, including their approximation, generalization, and optimization properties.   

Specifically, GANs have been frequently observed to fail in learning multi-modal distributions \cite{goodfellow2016nips}. As a widely-recognized training issue, a trained GAN model may collapse into only one or a few modes of the underlying distribution, a phenomenon known as \emph{mode-collapse} in the literature. Despite the recent advances in improving the generalization and stability properties of GAN models, state-of-the-art GAN architectures are often observed to struggle in learning even the simplest class of mixture distributions, i.e., the Gaussian mixture models (GMMs). Such empirical observations question the superiority of GANs over traditional methods of learning mixture models such as the expectation-maximization (EM) algorithm \cite{dempster1977maximum}. A natural question here is whether this gap fundamentally exists in learning multi-modal distributions or is due to a lack of appropriate design for the GAN players and loss function.  In order to better understand how GANs learn multi-modal distributions, we focus on a GAN architecture that is amenable for learning the simplest, nontrivial model: a mixture of two Gaussians. After all, unless we understand how an adversarial learning architecture behaves in the simplest of cases, it would be hard to understand what happens in the case of deep neural networks.

In what follows, we show that with a proper design of the generator and discriminator function classes and choice of the minimax objective in the GAN's optimization problem, it is possible to achieve performance similar to that of the EM algorithm in learning mixtures of two Gaussians. To achieve this goal, we propose \emph{Generative Adversarial Training for Gaussian Mixture Models (GAT-GMM)}, a minimax GAN framework for learning GMMs. GAT-GMM is formulated based on minimizing the Wasserstein distance between the underlying and generative GMMs. Leveraging optimal transport theory, we characterize optimal function spaces for training the generator and discriminator players as well as the minimax objective of the GAN problem. We show that GAT-GMM represents a non-convex concave minimax optimization problem which can be efficiently solved by a gradient descent ascent (GDA) method to reach a stationary minimax point.  

In the well-studied benchmark case of symmetric mixtures of two well-separated Gaussians, we theoretically support GAT-GMM by providing approximation, generalization, and optimization guarantees. We show that the designed generator and discriminator will suffice for reaching zero approximation error in learning such two-component GMMs. Furthermore, we prove that the underlying GMM provides the only stationary minimax point satisfying a separability condition. We also bound the generalization error of estimating the minimax objective and its derivatives from empirical samples. The generalization bounds scale linearly with the dimension of the data as a consequence of GAT-GMM's specific design. The generalization and optimization guarantees together show that a learnt well-separated GMM will generalize to the underlying distribution. 

Finally, we experimentally demonstrate the success of GAT-GMM in learning symmetric mixtures of two Gaussians. We show that in practice, GAT-GMM can be optimized efficiently using a simple GDA optimization algorithm. Numerically, we find that GAT-GMM performs favorably compared to GANs with neural network players and achieves EM-like numerical performance in learning symmetric mixtures of two Gaussians. Our empirical results indicate that the generative adversarial training approach can potentially achieve state-of-the-art performance in learning GMMs. We summarize the main contributions of this work as follows:
\begin{itemize}[leftmargin=*]
    \item Proposing GAT-GMM as a generative adversarial training approach for learning GMMs,
    \item Reducing GAT-GMM to a non-convex concave minimax optimization problem with convergence guarantees to stationary minimax points,
    \item Demonstrating theoretical guarantees for GAT-GMM in learning symmetric mixtures of two well-separated Gaussians,
    \item Providing numerical support for the GAT-GMM approach in learning two-component GMMs.
\end{itemize}

\section{Related Work}
\textbf{Theory of GANs:}
A large body of recent works have studied the theoretical aspects of GANs, including their approximation \cite{liu2017approximation,farnia2018convex,liu2018inductive}, generalization \cite{arora2017generalization,arora2017gans,zhang2017discrimination}, and optimization \cite{nagarajan2017gradient,mescheder2017numerics,roth2017stabilizing,daskalakis2017training,heusel2017gans,mescheder2018training,lin2018pacgan,farnia2020gans,lei2019sgd} properties. We note that our model-based GAN framework for GMMs is similar to the approaches in \cite{feizi2017understanding,bai2018approximability} for learning Gaussians and invertible neural net generators. Specifically, \cite{bai2018approximability} proposes a discriminator function for learning GMMs which matches our proposed model in the special case of identity covariance matrix. However, our work further analyzes the optimization properties of the resulting GAN problem. Also, multiple recent works \cite{arjovsky2017wasserstein,bousquet2017optimal,feizi2017understanding,salimans2018improving,sanjabi2018convergence,genevay2018sample} explore the applications of optimal transport theory in improving GANs' stability and convergence behavior.

\textbf{GANs for Learning GMMs:} Regarding the applications of GANs in learning GMMs, Pac-GAN~\cite{lin2018pacgan} seeks to resolve the mode collapse issue by learning the distribution of two samples and reports improved performance scores in learning mixtures of Gaussians. The related references~\cite{ben2018gaussian,richardson2018gans,xiao2018bourgan} propose considering a GMM input to GAN's generator, and report empirical success for fitting GMMs. Flow-GANs~\cite{grover2018flow} combine the maximum likelihood approach with GAN training and improve the performance of GANs in learning GMMs. Reference~\cite{metz2016unrolled} suggests unrolling the discriminator's optimization and shows it improves learning an underlying GMM. Reference~\cite{li2017limitations} analyzes the convergence behavior of GANs in learning univariate mixtures of two Gaussians. Unlike our work, the above papers consider standard neural network players, with the exception of~\cite{li2017limitations}.

\textbf{Theory of Learning GMMs:} Several related works have studied the theoretical aspects and limits of learning GMMs, including the convergence and generalization behavior of learning GMMs with the EM algorithm \cite{daskalakis2016ten,xu2016global,balakrishnan2017statistical,regev2017learning,yan2017convergence,barazandeh2018behavior,zhao2020statistical,nagarajan2020analysis}, the method of moments \cite{moitra2010settling,anandkumar2012method,hsu2013learning,hardt2015tight,ge2015learning,hopkins2018mixture}, optimal transport tools \cite{chen2018optimal,kolouri2018sliced,gaujac2018gaussian}, and recovery guarantees under separability assumptions \cite{dasgupta1999learning,sanjeev2001learning,dasgupta2007probabilistic,chaudhuri2009learning}. Specifically, \cite{daskalakis2016ten,xu2016global} analyze a similar benchmark setting of mixtures of two symmetric Gaussians for the EM algorithm. We note that these papers also assume a known covariance matrix and aim to learn only the mean parameter, while our theoretical setup further considers and learns an unknown covariance matrix.

\section{Preliminaries}
%Here, we provide a brief overview of the notation and optimal transport tools used in the paper. We defer a more detailed review of background concepts and results to the Appendix. 
\subsection{Gaussian Mixture Models}
We denote a $k$-component Gaussian mixture model (GMM) by $
    p_{{\pi}_i,\boldsymbol{\mu}_i,{\Sigma}_i}(\mathbf{x}) := \sum_{i=1}^k \pi_i\mathcal{N}(\mathbf{x}\,\vert\, \boldsymbol{\mu}_i,\Sigma_i ),$
where $\mathcal{N}(\mathbf{x}\,\vert\, \boldsymbol{\mu},\Sigma) $ is the multivariate Gaussian distribution with mean $\boldsymbol{\mu}$ and covariance matrix $\Sigma$, and, $\pi_i$ stands for the probability of observing a sample from component $i$. A GMM uniformly distributed among its components will also satisfy $\pi_i=\frac{1}{k}$ for every $i$. If the covariance matrix is shared among the components we will have $\Sigma_i=\Sigma_j$ for every $i,j$. We call a two-component GMM symmetric if in addition to the uniform distribution and common covariance among components, we also have opposite means $\boldsymbol{\mu}_1=-\boldsymbol{\mu}_2$.

\subsection{GANs and Optimal Transport Costs}
The GAN framework learns the distribution of data through a minimax problem optimizing generator $G$ and discriminator $D$. The following minimax optimization is the vanilla GAN problem introduced in \cite{goodfellow2014generative}:
\begin{equation}
    \min_{G\in\mathcal{G}}\; \max_{D\in\mathcal{D}}\; \mathbb{E}[\log(D(\mathbf{X}))] + \mathbb{E}[\log(1- D(G(\mathbf{Z})))].
\end{equation}
In the above equation, $\mathcal{G}$ and $\mathcal{D}$ are function spaces for $G$ and $D$, respectively. $\mathbf{Z}\sim\mathcal{N}(\mathbf{0},I)$ is the random input to the generator, which we assume has a standard multivariate Gaussian distribution throughout this work. To improve the stability in training GANs, \cite{arjovsky2017wasserstein} proposes a GAN minimax problem minimizing an optimal transport cost. For a transportation cost $c(\mathbf{x},\mathbf{x}')$, the optimal transport cost $W_c$ is defined as $W_c(P,Q) := \inf_{M \in \Pi(P,Q)}\: \mathbb{E}_M[c(\mathbf{X},\mathbf{X}')]$. Here, $\Pi(P,Q)$ is the set of all joint distributions with marginals $P$ and $Q$. A special case of interest is the 2-Wasserstein cost corresponding to quadratic $c(\mathbf{x},\mathbf{x}')=\frac{1}{2}\Vert \mathbf{x} -\mathbf{x}' \Vert_2^2$. The following minimax problem formulation, called W2GAN~\cite{feizi2017understanding}, minimizes the 2-Wasserstein cost where $D$ is called $c$-concave if for a function $\widetilde{D}$ we have $D(\mathbf{x})=\inf_{\mathbf{x}'}\, \widetilde{D}(\mathbf{x}')+c(\mathbf{x},\mathbf{x}')$, and the $c$-transform is defined as $D^c(\mathbf{x}):=\sup_{\mathbf{x}'} D(\mathbf{x}')-c(\mathbf{x},\mathbf{x}')$:
\begin{equation}\label{Eq: general WGAN}
    \min_{G\in \mathcal{G}}\; \max_{D\, \operatorname{c-concave}}\; \mathbb{E}\bigl[D(\mathbf{X})\bigr]-\mathbb{E}\bigl[D^c\bigl(G(\mathbf{Z})\bigr)\bigr].
\end{equation}

\section{An Optimal Transport-based Design of GAN Players for GMMs}
Consider the W2GAN minimax problem \eqref{Eq: general WGAN} for learning a GMM. Solving the minimax problem over the original class of $c$-concave $D$'s will be statistically and computationally complex \cite{arora2017generalization,feizi2017understanding}. Therefore, we need to characterize appropriate function spaces for learning $G$ and $D$. 
To obtain a tractable minimax optimization problem, these functions need to be optimized over parameterized sets of functions with bounded statistical complexity. 

To find a tractable generator set $\mathcal{G}$ for $k$-component GMMs, we propose a random linear mapping that produces only mixtures of $k$ Gaussians. Here, we consider a randomized mapping of the Gaussian input $\mathbf{Z}\sim \mathcal{N}(\mathbf{0},I_{d\times d})$, specified by matrices $\Lambda_i\in \mathbb{R}^{d\times d}$, vectors $\boldsymbol{\mu}_i\in\mathbb{R}^d$, and random $Y\in\mathcal{Y}=\{1,\ldots,k\}$ distributed as $\Pr(Y=i)=\pi_i$ for each $1\le i\le k$:
\begin{equation}\label{Eq: Generator_general}
G(\mathbf{z})= \sum_{i=1}^k \mathbb{I}(Y=i)\bigl(\Lambda_i\mathbf{z}+\boldsymbol{\mu}_i\bigr).    
\end{equation}
Here $\mathbb{I}(\cdot)$ denotes the indicator function which is equal to $1$ if the input outcome holds and $0$ otherwise. Therefore, the above generator can output any $k$-component GMM. If the underlying GMM is uniformly distributed among its components, we can further take $Y$ to be uniform, that is, $\pi_i=\frac{1}{k}$ for each $i$. If the Gaussian components are also assumed to share the same covariance matrix, we can use the same $\Lambda_i=\Lambda$ in the formulation. In the special case of a symmetric two-component underlying GMM with opposite means, $G$ can be reduced to the following randomized mapping where $Y$ is uniform on $\{-1,+1\}$:
\begin{equation}\label{Eq: Generator_binary}
    G(\mathbf{z})=Y(\Lambda \mathbf{z}+\boldsymbol{\mu}).
\end{equation}

To find an appropriate discriminator set $\mathcal{D}$, we can consider the set of all discriminator functions that are optimal for two GMMs in \eqref{Eq: general WGAN}'s maximization problem. Therefore, we need to analytically characterize an optimal $D$ in \eqref{Eq: general WGAN} for two GMMs. Since we consider the quadratic cost $c(\mathbf{x},\mathbf{x}')=\frac{1}{2}\Vert\mathbf{x}-\mathbf{x}'\Vert^2_2$, we can apply Brenier's theorem from the optimal transport theory literature.
\begin{lemma}[Brenier's theorem, \cite{villani2008optimal}]
Suppose the generator's distribution $P_{G(\mathbf{Z})}$ has finite first-order moment, i.e. $\mathbb{E}[\Vert G(\mathbf{Z})\Vert_2]<\infty$, and is absolutely continuous with respect to the data distribution. Then for $c(\mathbf{x},\mathbf{x}')=\frac{1}{2}\Vert\mathbf{x}-\mathbf{x}'\Vert^2_2$, the optimal $D^*$ in \eqref{Eq: general WGAN} satisfies
\begin{equation}
    \mathbf{X}-\nabla D^*(\mathbf{X}) \stackrel{\tiny\text{\rm dist}}{=} G(\mathbf{Z}),
\end{equation}
where $ \stackrel{\tiny\text{\rm dist}}{=}$ means the two random vectors have the same probability distribution.
\end{lemma}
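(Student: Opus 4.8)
The plan is to recognize the inner maximization in \eqref{Eq: general WGAN} as the Kantorovich dual of the quadratic optimal transport problem between the data law $P_{\mathbf{X}}$ and the generator law $P_{G(\mathbf{Z})}$, and then to read the stated identity directly off the optimality conditions supplied by Brenier's theorem (which I treat as a black box). The workhorse is the substitution $\psi(\mathbf{x}) := \frac{1}{2}\Vert\mathbf{x}\Vert_2^2 - D(\mathbf{x})$. First I would record the two elementary facts this substitution encodes: (i) for the quadratic cost $c$, the function $D$ is $c$-concave if and only if $\psi$ is a closed convex function; and (ii) a short completion-of-squares computation converts the $c$-transform into a Legendre transform, namely $D^c(\mathbf{x}) = \psi^*(\mathbf{x}) - \frac{1}{2}\Vert\mathbf{x}\Vert_2^2$, where $\psi^*$ denotes the Fenchel conjugate of $\psi$.

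Substituting these identities into the objective and expanding $\frac{1}{2}\Vert\mathbf{x}-\mathbf{x}'\Vert_2^2$, I would obtain
\[
\mathbb{E}[D(\mathbf{X})] - \mathbb{E}[D^c(G(\mathbf{Z}))] = \frac{1}{2}\mathbb{E}[\Vert\mathbf{X}\Vert_2^2] + \frac{1}{2}\mathbb{E}[\Vert G(\mathbf{Z})\Vert_2^2] - \bigl(\mathbb{E}[\psi(\mathbf{X})] + \mathbb{E}[\psi^*(G(\mathbf{Z}))]\bigr).
\]
Since the first two moment terms do not depend on $D$, maximizing over $c$-concave $D$ is equivalent to minimizing $\mathbb{E}[\psi(\mathbf{X})] + \mathbb{E}[\psi^*(G(\mathbf{Z}))]$ over closed convex $\psi$, which is exactly the dual of the quadratic-cost Monge--Kantorovich problem. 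By the Fenchel--Young inequality $\psi(\mathbf{x}) + \psi^*(\mathbf{x}') \ge \langle\mathbf{x},\mathbf{x}'\rangle$, this objective is bounded below by $\sup_{M\in\Pi(P_{\mathbf{X}},P_{G(\mathbf{Z})})} \mathbb{E}_M[\langle\mathbf{X},\mathbf{X}'\rangle]$, and equality forces $\mathbf{X}' = \nabla\psi(\mathbf{X})$ wherever $\psi$ is differentiable.

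The finite-moment and absolute-continuity hypotheses are precisely what let me invoke Brenier's theorem here: they guarantee that the minimizing $\psi$ is attained, that the relevant gradient is defined almost everywhere with respect to the source measure, and that the unique optimal transport map pushing $P_{\mathbf{X}}$ onto $P_{G(\mathbf{Z})}$ is $T = \nabla\psi$, the gradient of a convex potential. Finally I would translate back through the substitution: writing $\psi = \frac{1}{2}\Vert\cdot\Vert_2^2 - D^*$ for the optimal discriminator gives $\nabla\psi(\mathbf{x}) = \mathbf{x} - \nabla D^*(\mathbf{x})$, so the push-forward statement $\nabla\psi(\mathbf{X}) \stackrel{\mathrm{dist}}{=} G(\mathbf{Z})$ becomes exactly $\mathbf{X} - \nabla D^*(\mathbf{X}) \stackrel{\mathrm{dist}}{=} G(\mathbf{Z})$.

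I expect the main obstacle to be the rigor of the reduction rather than any deep new estimate. The two delicate points are verifying that the $c$-concavity constraint maps onto convexity of $\psi$ \emph{without losing} any optimal $D$ in the substitution, and confirming that the Fenchel--Young optimality condition holds almost everywhere, so that the optimal coupling is genuinely supported on the graph of $\nabla\psi$ and $\nabla D^*$ exists on a set of full probability. This is exactly where the absolute-continuity assumption is essential, and it is the step one must handle carefully before the cited Brenier theorem can be applied verbatim.
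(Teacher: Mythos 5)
Your proposal is correct, but note that the paper offers no internal proof of this lemma to compare against: it is imported directly from Villani's book as a black-box citation. What you have written is the standard reduction showing how the classical Brenier (Knott--Smith) theorem specializes to the W2GAN dual, and each step is sound: the substitution $\psi(\mathbf{x})=\frac{1}{2}\Vert\mathbf{x}\Vert_2^2-D(\mathbf{x})$, the identification of $c$-concavity of $D$ with closed convexity of $\psi$, the identity $D^c(\mathbf{x})=\psi^*(\mathbf{x})-\frac{1}{2}\Vert\mathbf{x}\Vert_2^2$, and the reduction of the inner maximization to minimizing $\mathbb{E}[\psi(\mathbf{X})]+\mathbb{E}[\psi^*(G(\mathbf{Z}))]$ over closed convex $\psi$, which is the Fenchel-conjugate form of the quadratic-cost Kantorovich dual. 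This is in fact the computation the paper implicitly relies on downstream: in the appendix proofs of Theorem 1, the function $\widetilde{D}(\mathbf{x})=\frac{1}{2}\Vert\mathbf{x}\Vert_2^2-\phi(\mathbf{x})$ plays exactly the role of your $\psi$, and Lemma \ref{app: Lemma2, Thm 1's proof} uses $\mathbf{x}-\nabla\widetilde{D}(\mathbf{x})$ as an optimal transport map in precisely the way your derivation justifies. So your route makes explicit a translation step the paper leaves entirely to the citation.

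Two cautions on hypotheses, which you partially anticipate but should state sharply. First, your moment expansion splits off $\frac{1}{2}\mathbb{E}[\Vert\mathbf{X}\Vert_2^2]+\frac{1}{2}\mathbb{E}[\Vert G(\mathbf{Z})\Vert_2^2]$, so your argument needs finite \emph{second} moments, whereas the lemma as stated assumes only a finite first moment of $P_{G(\mathbf{Z})}$; in the paper's setting (GMM data, linear-Gaussian generator) second moments are automatic, so nothing breaks, but as written your proof establishes the lemma under the classical second-moment hypothesis rather than the literal one in the statement. Second, the absolute continuity your argument actually uses, and correctly so, is of the source law $P_{\mathbf{X}}$ with respect to Lebesgue measure, so that the convex potential $\psi$ is differentiable $P_{\mathbf{X}}$-almost surely and the optimal coupling concentrates on the graph of $\nabla\psi$. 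The lemma's phrasing, that $P_{G(\mathbf{Z})}$ is absolutely continuous with respect to the data distribution, does not by itself supply this; it holds in the paper's context because the data distribution is a nondegenerate Gaussian mixture, but you should flag that your invocation of Brenier's theorem rests on the standard hypothesis rather than the one literally stated.
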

The above result implies that the optimal discriminator's gradient $\nabla D^*(\mathbf{x})$ provides an optimal transport map between the two GMMs. However, characterizing the precise optimal transport map between two GMMs is known to be challenging~\cite{chen2018optimal}. To address this issue, we use an idea that is adapted from reference~\cite{gozlan2020mixture}'s randomized transportation map between two distributions. As we show here, we can find such a randomized transportation between two GMMs uniformly distributed among their components. This randomized map is employed to obtain a deterministic transport map and bound its approximation error in the W2GAN problem. 

Given $\mathbf{X},\widetilde{\mathbf{X}}$ distributed according to two GMMs, uniformly distributed among their components and parameterized by $(\boldsymbol{\mu}_i,\Sigma_i)$'s and $(\widetilde{\boldsymbol{\mu}}_i,\widetilde{\Sigma}_i)$'s, consider the following randomized transportation map from $\mathbf{X}$ to $\widetilde{\mathbf{X}}$:
\begin{align}
    \Psi(\mathbf{X},Y) := \sum_{i=1}^k \mathbb{I}(Y=i)\bigl(\Gamma_i(\mathbf{X}-\boldsymbol{\mu}_i)+\widetilde{\boldsymbol{\mu}}_i \bigr).
\end{align}
where $\Gamma_i = \widetilde{\Sigma}_i^{1/2}\Sigma_i^{-1/2}$ converts the covariance matrix for the $i$-th component and $Y$ represents the random component label for $\mathbf{X}$. Note that $\Psi(\mathbf{X},Y)$ has the same distribution as the GMM $P_{\widetilde{\mathbf{X}}}$. However, $\Psi$ is a function of both $\mathbf{X}$ and $Y$, so to obtain a deterministic mapping of $\mathbf{X}$, we take the conditional expectation:
\begin{equation}\label{def: psi function} 
    \psi(\mathbf{x}) := \mathbb{E}\bigl[\Psi(\mathbf{X},Y)\,\vert\,\mathbf{X}=\mathbf{x}\bigr] = \sum_{i=1}^k\bigl[ \Pr(Y=i\,\vert\,\mathbf{X}=\mathbf{x})(\Gamma_i(\mathbf{x}-\boldsymbol{\mu}_i)  +\widetilde{\boldsymbol{\mu}}_i)\bigr].
\end{equation}
%given $\mathbf{X}=\mathbf{x}$ the output of $\Psi$ is a function of random $Y$ conditioned to $\mathbf{X}=\mathbf{x}$ and is hence a random vector. In order
The following theorem bounds the approximation error of considering the above transportation map in the W2GAN problem.
\begin{thm}\label{Thm: Approimxating Wasserstein Distance}
Consider the W2GAN problem \eqref{Eq: general WGAN} with quadratic cost $c(\mathbf{x},\mathbf{x}')=\frac{1}{2}\Vert \mathbf{x}-\mathbf{x}'\Vert_2^2$. Assume $\psi$ defined in \eqref{def: psi function} is the gradient of a convex function $\phi$. Then, the following inequalities hold for $\widetilde{D}(\mathbf{x})=\frac{1}{2}\Vert \mathbf{x}\Vert_2^2 - \phi(\mathbf{x})$:
\begin{equation}
    0\, \le \, W_c(P_{\mathbf{X}},P_{\widetilde{\mathbf{X}}}) - \bigl\{\mathbb{E}[\widetilde{D}(\mathbf{X})] - \mathbb{E}[\widetilde{D}^c(\widetilde{\mathbf{X}})]\bigr\} \, \le \, \left(\frac{3}{2}M_1+\sqrt{M_1 M_2}\right)\sqrt{P_e} + \sqrt{M_1 M_2} \sqrt[4]{P_e},
\end{equation}
where $P_e=\Pr(Y^{\text{\rm opt}}(\mathbf{X})\neq Y)$ is the probability of miclassification of the Bayes classifier $Y^{\text{\rm opt}}(\mathbf{X})$ for predicting label $Y$ from $\mathbf{X}$. We define $M_1,M_2$ in the following equations with $\Vert\cdot\Vert_\sigma$ denoting the maximum singular value, i.e., the spectral norm,
\begin{align*}
    M_1 \, & =\, 8\max_{i} \Vert\Gamma_i\Vert^2_{\sigma} \sqrt{\mathbb{E}[\Vert\mathbf{X}\Vert_2^4]} \, + \, 8\sqrt{P_e}\max_i \Vert \Gamma_i\boldsymbol{\mu}_i - \widetilde{\boldsymbol{\mu}}_i \Vert^2_2, \\
    M_2\, &= \, 2\max_i \Vert \Gamma_i-I \Vert^2_{\sigma} \mathbb{E}[  \Vert\mathbf{X}\Vert_2^2] \, + \, 2\max_i\Vert \Gamma_i\boldsymbol{\mu}_i -\widetilde{\boldsymbol{\mu}}_i\Vert^2_2.
\end{align*}
\end{thm}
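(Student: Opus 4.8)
The plan is to prove the two inequalities separately. The lower bound is the easy direction and follows from weak Kantorovich duality alone. For the quadratic cost and \emph{any} function $D$, the pointwise estimate $D(\mathbf{x})-D^c(\mathbf{x}')\le c(\mathbf{x},\mathbf{x}')$ holds because $D^c(\mathbf{x}')=\sup_{\mathbf{u}}D(\mathbf{u})-c(\mathbf{u},\mathbf{x}')\ge D(\mathbf{x})-c(\mathbf{x},\mathbf{x}')$. Integrating against an arbitrary coupling in $\Pi(P_{\mathbf{X}},P_{\widetilde{\mathbf{X}}})$ and then taking the infimum over couplings yields $\mathbb{E}[D(\mathbf{X})]-\mathbb{E}[D^c(\widetilde{\mathbf{X}})]\le W_c(P_{\mathbf{X}},P_{\widetilde{\mathbf{X}}})$ for every $D$, in particular for $D=\widetilde{D}$, which is exactly the left inequality.

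For the upper bound I would first compute $\widetilde{D}^c$ explicitly: since $c$ is quadratic and $\widetilde{D}(\mathbf{x})=\frac12\Vert\mathbf{x}\Vert_2^2-\phi(\mathbf{x})$, the Legendre transform gives $\widetilde{D}^c(\mathbf{x})=\phi^*(\mathbf{x})-\frac12\Vert\mathbf{x}\Vert_2^2$, where $\phi^*$ is the convex conjugate of $\phi$. The randomized map supplies a ready-made coupling $(\mathbf{X},\Psi(\mathbf{X},Y))$ of $P_{\mathbf{X}}$ and $P_{\widetilde{\mathbf{X}}}$ (recall $\Psi(\mathbf{X},Y)$ has distribution $P_{\widetilde{\mathbf{X}}}$), so $W_c\le\frac12\mathbb{E}[\Vert\mathbf{X}-\Psi(\mathbf{X},Y)\Vert_2^2]$. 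Expanding this upper bound and the objective $\mathbb{E}[\widetilde{D}(\mathbf{X})]-\mathbb{E}[\widetilde{D}^c(\widetilde{\mathbf{X}})]$, and using $\mathbb{E}[\Vert\Psi\Vert_2^2]=\mathbb{E}[\Vert\widetilde{\mathbf{X}}\Vert_2^2]$, the whole gap collapses to the Fenchel expression $\mathbb{E}[\phi(\mathbf{X})+\phi^*(\Psi(\mathbf{X},Y))-\langle\mathbf{X},\Psi(\mathbf{X},Y)\rangle]$. The crucial simplification is that $\psi=\nabla\phi$ with $\phi$ convex: the tower property applied to the definition of $\psi$ gives $\mathbb{E}[\Psi\mid\mathbf{X}]=\psi(\mathbf{X})$, so the linear term satisfies $\mathbb{E}[\langle\mathbf{X},\Psi\rangle]=\mathbb{E}[\langle\mathbf{X},\psi(\mathbf{X})\rangle]$, and the Fenchel equality $\langle\mathbf{X},\psi(\mathbf{X})\rangle=\phi(\mathbf{X})+\phi^*(\psi(\mathbf{X}))$ then reduces the gap to the Jensen gap $\mathbb{E}[\phi^*(\Psi(\mathbf{X},Y))-\phi^*(\psi(\mathbf{X}))]$.

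It remains to bound this Jensen gap in terms of $P_e$, $M_1$, and $M_2$. By convexity of $\phi^*$ and the inverse-gradient property $\nabla\phi^*(\psi(\mathbf{X}))=\mathbf{X}$, the symmetrized Bregman estimate gives $\phi^*(\Psi)-\phi^*(\psi(\mathbf{X}))\le\langle\nabla\phi^*(\Psi)-\mathbf{X},\Psi-\psi(\mathbf{X})\rangle$ once the conditionally mean-zero term $\langle\mathbf{X},\Psi-\psi(\mathbf{X})\rangle$ is subtracted. I would then split $\nabla\phi^*(\Psi)-\mathbf{X}=[\nabla\phi^*(\Psi)-\Psi]+[\Psi-\psi(\mathbf{X})]+[\psi(\mathbf{X})-\mathbf{X}]$; the last inner product vanishes in expectation because $\mathbb{E}[\Psi-\psi(\mathbf{X})\mid\mathbf{X}]=0$, leaving a pure conditional-variance term and a displacement cross-term handled by Cauchy--Schwarz. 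The two master estimates are (i) the conditional variance $\mathbb{E}[\Vert\Psi(\mathbf{X},Y)-\psi(\mathbf{X})\Vert_2^2]\le M_1\sqrt{P_e}$, and (ii) the displacement $\mathbb{E}[\Vert\Psi(\mathbf{X},Y)-\mathbf{X}\Vert_2^2]\le M_2$, the latter read off directly from the explicit affine form of $\Psi$. For (i), the conditional variance equals $\sum_i\Pr(Y=i\mid\mathbf{x})\Vert v_i(\mathbf{x})-\psi(\mathbf{x})\Vert_2^2$ with $v_i(\mathbf{x})=\Gamma_i(\mathbf{x}-\boldsymbol{\mu}_i)+\widetilde{\boldsymbol{\mu}}_i$; bounding the mixing weights by the conditional Bayes error (via $p(1-p)\le\min(p,1-p)$) and using Cauchy--Schwarz to separate the error indicator from the fourth-moment factor $\sqrt{\mathbb{E}[\Vert\mathbf{X}\Vert_2^4]}$ produces precisely the $\sqrt{P_e}$ rate encoded in $M_1$. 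Its square root is what generates the $\sqrt[4]{P_e}$ contribution through the cross-terms, and collecting all pieces gives the advertised $(\tfrac32 M_1+\sqrt{M_1M_2})\sqrt{P_e}+\sqrt{M_1M_2}\sqrt[4]{P_e}$.

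The main obstacle is controlling the conjugate potential $\phi^*$, whose gradient $\nabla\phi^*=\psi^{-1}$ is only implicitly defined. In particular, the backward-displacement term $\mathbb{E}[\Vert\nabla\phi^*(\Psi)-\Psi\Vert_2^2]$ must be shown comparable to $M_2$ despite the distributional mismatch between $P_{\widetilde{\mathbf{X}}}$ and the pushforward of $\mathbf{X}$ under $\psi$, since the deterministic map $\psi$ only approximately transports $P_{\mathbf{X}}$ to $P_{\widetilde{\mathbf{X}}}$. The other delicate point is the bookkeeping of the Cauchy--Schwarz steps, so that the conditional-variance factor $M_1\sqrt{P_e}$ and the displacement factor $M_2$ combine into exactly this mixture of $\sqrt{P_e}$ and $\sqrt[4]{P_e}$ rates rather than a looser single power of $P_e$.
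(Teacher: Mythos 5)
Your lower bound is exactly the paper's: weak Kantorovich duality gives $\mathbb{E}[\widetilde{D}(\mathbf{X})]-\mathbb{E}[\widetilde{D}^c(\widetilde{\mathbf{X}})]\le W_c$ for any $D$. For the upper bound, your reduction is correct and genuinely different from the paper's: the Legendre identity $\widetilde{D}^c(\mathbf{x})=\phi^*(\mathbf{x})-\frac12\Vert\mathbf{x}\Vert_2^2$, the coupling $(\mathbf{X},\Psi(\mathbf{X},Y))\in\Pi(P_{\mathbf{X}},P_{\widetilde{\mathbf{X}}})$, the tower property, and the Fenchel equality $\langle\mathbf{x},\psi(\mathbf{x})\rangle=\phi(\mathbf{x})+\phi^*(\psi(\mathbf{x}))$ do collapse the duality gap to $\mathbb{E}\bigl[\phi^*(\Psi(\mathbf{X},Y))-\phi^*(\psi(\mathbf{X}))\bigr]$. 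The paper instead inserts the intermediate measure $P_{\psi(\mathbf{X})}$, applies the triangle inequality for the $2$-Wasserstein metric, and uses Brenier's theorem to recognize $\widetilde{D}$ as the \emph{exact} dual potential for the pair $(P_{\mathbf{X}},P_{\psi(\mathbf{X})})$; your Fenchel computation reaches the same point more directly and without the extra $\sqrt{P_e M_1 M_2}$ cross term from the triangle inequality.

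The genuine gap is in your final bounding step, and you flag it yourself without repairing it. Linearizing $\phi^*$ at $\Psi$ leaves the factor $\nabla\phi^*(\Psi)-\Psi$ evaluated at points distributed as $P_{\widetilde{\mathbf{X}}}$, and nothing in the hypotheses controls it there: $\phi$ is only assumed convex, so $\nabla\phi^*=(\nabla\phi)^{-1}$ admits no Lipschitz or displacement bound off the image of $\psi$, and $\mathbb{E}[\Vert\nabla\phi^*(\Psi)-\Psi\Vert_2^2]$ need not be comparable to $M_2$. The only available estimate --- the Brenier identity $\mathbb{E}[\Vert\nabla\widetilde{D}^c(\psi(\mathbf{X}))\Vert_2^2]=\mathbb{E}[\Vert\nabla\widetilde{D}(\mathbf{X})\Vert_2^2]=\mathbb{E}[\Vert\mathbf{X}-\psi(\mathbf{X})\Vert_2^2]\le M_2$, which is the paper's second lemma --- lives on the pushforward of $P_{\mathbf{X}}$ under $\psi$, not on $P_{\widetilde{\mathbf{X}}}$. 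The fix is to expand at the other base point, which is precisely the paper's step: using $c$-concavity of $\widetilde{D}^c$ (so $\nabla^2\widetilde{D}^c\preceq I$, equivalently $\nabla^2\phi^*\preceq 2I$), one writes $\widetilde{D}^c(\Psi)\le\widetilde{D}^c(\psi(\mathbf{X}))+\nabla\widetilde{D}^c(\psi(\mathbf{X}))^T(\Psi-\psi(\mathbf{X}))+\frac12\Vert\Psi-\psi(\mathbf{X})\Vert_2^2$, so the gradient is evaluated exactly where the Brenier identity applies; Cauchy--Schwarz on the linear term yields $\sqrt{M_1M_2}\,\sqrt[4]{P_e}$ (the true source of the quarter-power rate, via $\mathbb{E}[\Vert\Psi-\psi(\mathbf{X})\Vert_2^2]\le M_1\sqrt{P_e}$), and the quadratic remainder contributes the $\tfrac{3}{2}M_1\sqrt{P_e}$-type term. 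Inserted into your Jensen-gap reduction, with $\nabla\phi^*(\psi(\mathbf{X}))=\mathbf{X}$ and the conditional mean-zero property killing the linear term, this one-sided smoothness would close your argument with a bound within the stated one; as written, the crux inequality of your upper bound is missing.
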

\begin{proof}
We defer the proof to the Appendix.
\end{proof}
The above theorem suggests that the transportation map $\psi$ in \eqref{def: psi function} provides an approximation of the original optimal transport cost $W_c(P_{\mathbf{X}},P_{\widetilde{\mathbf{X}}})$ with an approximation error vanishing with the Bayes error of classifying the GMM's components. Consequently, Theorem \ref{Thm: Approimxating Wasserstein Distance} implies that if the Gaussian components in the data distribution are well-separated, the integral of the approximate transportation map provides a near optimal discriminator in the W2GAN problem. Our next result characterizes a parametric set of functions whose gradient can approximate the above transportation map $\psi$. %We defer the proof of this result to the Appendix.
\begin{prop}\label{Prop 1}
Consider GMM vectors $\mathbf{X}, \widetilde{\mathbf{X}}$ with parameters $(\boldsymbol{\mu}_i,\Sigma_i)$'s and $(\widetilde{\boldsymbol{\mu}}_i,\widetilde{\Sigma}_i)$'s. Suppose $\Sigma_i$ and $\widetilde{\Sigma}_i$ commute for each $i$, i.e., $\Sigma_i \widetilde{\Sigma}_i = \widetilde{\Sigma}_i \Sigma_i $. For every $\mathbf{x}$, assume $\sum_{i=1}^k \vert \Pr(Y=i\,\vert\,\mathbf{X}=\mathbf{x}) - \Pr(\bar{Y}=i\,\vert\,\bar{\mathbf{X}}=\mathbf{x})\vert \le \epsilon$ is satisfied when $\bar{\mathbf{X}}$ and its label $\bar{Y}$ have parameters $(\boldsymbol{\mu}_i , \widetilde{\Sigma}^{-1/2}_i\Sigma^{1/2}_i)$ and when $\bar{\mathbf{X}},\bar{Y}$ have parameters $(\widetilde{\boldsymbol{\mu}}_i , I)$. 
Then, a parameterized $D$ exists with the form
\begin{equation}\label{Eq: Discriminator_general}
    D_{(A_i)_{i=1}^k,(\mathbf{b}_i,c_i)_{i=1}^{2k}}(\mathbf{x}) = \log\,\biggl(\,\frac{\sum_{i=1}^k \exp\bigl(\frac{1}{2}\mathbf{x}^TA_i\mathbf{x}+\mathbf{b}_i^T\mathbf{x}+c_i\bigr)}{\sum_{i=k+1}^{2k} \exp\bigl(\mathbf{b}_i^T\mathbf{x}+c_i\bigr)}\,\biggr),
\end{equation}
that satisfies the following inequality for $\psi$ defined in \eqref{def: psi function}: \begin{align*}\mathbb{E}[\Vert \psi(\mathbf{X}) - \nabla  D(\mathbf{X}) \Vert_2]
\le \: \epsilon \left( \max_{i,j,l}\, \Vert \boldsymbol{\mu}_i\Vert_2 +  \sqrt{\Vert\widetilde{\Sigma}_j\Sigma^{-1}_j\Vert_{\sigma}}\mathbb{E}[\Vert\mathbf{X}\Vert_2]+ \sqrt{\Vert\widetilde{\Sigma}_l\Sigma^{-1}_l\Vert_{\sigma}}\Vert\widetilde{\boldsymbol{\mu}}_l\Vert_2\bigr) \right).
\end{align*}
\end{prop}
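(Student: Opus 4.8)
The plan is to differentiate the proposed discriminator \eqref{Eq: Discriminator_general} in closed form and then choose its parameters $(A_i,\mathbf{b}_i,c_i)$ so that $\nabla D$ reproduces the transport map $\psi$ from \eqref{def: psi function} up to replacing the true component posteriors $p_i(\mathbf{x}):=\Pr(Y=i\mid\mathbf{X}=\mathbf{x})$ by the posteriors of two auxiliary GMMs, which the two hypotheses are designed to control. First I would compute the gradient of the log-ratio. Writing $N(\mathbf{x})=\sum_{i=1}^k\exp(\tfrac12\mathbf{x}^TA_i\mathbf{x}+\mathbf{b}_i^T\mathbf{x}+c_i)$ and $M(\mathbf{x})=\sum_{i=k+1}^{2k}\exp(\mathbf{b}_i^T\mathbf{x}+c_i)$, one obtains
\[
\nabla D(\mathbf{x})=\frac{\nabla N}{N}-\frac{\nabla M}{M}=\sum_{i=1}^k w_i(\mathbf{x})\,(A_i\mathbf{x}+\mathbf{b}_i)\;-\;\sum_{i=k+1}^{2k}v_i(\mathbf{x})\,\mathbf{b}_i,
\]
where $w_i,v_i$ are the softmax weights of the numerator and denominator. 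The key structural point is that $\nabla D$ is a difference of two softmax-weighted affine maps, which is precisely the shape of $\psi(\mathbf{x})=\sum_i p_i(\mathbf{x})\bigl(\Gamma_i(\mathbf{x}-\boldsymbol{\mu}_i)+\widetilde{\boldsymbol{\mu}}_i\bigr)$.

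Next I would select the parameters so that each softmax becomes a genuine GMM posterior by completing the square. Using the commuting hypothesis $\Sigma_i\widetilde{\Sigma}_i=\widetilde{\Sigma}_i\Sigma_i$, the matrix $\Gamma_i=\widetilde{\Sigma}_i^{1/2}\Sigma_i^{-1/2}$ is symmetric positive definite, so the numerator exponents can be matched (up to a common, $i$-independent factor) to the log-densities of the GMM with components $(\boldsymbol{\mu}_i,\widetilde{\Sigma}_i^{-1/2}\Sigma_i^{1/2})$; this makes $w_i(\mathbf{x})=\Pr(\bar{Y}=i\mid\bar{\mathbf{X}}=\mathbf{x})$ for that GMM while the affine term $A_i\mathbf{x}+\mathbf{b}_i$ reproduces the covariance-transport contribution $\Gamma_i(\mathbf{x}-\boldsymbol{\mu}_i)$. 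The denominator, whose exponent is purely linear, I would match to the log-density of the GMM with components $(\widetilde{\boldsymbol{\mu}}_i,I)$, so that $v_i$ equals that GMM's posterior and $\mathbf{b}_i$ reproduces the mean-shift contribution $\widetilde{\boldsymbol{\mu}}_i$. With these assignments $\nabla D(\mathbf{x})$ equals $\psi(\mathbf{x})$ with the true posteriors $p_i$ replaced by the two auxiliary posteriors; denote them $\bar p_i^{\,\Gamma}$ (from the numerator GMM) and $\bar p_i^{\,I}$ (from the denominator GMM).

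Finally I would subtract and bound. Since $\nabla D$ and $\psi$ differ only in the posterior weights, the triangle inequality gives
\[
\Vert\psi(\mathbf{x})-\nabla D(\mathbf{x})\Vert_2\;\le\;\sum_{i=1}^k\bigl|\,p_i-\bar p_i^{\,\Gamma}\bigr|\,\Vert\Gamma_i(\mathbf{x}-\boldsymbol{\mu}_i)\Vert_2+\sum_{i=1}^k\bigl|\,p_i-\bar p_i^{\,I}\bigr|\,\Vert\widetilde{\boldsymbol{\mu}}_i\Vert_2.
\]
I would then bound each affine factor by a spectral norm times a mean norm, using $\Vert\Gamma_i\Vert_\sigma=\sqrt{\Vert\widetilde{\Sigma}_i\Sigma_i^{-1}\Vert_\sigma}$ (again a consequence of commutativity), invoke the two hypotheses $\sum_i|p_i-\bar p_i^{\,\Gamma}|\le\epsilon$ and $\sum_i|p_i-\bar p_i^{\,I}|\le\epsilon$, and take the expectation over $\mathbf{X}$, which turns $\Vert\mathbf{x}\Vert_2$ into $\mathbb{E}[\Vert\mathbf{X}\Vert_2]$. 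Collecting the maxima over components then produces the stated inequality.

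The step I expect to be the main obstacle is the parameter matching in the second paragraph: the quadratic matrix $A_i$ simultaneously controls the softmax weight (through the exponent) and the slope of the affine term (through its gradient), so realizing the weight as a bona fide posterior while still producing the correct slope $\Gamma_i(\mathbf{x}-\boldsymbol{\mu}_i)$ is delicate and cannot be done with a single softmax. This is exactly why two separate auxiliary GMMs are introduced — one carrying the covariance transport in the numerator and one carrying the mean shift in the denominator — and why the commuting assumption, which forces $\Gamma_i$ to be symmetric and lets the square roots of $\Sigma_i$ and $\widetilde{\Sigma}_i$ be combined, is needed to carry out the completion of the square and to identify $\Vert\Gamma_i\Vert_\sigma$ with $\sqrt{\Vert\widetilde{\Sigma}_i\Sigma_i^{-1}\Vert_\sigma}$.
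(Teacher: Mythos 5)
Your proposal is correct and takes essentially the same route as the paper: the paper likewise splits $\psi$ into a covariance-transport part and a mean-shift part, realizes each as the gradient of a (negative) log-sum-exp of the log-densities of exactly your two auxiliary GMMs $(\boldsymbol{\mu}_i,\widetilde{\Sigma}_i^{-1/2}\Sigma_i^{1/2})$ and $(\widetilde{\boldsymbol{\mu}}_i,I)$ (its functions $\phi_1$ and $\frac{1}{2}\Vert\mathbf{x}\Vert_2^2-\phi_2$, whose softmax weights are the auxiliary posteriors by Bayes' rule), and finishes with the same triangle-inequality, spectral-norm, and expectation chain using $\Vert\Gamma_i\Vert_\sigma=\sqrt{\Vert\widetilde{\Sigma}_i\Sigma_i^{-1}\Vert_\sigma}$ from commutativity. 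The one bookkeeping point you gloss over --- matching the quadratic exponents to log-densities forces $A_i\mathbf{x}+\mathbf{b}_i=-\Gamma_i(\mathbf{x}-\boldsymbol{\mu}_i)$, so the quadratic log-sum-exp must carry a minus sign to make both contributions to $\nabla D$ come out with the sign of $\psi$ --- is handled (and in fact glossed in exactly the same way) in the paper via the definition $\phi_1(\mathbf{x})=-\log\bigl(\sum_{i=1}^k\exp(\cdot)\bigr)$.
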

\begin{proof}
We defer the proof to the Appendix.
\end{proof}
Proposition \ref{Prop 1} implies that if the conditional distribution $P_{Y|\mathbf{X}}$ for $\mathbf{X},\widetilde{\mathbf{X}}$ can be well-approximated using the means of $\mathbf{X}$, then the mapping $\psi$ can be captured by a softmax-based function in \eqref{Eq: Discriminator_general}. As a  special case, in the Appendix we show that Proposition \ref{Prop 1}'s assumption when $\mathbf{X},\widetilde{\mathbf{X}}$ are symmetric mixtures of two well-separable Gaussians with means $\boldsymbol{\mu},\widetilde{\boldsymbol{\mu}}$ translates to $\boldsymbol{\mu}^T\widetilde{\boldsymbol{\mu}}>O(\log(1/\epsilon))$.
%Remark \ref{Remark 1} further simplifies this parametric function for GMMs with a common covariance.
\begin{remark}\label{Remark 1}
In the setting of Proposition \ref{Prop 1}, assume that each GMM's components share the same covariance matrix. Then, Proposition \ref{Prop 1} remains valid for the following softmax-based quadratic $D_{A,(\mathbf{b}_i,c_i)_{i=1}^{2k}}$. For a symmetric mixture of two Gaussians, constant $c_i$'s can also be removed while the approximation guarantee still applies.
\begin{equation}\label{Eq: Discriminator_Uniform}
    D_{A,(\mathbf{b}_i,c_i)_{i=1}^{2k}}(\mathbf{x}) = \frac{1}{2}\mathbf{x}^T A\mathbf{x}+\log\,\biggl(\,\frac{\sum_{i=1}^k \exp(\mathbf{b}_i^T\mathbf{x}+c_i)}{\sum_{i=k+1}^{2k} \exp(\mathbf{b}_i^T\mathbf{x}+c_i)}\,\biggr)
\end{equation}
%In the case of a symmetric mixture of two Gaussians, we can further remove the constants and achieve the same approximation guarantee via
%\begin{equation}\label{Eq: Discriminator_Uniform_Binary}
%    D_{A,(\mathbf{b}_i)_{i=1}^{4}}(\mathbf{x}) = \frac{1}{2}\mathbf{x}^T A\mathbf{x}+\log\,\bigl(\,\frac{\exp(\mathbf{b}_1^T\mathbf{x})+ \exp(\mathbf{b}_2^T\mathbf{x})}{\exp(\mathbf{b}_3^T\mathbf{x})+ \exp(\mathbf{b}_4^T\mathbf{x})}\,\bigr).
%\end{equation}
\end{remark}
We note that given a shared identity covariance for the two GMMs, \eqref{Eq: Discriminator_Uniform} reduces to the difference of two softmax functions, which revisits \cite{bai2018approximability}'s proposed architecture for this special case.  
Proposition \ref{Prop 1} combined with Theorem \ref{Thm: Approimxating Wasserstein Distance} shows that the proposed softmax-based quadratic architecture results in an approximate optimal transport map between the two GMMs. Therefore, we use the architectures in  \eqref{Eq: Generator_general} and \eqref{Eq: Discriminator_Uniform} for learning mixtures of Gaussians with a common covariance.

\section{GAT-GMM: A Minimax GAN Framework for Learning GMMs}
As shown earlier, we can constrain the generator and discriminator to the class of functions specified by \eqref{Eq: Generator_general} and \eqref{Eq: Discriminator_Uniform} to approximate the solution to the W2GAN problem when the underlying GMM has common covariance across components. Considering the W2GAN minimax objective in \eqref{Eq: general WGAN}, Proposition \ref{Proposition: c-transform reduction} suggests regularizing the expected c-transform in order to reduce the computational complexity of optimizing the c-transform function. We later show that this regularization will lead to a concave objective in the discriminator maximization problem which can be efficiently solved by a gradient descent ascent (GDA) algorithm.
\begin{prop}\label{Proposition: c-transform reduction}
Consider the discriminator function $D_{A,(\mathbf{b}_i,c_i)_{i=1}^{2k}}$ defined in \eqref{Eq: Discriminator_Uniform}. For constant $\eta > 0$, assume $\lambda_{\max}(A)+2\max_i \Vert \mathbf{b}_i \Vert^2_2\le \eta < 1$ where $\lambda_{\max}(\cdot)$ denotes the maximum eigenvalue. Then, for any set of vectors $(\mathbf{d}_i)_{i=1}^k$ and constants $(e_i)_{i=1}^k$ we have
\begin{align}\label{Eq: Prop 2 D^c upperbound}
     &\mathbb{E}\bigl[ D^c_{A,(\mathbf{b}_i,c_i)_{i=1}^{2k}}(\mathbf{X}) \bigr] \, \le \, \mathbb{E}\bigl[ D_{A,(\mathbf{b}_i,c_i)_{i=1}^{2k}}(\mathbf{X})\bigr] \\
     &\quad + \frac{3k^2(\mathbb{E}[\Vert\mathbf{X}\Vert^2_2]+1)}{1-\eta}\biggl(\Vert A\Vert^2_F + \sum_{i=1}^k \bigl[ \Vert\mathbf{b}_i - \mathbf{d}_i \Vert^2_2 + \Vert\mathbf{b}_{k+i} - \mathbf{d}_i \Vert^2_2 +(c_i-e_i)^2+ (c_{k+i}-e_i)^2\bigr] \biggr). \nonumber
\end{align}
\end{prop}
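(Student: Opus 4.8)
The plan is to reduce the claim to a bound on $\mathbb{E}[\|\nabla D(\mathbf{X})\|_2^2]$, by exploiting that for the quadratic cost the gap $D^c-D$ is controlled by the gradient of $D$ whenever $D$ is smooth from above, and then to convert the hypothesis on $\lambda_{\max}(A)$ and $\max_i\|\mathbf{b}_i\|_2^2$ into exactly such a smoothness estimate. Throughout I would write $D(\mathbf{x})=\tfrac12\mathbf{x}^TA\mathbf{x}+f_1(\mathbf{x})-f_2(\mathbf{x})$, where $f_1(\mathbf{x})=\log\sum_{i=1}^k\exp(\mathbf{b}_i^T\mathbf{x}+c_i)$ and $f_2(\mathbf{x})=\log\sum_{i=k+1}^{2k}\exp(\mathbf{b}_i^T\mathbf{x}+c_i)$ are the two log-sum-exp terms of \eqref{Eq: Discriminator_Uniform}.

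First I would record the Hessian structure of the log-sum-exp terms: $\nabla^2 f_j(\mathbf{x})$ equals the covariance matrix of the associated vectors $\{\mathbf{b}_i\}$ under the softmax distribution, so $0\preceq\nabla^2 f_j(\mathbf{x})\preceq\max_i\|\mathbf{b}_i\|_2^2\,I$ for every $\mathbf{x}$. Since $\nabla^2 f_2\succeq0$ only helps, this gives the uniform bound $\nabla^2 D(\mathbf{x})=A+\nabla^2 f_1(\mathbf{x})-\nabla^2 f_2(\mathbf{x})\preceq\bigl(\lambda_{\max}(A)+\max_i\|\mathbf{b}_i\|_2^2\bigr)I\preceq\eta I$, where the hypothesis is used and left with slack (the factor $2$ and the maximum over all $2k$ indices will be needed in the last step). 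With this $\eta$-smoothness the key pointwise estimate follows in one line: substituting $\mathbf{x}'=\mathbf{x}+\mathbf{v}$ into $D^c(\mathbf{x})=\sup_{\mathbf{x}'}\{D(\mathbf{x}')-\tfrac12\|\mathbf{x}-\mathbf{x}'\|_2^2\}$ and applying the quadratic upper bound $D(\mathbf{x}+\mathbf{v})\le D(\mathbf{x})+\nabla D(\mathbf{x})^T\mathbf{v}+\tfrac\eta2\|\mathbf{v}\|_2^2$ yields
\begin{equation*}
D^c(\mathbf{x})\le D(\mathbf{x})+\sup_{\mathbf{v}}\Bigl\{\nabla D(\mathbf{x})^T\mathbf{v}-\tfrac{1-\eta}{2}\|\mathbf{v}\|_2^2\Bigr\}=D(\mathbf{x})+\frac{\|\nabla D(\mathbf{x})\|_2^2}{2(1-\eta)},
\end{equation*}
where $\eta<1$ makes the inner quadratic concave. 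Taking expectations then reduces the proposition to showing $\mathbb{E}[\|\nabla D(\mathbf{X})\|_2^2]\le 6k^2(\mathbb{E}[\|\mathbf{X}\|_2^2]+1)\bigl(\|A\|_F^2+\sum_i[\cdots]\bigr)$.

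For this final gradient bound I would split $\nabla D=A\mathbf{x}+(\nabla f_1-\nabla f_2)$, treat $\|A\mathbf{x}\|_2\le\|A\|_F\|\mathbf{x}\|_2$ directly, and crucially use that the log-ratio and its gradient vanish when both softmaxes collapse onto the common reference $(\mathbf{d}_i,e_i)$. Writing $\nabla f_1=\sum_{i=1}^k p_i\mathbf{b}_i$ and $\nabla f_2=\sum_{i=1}^k q_i\mathbf{b}_{k+i}$ with softmax weights $p_i,q_i$ each summing to one, I would decompose $\nabla f_1-\nabla f_2=\sum_i q_i(\mathbf{b}_i-\mathbf{b}_{k+i})+\sum_i(p_i-q_i)\mathbf{b}_i$. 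The first sum is immediately controlled by $\|\mathbf{b}_i-\mathbf{d}_i\|_2+\|\mathbf{b}_{k+i}-\mathbf{d}_i\|_2$ via the triangle inequality through $\mathbf{d}_i$. For the second sum, the weight discrepancy $|p_i-q_i|$ is controlled by the differences of the corresponding logits $(\mathbf{b}_i-\mathbf{b}_{k+i})^T\mathbf{x}+(c_i-c_{k+i})$ through the Lipschitzness of the softmax map, which is where the $\|\mathbf{x}\|_2$ factors and the constant-deviation terms $(c_i-e_i)^2$ and $(c_{k+i}-e_i)^2$ enter; the multiplying vectors $\mathbf{b}_i$ are kept bounded by $\|\mathbf{b}_i\|_2\le\sqrt{\eta/2}$, which is the role of the factor $2$ and the maximum over all $2k$ indices in the hypothesis. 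Combining the two sums, applying Cauchy--Schwarz across the $k$ components (the source of the $k^2$ factor) and taking expectations (producing $\mathbb{E}[\|\mathbf{X}\|_2^2]+1$) then gives the claimed right-hand side.

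I expect the second sum $\sum_i(p_i-q_i)\mathbf{b}_i$ to be the main obstacle, because it is the only place where the nonlinear dependence of the softmax weights on the parameters --- rather than a plain norm or convexity inequality --- must be quantified, and because a naive decomposition around the reference vectors $\mathbf{d}_i$ fails: it produces a term in which the weight gap multiplies the unbounded $\mathbf{d}_i$. The resolution is to pair the weight gap with the parameters $\mathbf{b}_i$ themselves, whose norm is bounded by the smoothness hypothesis, so that $|p_i-q_i|$ alone must be estimated and can be charged entirely to the logit, hence parameter, deviations.
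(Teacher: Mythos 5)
Your proposal is correct, and its first two steps coincide exactly with the paper's: the paper's first lemma for this proposition establishes your smoothness bound $\nabla^2_{\mathbf{x}} D \preceq \bigl(\lambda_{\max}(A)+2\max_i\Vert\mathbf{b}_i\Vert_2^2\bigr) I \preceq \eta I$ via the same softmax-covariance form of the log-sum-exp Hessian (with the subtracted term discarded by sign, as you note), and the paper then derives the identical pointwise estimate $D^c(\mathbf{x})\le D(\mathbf{x})+\Vert\nabla D(\mathbf{x})\Vert_2^2/(2(1-\eta))$ by the same quadratic-upper-bound substitution inside the supremum. You genuinely diverge at the gradient lemma. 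The paper writes $\nabla D(\mathbf{x}) = A\mathbf{x}+\sum_{i=1}^k\bigl[q_i(\mathbf{x})\mathbf{b}_i-p_i(\mathbf{x})\mathbf{d}_i\bigr]-\sum_{i=1}^k\bigl[q_{k+i}(\mathbf{x})\mathbf{b}_{k+i}-p_i(\mathbf{x})\mathbf{d}_i\bigr]$, where $p_i$ is the softmax built from the reference parameters $(\mathbf{d}_i,e_i)$ (the two reference sums cancel), and controls each bracket by a mean-value/Jacobian estimate in the parameters $(\mathbf{b}_j,c_j)$; because $\Vert\mathbf{b}_j\Vert_2$-type quantities appear along the whole interpolation path, the paper has to assume $\max_i\Vert\mathbf{d}_i\Vert_2\le 1$ mid-proof, a hypothesis absent from the proposition statement, which promises the bound for \emph{any} $(\mathbf{d}_i)$. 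Your decomposition $\nabla f_1-\nabla f_2=\sum_i q_i(\mathbf{b}_i-\mathbf{b}_{k+i})+\sum_i(p_i-q_i)\mathbf{b}_i$ instead compares the two softmax sums directly to each other, invokes $(\mathbf{d}_i,e_i)$ only through triangle inequalities on parameter differences, and, as you explicitly engineered, pairs the weight gap with the norm-bounded $\mathbf{b}_i$ (using $\Vert\mathbf{b}_i\Vert_2\le\sqrt{\eta/2}<1$, which is where the otherwise-slack factor $2$ in the hypothesis earns its keep) rather than with the potentially unbounded $\mathbf{d}_i$. This buys a proof valid for arbitrary reference vectors, matching the statement and quietly repairing the paper's implicit restriction; moreover, with the standard total-variation estimate $\sum_i\vert p_i-q_i\vert\le 2\max_j\vert\Delta\ell_j\vert$ for softmax, your weight-gap term is essentially $k$-free, so the stated $3k^2$ constant is met with slack once you fold in the final relaxation $(\alpha\Vert\mathbf{x}\Vert_2+\beta)^2\le(\alpha^2+\beta^2)(1+\Vert\mathbf{x}\Vert_2^2)$, analogous to the paper's $(1+\Vert\mathbf{x}\Vert_2)^2\le 2(1+\Vert\mathbf{x}\Vert_2^2)$ step. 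The only point requiring care in a full write-up is this constant bookkeeping for small $k$; the route itself is sound.
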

\begin{proof}
We defer the proof to the Appendix.
\end{proof}
Replacing the c-transform expectation in \eqref{Eq: general WGAN} with its upper-bound in \eqref{Eq: Prop 2 D^c upperbound}, we reach the following regularized minimax problem for learning GMMs with a common covariance. We call the minimax framework \emph{Generative Adversarial Training for Gaussian Mixture Models (GAT-GMM)}:
\begin{align}\label{Eq: GM-GAN: general case}
 \min_{{\Lambda,(\boldsymbol{\mu}_i)_{i=1}^k}}\;\max_{{A,(\mathbf{b}_i,c_i)_{i=1}^{2k}}}\; &\mathbb{E}\bigl[D_{A,(\mathbf{b}_i,c_i)_{i=1}^{2k}}(\mathbf{X})\bigr] - \mathbb{E}\bigl[D_{A,(\mathbf{b}_i)_{i=1}^{2k}}\bigl(G_{\Lambda,(\boldsymbol{\mu}_i)_{i=1}^k} (\mathbf{Z})\bigr)\bigr]  \\
 & \;\; - \frac{\lambda}{2} \biggl(\Vert A\Vert^2_F + \sum_{i=1}^k \bigl[ \Vert\mathbf{b}_i - \mathbf{d}_i \Vert^2_2 + \Vert\mathbf{b}_{i+k} - \mathbf{d}_i \Vert^2_2 +(c_i-e_i)^2+ (c_{k+i}-e_i)^2\bigr] \biggr).   \nonumber
\end{align}
Here $\lambda $ denotes the coefficient of the regularization term suggested by Proposition \ref{Proposition: c-transform reduction}. Also, $(\mathbf{d}_i,e_i)$'s represent a fixed set of real vectors and constants. The $\mathbf{d}_i$'s should be chosen to make the underlying GMM well-separable along their directions; we discuss how to select $\mathbf{d}_i$'s for symmetric mixtures of two Gaussians in the next section. To solve \eqref{Eq: GM-GAN: general case}, we propose a GDA algorithm where we iteratively apply one step of gradient descent for minimization followed by one step of gradient ascent for maximization. The following theorem provides optimization guarantees for the convergence of this algorithm to approximate stationary minimax points. In this theorem, we use $\mathcal{L}(\Lambda,(\boldsymbol{\mu}_i)_{i=1}^k)$ to denote the optimal value of the discriminator objective for generator parameters $\Lambda,(\boldsymbol{\mu}_i)_{i=1}^k$. Also, $\operatorname{vec}(\cdot)$ denotes the vector concatenating the inputs' entries. %We defer the theorem's proof to the Appendix.  
\begin{thm}
Consider the GAT-GMM minimax problem \eqref{Eq: GM-GAN: general case} with the constraint that $\Vert \Lambda \Vert^2_F + \max_i \Vert \boldsymbol{\mu}_i\Vert_2^2 +1 \le \eta$. Suppose $\mathbb{E}[\Vert \mathbf{X} \Vert^2_2] \le \eta < \frac{\lambda}{2}$. Then, the GDA algorithm with maximization and minimization stepsizes $\alpha_{\max}=\frac{1}{\lambda+2\eta}$ and $\alpha_{\min}= \frac{1}{\kappa^2L}$ for $L=2\lambda+4\eta + 10(k+1)(\frac{\eta}{\lambda} +\max_i\Vert\mathbf{d}_i\Vert^2_2)$ and $\kappa=\frac{L}{\lambda-2\eta}$ will find an approximate stationary point such that $\bigl\Vert \nabla_{\operatorname{vec}(\Lambda,(\boldsymbol{\mu}_i)_{i=1}^k)} \mathcal{L}\bigl(\Lambda,(\boldsymbol{\mu}_i)_{i=1}^k\bigr) \bigr\Vert_2 \le \epsilon$ over $\mathcal{O}\bigl(\frac{\kappa L ((2\eta/\lambda)^2 + \kappa)}{\epsilon^2} \bigr)$ iterations.
\end{thm}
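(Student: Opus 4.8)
The plan is to cast \eqref{Eq: GM-GAN: general case} as a nonconvex--strongly-concave minimax problem and then invoke the standard two-timescale GDA convergence theory. Writing $f(\mathbf{x},\mathbf{y})$ for the objective, with $\mathbf{x}=\operatorname{vec}(\Lambda,(\boldsymbol{\mu}_i)_{i=1}^k)$ collecting the generator parameters and $\mathbf{y}=\operatorname{vec}(A,(\mathbf{b}_i,c_i)_{i=1}^{2k})$ the discriminator parameters, the three ingredients I would establish are: (i) $\mathbf{y}\mapsto f(\mathbf{x},\mathbf{y})$ is $\mu$-strongly concave with $\mu=\lambda-2\eta>0$ uniformly in $\mathbf{x}$; (ii) $f$ is jointly $L$-smooth on the constraint set with the stated $L$; and (iii) consequently $\mathcal{L}(\mathbf{x})=\max_{\mathbf{y}} f(\mathbf{x},\mathbf{y})$ is $\Theta(L\kappa)$-smooth, so that the two-timescale GDA iteration drives $\Vert\nabla\mathcal{L}\Vert_2$ below $\epsilon$ in the claimed number of steps.

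For the strong-concavity step (i), I would differentiate $f$ twice in $\mathbf{y}$. The quadratic term $\tfrac12\mathbf{x}^{\!\top}A\mathbf{x}$ is linear in $A$ and contributes no curvature, so the only $\mathbf{y}$-curvature from the GAN terms comes from the two log-sum-exp blocks. The Hessian of $\log\sum_i\exp(\mathbf{b}_i^{\!\top}\mathbf{u}+c_i)$ in $(\mathbf{b}_i,c_i)$ is a softmax-weighted covariance of the augmented vectors $(\mathbf{u},1)$, hence positive semidefinite with spectral norm at most $\Vert\mathbf{u}\Vert_2^2+1$. Taking expectations and using $\Vert\mathbb{E}[\cdot]\Vert_\sigma\le\mathbb{E}[\Vert\cdot\Vert_\sigma]$, the data term contributes curvature bounded by $\mathbb{E}[\Vert\mathbf{X}\Vert_2^2]\le\eta$, while the generated term contributes $\mathbb{E}[\Vert G(\mathbf{Z})\Vert_2^2]\le\Vert\Lambda\Vert_F^2+\max_i\Vert\boldsymbol{\mu}_i\Vert_2^2\le\eta-1$ by the generator constraint. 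Summing the two bounds shows the unregularized $\mathbf{y}$-Hessian has spectral norm at most $2\eta$, so the regularizer's $-\lambda I$ dominates and yields concavity modulus $\lambda-2\eta$; the same computation bounds the $\mathbf{y}$-smoothness by $\lambda+2\eta$, which is exactly why $\alpha_{\max}=1/(\lambda+2\eta)$ is the natural inner stepsize.

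The smoothness step (ii) is the main obstacle, because the gradient of $f$ in $\mathbf{x}$ couples $A$ with $\Lambda,\boldsymbol{\mu}$ nonlinearly through $\tfrac12 G_\Lambda(\mathbf{Z})^{\!\top}A\,G_\Lambda(\mathbf{Z})$ and through the softmax weights evaluated at $G_\Lambda(\mathbf{Z})$. I would bound each block of the full Hessian separately: the regularizer gives the $2\lambda$ contribution; the quadratic and log-sum-exp terms evaluated at $\mathbf{X}$ and $G(\mathbf{Z})$ give the $4\eta$ contribution via the moment bounds from (i); and the mixed $\mathbf{x}$--$\mathbf{y}$ derivatives, together with the shift by the fixed anchors $\mathbf{d}_i$ in the regularizer, produce the $10(k+1)(\eta/\lambda+\max_i\Vert\mathbf{d}_i\Vert_2^2)$ term. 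The delicate part is controlling the third-order behaviour of the softmax composed with the linear generator map while keeping every constant inside the constraint $\Vert\Lambda\Vert_F^2+\max_i\Vert\boldsymbol{\mu}_i\Vert_2^2+1\le\eta$, since $\mathbf{x}$ enters both the argument and the weights of the log-sum-exp.

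With (i) and (ii) in hand, step (iii) is a black-box application of the two-timescale GDA analysis for nonconvex--strongly-concave objectives. By Danskin's theorem $\nabla\mathcal{L}(\mathbf{x})=\nabla_{\mathbf{x}}f(\mathbf{x},\mathbf{y}^\star(\mathbf{x}))$, and strong concavity makes $\mathbf{y}^\star(\cdot)$ $\kappa$-Lipschitz, giving $\mathcal{L}$ a gradient-Lipschitz constant of order $L\kappa$. I would then run the standard coupled descent/ascent potential argument: the ascent step with $\alpha_{\max}=1/(\lambda+2\eta)$ contracts the inner suboptimality $\Vert\mathbf{y}_t-\mathbf{y}^\star(\mathbf{x}_t)\Vert_2$ at rate $1-\alpha_{\max}\mu$, while the descent step with the slow stepsize $\alpha_{\min}=1/(\kappa^2L)$ decreases $\mathcal{L}$ up to an error governed by that tracking gap. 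Telescoping and optimizing over the horizon yields $\min_t\Vert\nabla\mathcal{L}(\mathbf{x}_t)\Vert_2\le\epsilon$ after $\mathcal{O}\!\bigl(\kappa L((2\eta/\lambda)^2+\kappa)/\epsilon^2\bigr)$ iterations, where the $(2\eta/\lambda)^2$ factor tracks the initial inner-maximization error and $\kappa$ the conditioning of the envelope.
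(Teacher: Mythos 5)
Your proposal follows essentially the same route as the paper's proof: you establish $(\lambda-2\eta)$-strong concavity in the discriminator variables via the log-sum-exp Hessian and the second-moment constraints, obtain the smoothness constant $L$ by bounding the generator-side Hessian using the optimality bounds $\lambda_{\max}(A)\le \eta/\lambda$ and $\Vert\mathbf{b}_i\Vert_2^2\le 2\eta/\lambda+2\max_i\Vert\mathbf{d}_i\Vert_2^2$, control the inner feasible diameter by $2\eta/\lambda$, and then invoke the two-timescale nonconvex--strongly-concave GDA theory (Theorem 4.4 of \cite{lin2019gradient}) exactly as the paper does. The only differences are cosmetic bookkeeping of the augmented constant feature $\mathbf{X}'=[\mathbf{X},1]$ and the fact that the paper derives the $2\eta/\lambda$ diameter from $\mathbf{1}^{T}\nabla h(\mathbf{z})=1$, which your sketch reaches by the same strong-concavity argument.
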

\begin{proof}
We defer the proof to the Appendix.
\end{proof}
The above theorem shows that the GAT-GMM's minimax problem, which reduces to a non-convex concave minimax optimization problem \cite{lin2019gradient} for $\lambda$ values characterized in the theorem, can be efficiently solved by GDA to obtain an approximate stationary minimax point characterized by \cite{jin2019minmax}.

\section{Theoretical Guarantees for Symmetric Two-component GMMs}
Here, we focus on the application of GAT-GMM for learning a symmetric mixture of two Gaussians  $\frac{1}{2}\mathcal{N}(\boldsymbol{\mu}_{\mathbf{X}},\Sigma_{\mathbf{X}})+\frac{1}{2}\mathcal{N}(-\boldsymbol{\mu}_{\mathbf{X}},\Sigma_{\mathbf{X}})$. This benchmark class of GMMs has been well studied in the literature for analyzing the convergence properties of the EM algorithm \cite{daskalakis2016ten,xu2016global}. We similarly provide theoretical guarantees for GAT-GMM in this benchmark setting.  To apply GAT-GMM \eqref{Eq: GM-GAN: general case} for learning such a GMM, we use the discriminator architecture in \eqref{Eq: Discriminator_Uniform} with a norm-squared regularization penalty such that $\mathbf{d}_1 = - \mathbf{d}_2 = \mathbf{d}$ to obtain the minimax problem
\begin{align}\label{Eq: GMGAN Binary Case}
   \min_{\Lambda,\boldsymbol{\mu}}\;\max_{A,(\mathbf{b}_i)_{i=1}^4}\; &\mathbb{E}\bigl[D_{A,(\mathbf{b}_i)_{i=1}^4}(\mathbf{X})\bigr] - \mathbb{E}\bigl[D_{A,(\mathbf{b}_i)_{i=1}^4}\bigl(G_{\Lambda,\boldsymbol{\mu}} (\mathbf{Z})\bigr)\bigr] \nonumber \\
   &\quad - \frac{\lambda}{2} \biggl(\Vert A\Vert^2_F + \Vert\mathbf{b}_1 - \mathbf{d} \Vert^2_2 + \Vert\mathbf{b}_2 + \mathbf{d} \Vert^2_2 + \Vert\mathbf{b}_3 - \mathbf{d} \Vert^2_2 + \Vert\mathbf{b}_4 + \mathbf{d} \Vert^2_2 \biggr).     
\end{align}
In the following discussion, we use $\mathcal{L}(G_{\Lambda,\boldsymbol{\mu}})$ to denote the optimal maximum discriminator objective in \eqref{Eq: GMGAN Binary Case}. Theorem \ref{Thm: GM-GAN approximation} proves that the global solution to \eqref{Eq: GMGAN Binary Case} will result in the underlying GMM as long as $\mathbf{X}$'s projection along $\mathbf{d}$ satisfies $\sigma_d+2\sigma^2_d\le \mu_d$ given its mean $\mu_d:={\mathbf{d}}^T\boldsymbol{\mu}_{\mathbf{X}} $ and variance $\sigma^2_d:={\mathbf{d}}^T\Sigma_{\mathbf{X}}{\mathbf{d}}$ parameters. This condition, which is formally stated below, requires sufficient separability among the mixture components in the direction of vector $\mathbf{d}$ as it assumes the signal-to-noise ratio $\frac{\mu_d}{\sigma_d}$ to be greater than $1+2\sigma_d$. In our numerical experiments, we chose $\mathbf{d}$ as the principal eigenvector of empirical $\mathbb{E}[\mathbf{X}\mathbf{X}^T]$ and numerically validated that the condition holds for the experiment's GMM.
%We defer the theorem's proof to the Appendix. 
\begin{condition}\label{Condition: SNR}
For mean and covariance parameters $(\boldsymbol{\mu},\Sigma)$ and vector $\widetilde{\mathbf{d}}$, the following inequality holds for the mean's and covariance's projections along vector $\widetilde{\mathbf{d}}$:
\begin{equation}\label{Eq: SNR condition}
    2\widetilde{\mathbf{d}}^T\Sigma\widetilde{\mathbf{d}}+\sqrt{\widetilde{\mathbf{d}}^T\Sigma\widetilde{\mathbf{d}}} \, \le \, \bigl\vert \boldsymbol{\mu}^T\widetilde{\mathbf{d}} \bigr\vert.
\end{equation}
\end{condition}
\begin{thm}\label{Thm: GM-GAN approximation}
Consider the minimax problem in \eqref{Eq: GMGAN Binary Case} for learning a symmetric two-component GMM $\frac{1}{2}\mathcal{N}(\boldsymbol{\mu}_{\mathbf{X}},\Sigma_{\mathbf{X}})+\frac{1}{2}\mathcal{N}(-\boldsymbol{\mu}_{\mathbf{X}},\Sigma_{\mathbf{X}})$. Suppose that $(\boldsymbol{\mu}_{\mathbf{X}},\Sigma_{\mathbf{X}})$ satisfies Condition \ref{Condition: SNR} along vector $\mathbf{d}$. Then, $(\boldsymbol{\mu},\Sigma)=(\boldsymbol{\mu}_{\mathbf{X}},\Sigma_{\mathbf{X}})$ is the only minimizer of $\mathcal{L}(G_{\Lambda,\boldsymbol{\mu}})$ satisfying Condition \ref{Condition: SNR} along $\mathbf{d}$.
% done \textcolor{red}{Possible rewording: ``... is the only minimizer of $\mathcal{L}(G_{\Lambda,\boldsymbol{\mu}})$ satisfying Condition \ref{Condition: SNR} along $\mathbf{c}$.''}
\end{thm}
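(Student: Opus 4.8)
The plan is to show that $\mathcal{L}(G_{\Lambda,\boldsymbol{\mu}})\ge 0$ for every feasible generator, with equality exactly at the true parameters, and then to characterize its zero set. The key structural observation is that the quadratic block $A$ and the softmax block $(\mathbf{b}_i)_{i=1}^4$ enter the discriminator in \eqref{Eq: Discriminator_Uniform} additively; since the regularizer and the (box-type) parameter constraints also separate across the two blocks, the inner maximization splits as $\mathcal{L}(G)=\max_A V_A(A)+\max_{(\mathbf{b}_i)}V_b\bigl((\mathbf{b}_i)\bigr)$. Evaluating at $A=0$ and at the regularization centers $\mathbf{b}_1=\mathbf{b}_3=\mathbf{d}$, $\mathbf{b}_2=\mathbf{b}_4=-\mathbf{d}$ makes $D\equiv 0$, so each block has nonnegative optimal value and $\mathcal{L}(G)\ge 0$; at $(\boldsymbol{\mu},\Sigma)=(\boldsymbol{\mu}_{\mathbf{X}},\Sigma_{\mathbf{X}})$ the two distributions coincide, giving $\mathcal{L}=0$. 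Hence the minimizers over the Condition~\ref{Condition: SNR}-feasible set are exactly the feasible generators with $\mathcal{L}(G)=0$, which forces $\max_A V_A=0$ and $\max_{(\mathbf{b}_i)}V_b=0$ separately. Because the zero discriminator above already attains value $0$ in each block, it must be a maximizer, and I would read off the first-order stationarity conditions there.

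Since $V_A$ is the strictly concave quadratic $\tfrac12\langle A,\,M_{\mathbf{X}}-M_G\rangle-\tfrac{\lambda}{2}\Vert A\Vert_F^2$ with $M=\Sigma+\boldsymbol{\mu}\boldsymbol{\mu}^T$, the equality $\max_A V_A=0$ forces its maximizer to be $A=0$, i.e. the stationarity $\tfrac12(M_{\mathbf{X}}-M_G)=0$, which is the second-moment match $\Sigma+\boldsymbol{\mu}\boldsymbol{\mu}^T=\Sigma_{\mathbf{X}}+\boldsymbol{\mu}_{\mathbf{X}}\boldsymbol{\mu}_{\mathbf{X}}^T$. For the softmax block, differentiating the log-ratio at the centers and using $\mathbb{E}[\mathbf{X}]=\mathbb{E}[G(\mathbf{Z})]=0$ collapses all four gradient conditions to the single vector identity $\mathbb{E}[\operatorname{sig}(2\mathbf{d}^T\mathbf{X})\mathbf{X}]=\mathbb{E}[\operatorname{sig}(2\mathbf{d}^TG(\mathbf{Z}))G(\mathbf{Z})]$, where $\operatorname{sig}(t)=(1+e^{-t})^{-1}$. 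Exploiting the symmetry of each mixture, each side equals $\tfrac12\mathbb{E}_{\mathcal{N}(\boldsymbol{\mu}',\Sigma')}[\tanh(\mathbf{d}^T\mathbf{X})\,\mathbf{X}]$ at the corresponding parameters (using $2\operatorname{sig}(2t)-1=\tanh t$), to which Gaussian integration by parts applies: writing $a=\mathbb{E}[\tanh(\mathbf{d}^T\mathbf{X})]$ and $b=\mathbb{E}[1-\tanh^2(\mathbf{d}^T\mathbf{X})]$, which depend only on the one-dimensional law $\mathcal{N}(\mathbf{d}^T\boldsymbol{\mu}',\mathbf{d}^T\Sigma'\mathbf{d})$, one gets $\mathbb{E}[\tanh(\mathbf{d}^T\mathbf{X})\mathbf{X}]=a\boldsymbol{\mu}'+b\Sigma'\mathbf{d}$. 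The condition therefore reads $a_{\mathbf{X}}\boldsymbol{\mu}_{\mathbf{X}}+b_{\mathbf{X}}\Sigma_{\mathbf{X}}\mathbf{d}=a_G\boldsymbol{\mu}+b_G\Sigma\mathbf{d}$.

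Next I would eliminate $\Sigma$ through the second-moment match, using $\Sigma\mathbf{d}=M_{\mathbf{X}}\mathbf{d}-(\mathbf{d}^T\boldsymbol{\mu})\boldsymbol{\mu}$ and the analogous identity for $\Sigma_{\mathbf{X}}$, which rewrites the last equation as $(a_{\mathbf{X}}-b_{\mathbf{X}}\mu_d)\boldsymbol{\mu}_{\mathbf{X}}-(a_G-b_G\mu_d^G)\boldsymbol{\mu}=(b_G-b_{\mathbf{X}})M_{\mathbf{X}}\mathbf{d}$, with $\mu_d=\mathbf{d}^T\boldsymbol{\mu}_{\mathbf{X}}$ and $\mu_d^G=\mathbf{d}^T\boldsymbol{\mu}$. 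Projecting this identity onto $\mathbf{d}$ (equivalently, matching the scalar statistic $\tfrac12\mathbb{E}[T\tanh T]=\tfrac12(\mu a+\sigma^2 b)$ of the two one-dimensional projections) together with the projected second-moment equation $\mu_d^2+\sigma_d^2=(\mu_d^G)^2+(\sigma_d^G)^2$ leaves two scalar equations in the projected parameters $(\mu_d^G,\sigma_d^G)$. Granting the scalar uniqueness claimed below, these force $(\mu_d^G,\sigma_d^G)=(\pm\mu_d,\sigma_d)$, whence $a_G=\pm a_{\mathbf{X}}$ and $b_G=b_{\mathbf{X}}$ by the parity of $a,b$ in $\mu$; substituting back annihilates the right-hand side and reduces the displayed vector identity to $(a_{\mathbf{X}}-b_{\mathbf{X}}\mu_d)(\boldsymbol{\mu}_{\mathbf{X}}\mp\boldsymbol{\mu})=0$. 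A nondegeneracy estimate $a_{\mathbf{X}}-b_{\mathbf{X}}\mu_d\neq 0$ then yields $\boldsymbol{\mu}=\pm\boldsymbol{\mu}_{\mathbf{X}}$, and the second-moment match finally gives $\Sigma=\Sigma_{\mathbf{X}}$; the residual sign is exactly the relabeling symmetry of the symmetric mixture, under which $\pm\boldsymbol{\mu}_{\mathbf{X}}$ parametrize the same GMM.

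The hard part will be the two one-dimensional facts, both resting on sharp control of Gaussian expectations of $\tanh$ and $1-\tanh^2$ in the separated regime. Concretely, I must prove that along the circle $\mu^2+\sigma^2=\text{const}$ the map $(\mu,\sigma)\mapsto\tfrac12(\mu a+\sigma^2 b)$ is strictly monotone in $\mu$ throughout the region cut out by Condition~\ref{Condition: SNR} ($2\sigma_d^2+\sigma_d\le\mu_d$), so that the two scalar equations have a unique solution, and separately that $a_{\mathbf{X}}-\mu_d b_{\mathbf{X}}>0$ there. The intuition is that Condition~\ref{Condition: SNR} places the projected mixture in the high signal-to-noise regime, where $\tanh(T)\approx\operatorname{sign}(T)$ gives $a\approx 1$, $b\approx 0$, and $\tfrac12\mathbb{E}[T\tanh T]\approx\tfrac12\mu$, making both claims transparent; turning this into a rigorous inequality valid on the entire Condition~\ref{Condition: SNR} region—bounding the exponentially small but nonzero $\tanh$-fluctuation and $1-\tanh^2$ contributions—is the main technical obstacle, and is precisely where the threshold $\sigma_d+2\sigma_d^2\le\mu_d$ enters quantitatively.
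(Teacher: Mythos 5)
Your skeleton is essentially the paper's proof: the same additive split of the inner maximization into the quadratic block and the softmax block (the paper's $\mathcal{L}_1+\mathcal{L}_2$ decomposition), the same nonnegativity argument, the same observation that $\mathcal{L}=0$ forces the maximizer to sit at $A=0$ and at the regularization centers $\mathbf{b}_1=\mathbf{b}_3=-\mathbf{b}_2=-\mathbf{b}_4=\mathbf{d}$, and the same two resulting moment identities $\mathbb{E}[\mathbf{X}\mathbf{X}^T]=\mathbb{E}[G(\mathbf{Z})G(\mathbf{Z})^T]$ and $\mathbb{E}[\mathbf{X}\tanh(\mathbf{d}^T\mathbf{X})]=\mathbb{E}[G(\mathbf{Z})\tanh(\mathbf{d}^TG(\mathbf{Z}))]$. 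Where you genuinely deviate is the identification step: the paper fixes an arbitrary direction $\widetilde{\mathbf{d}}$, writes $\widetilde{Y}=aY+V+u$ by joint Gaussianity, and solves a $2\times 2$ linear system whose determinant is $\sigma_Y^2\bigl(\mathbb{E}[\tanh(Y)]-\mathbb{E}[Y]\mathbb{E}[\tanh'(Y)]\bigr)$, whereas you apply the multivariate Stein identity $\mathbb{E}[\mathbf{X}\tanh(\mathbf{d}^T\mathbf{X})]=a\boldsymbol{\mu}'+b\,\Sigma'\mathbf{d}$ and eliminate $\Sigma$ through the second-moment match. Your algebra checks out and is arguably tidier (one vector identity instead of a family of directional systems), and, tellingly, your nondegeneracy scalar $a_{\mathbf{X}}-\mu_d b_{\mathbf{X}}$ is exactly the paper's determinant; both routes reduce the theorem to the same two one-dimensional facts.

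That is precisely where the genuine gap sits: you announce the two scalar facts as "the hard part" and leave them unproven, but they are the actual mathematical content of the paper's argument, occupying its Lemmas \ref{app: Lemma1 Thm3}--\ref{app: Lemma3: Thm1}. Two comments on your plan for closing it. First, the nondegeneracy $a_{\mathbf{X}}-\mu_d b_{\mathbf{X}}>0$ is cheaper than you anticipate: the paper's Lemma \ref{app: Lemma1 Thm3} shows $\mu\bigl(\mathbb{E}[\tanh(\mu+\sigma Z)]-\mu\,\mathbb{E}[\tanh'(\mu+\sigma Z)]\bigr)\ge 0$ with equality only at $\mu=0$, for \emph{every} $\mu$, with no separation assumption — the separation only guarantees $\mu_d\neq 0$. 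Second, and more seriously, the circle-monotonicity of $\mu\mapsto\mathbb{E}\bigl[(\mu+\sqrt{s-\mu^2}Z)\tanh(\mu+\sqrt{s-\mu^2}Z)\bigr]$ is not established in the paper by the asymptotic route you sketch ($\tanh\approx\operatorname{sign}$ with "exponentially small" corrections); such tail estimates degrade exactly at the boundary of Condition \ref{Condition: SNR}, where $\mu_d$ is only marginally larger than $\sigma_d+2\sigma_d^2$, and it is unclear they can be made uniform there. The paper instead runs an exact, non-asymptotic sign argument: a Stein-lemma expansion of $g'(\mu)$ into the Lemma \ref{app: Lemma1 Thm3} term plus $(s-2\mu^2)\mathbb{E}[\tanh'']-\mu(s-\mu^2)\mathbb{E}[\tanh''']$, controlled via the inequality $2\mathbb{E}[\tanh'']+\mathbb{E}[\tanh''']\le 0$ (Lemma \ref{app: Lemma2: Thm1}), whose proof requires locating the single sign change of an even combination of $\tanh''$ and $\tanh'''$ and exploiting the monotone decay of the Gaussian density — a genuinely delicate rearrangement argument, and it is here (through $s-2\mu^2-\mu(s-\mu^2)<0$) that the quantitative threshold in Condition \ref{Condition: SNR} enters. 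So your proposal is a faithful and correct reduction, but what you defer is not a routine estimate; as written, the proof is incomplete at its core analytic step.
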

\begin{proof}
We defer the proof to the Appendix.
\end{proof}
Theorem \ref{Thm: GM-GAN approximation} explains that while we are constraining discriminator $D$ in \eqref{Eq: GMGAN Binary Case} to the class of softmax-based quadratic functions, the global solution to GAT-GMM's minimax problem still matches an underlying GMM with well-separable components  along $\mathbf{d}$. The next theorem
indicates that the underlying $P_{\mathbf{X}}$ is the only GMM with well-separable components along $\mathbf{d}$ that is also a  minimax stationary point in the GAT-GMM problem. Therefore, constraining $G$ to GMMs with well-separable components along $\mathbf{d}$ results in only one minimax stationary solution which is the underlying GMM.
%We defer the result's proof to the Appendix.
\begin{thm}\label{Thm: GM-GAN Optimization}
Consider the setting of Theorem \ref{Thm: GM-GAN approximation}. Suppose that for every feasible $G_{\Lambda,\boldsymbol{\mu}}$ Condition \ref{Condition: SNR} holds for $(\boldsymbol{\mu}, \Sigma)$ along every $\widetilde{\mathbf{d}}$ in a $\frac{\rho}{\lambda}$-distance from $\mathbf{d}$, i.e., $\Vert \widetilde{\mathbf{d}} - \mathbf{d}\Vert_2\le \frac{\rho }{\lambda}$, with $\rho$ being the maximum $\mathbb{E}[\Vert G_{\Lambda,\boldsymbol{\mu}}(\mathbf{Z})\Vert_2]$ over feasible $G_{\Lambda,\boldsymbol{\mu}}$'s. Suppose  $\mathbb{E}[\Vert\mathbf{X}\Vert^2_2]+\mathbb{E}[\Vert G_{\Lambda,\boldsymbol{\mu}}(\mathbf{Z})\Vert^2_2]\le\lambda$ for feasible $\Lambda,\boldsymbol{\mu}$'s. Then, $(\boldsymbol{\mu}_{\mathbf{X}},\Sigma_\mathbf{X})$ is the only stationary point in the feasible set with $\nabla_{\operatorname{vec}(\boldsymbol{\mu},\Lambda)} \mathcal{L}(G_{\Lambda,\boldsymbol{\mu}}) = \mathbf{0}$.
\end{thm}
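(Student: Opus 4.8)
The plan is to reduce the vector stationarity condition $\nabla_{\operatorname{vec}(\boldsymbol{\mu},\Lambda)}\mathcal{L}(G_{\Lambda,\boldsymbol{\mu}})=\mathbf{0}$ to an explicit system in $(\boldsymbol{\mu},\Sigma)$ with $\Sigma=\Lambda\Lambda^T$, and then to show this system is solvable only at $(\boldsymbol{\mu}_{\mathbf{X}},\Sigma_{\mathbf{X}})$. First I would invoke Danskin's (envelope) theorem: the assumed moment bound $\mathbb{E}[\Vert\mathbf{X}\Vert_2^2]+\mathbb{E}[\Vert G_{\Lambda,\boldsymbol{\mu}}(\mathbf{Z})\Vert_2^2]\le\lambda$ makes the inner maximization concave (as in Proposition \ref{Proposition: c-transform reduction}), so the optimal discriminator $D^*=D_{A^*,(\mathbf{b}_i^*)_{i=1}^4}$ is unique and $\nabla_{(\boldsymbol{\mu},\Lambda)}\mathcal{L}=-\nabla_{(\boldsymbol{\mu},\Lambda)}\mathbb{E}[D^*(G_{\Lambda,\boldsymbol{\mu}}(\mathbf{Z}))]$ with $D^*$ held fixed. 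Next I would exploit the $\mathbb{Z}_2$ symmetry $\mathbf{x}\mapsto-\mathbf{x}$: since $P_{\mathbf{X}}$ and $P_{G}$ are symmetric and the regularizer in \eqref{Eq: GMGAN Binary Case} is invariant under the relabeling $(\mathbf{b}_1,\mathbf{b}_2,\mathbf{b}_3,\mathbf{b}_4)\mapsto(-\mathbf{b}_2,-\mathbf{b}_1,-\mathbf{b}_4,-\mathbf{b}_3)$, concavity forces the maximizer to be a fixed point of this map, i.e. $\mathbf{b}_2^*=-\mathbf{b}_1^*$ and $\mathbf{b}_4^*=-\mathbf{b}_3^*$. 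Writing $\mathbf{p}=\mathbf{b}_1^*$ and $\mathbf{q}=\mathbf{b}_3^*$, the discriminator collapses to the even function $D^*(\mathbf{x})=\tfrac{1}{2}\mathbf{x}^TA^*\mathbf{x}+\log\cosh(\mathbf{p}^T\mathbf{x})-\log\cosh(\mathbf{q}^T\mathbf{x})$, so $\nabla D^*$ is odd.

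Because $\nabla D^*$ is odd, the expectation over $Y(\Lambda\mathbf{Z}+\boldsymbol{\mu})$ collapses onto a single Gaussian $\mathbf{W}\sim\mathcal{N}(\boldsymbol{\mu},\Sigma)$: the $\boldsymbol{\mu}$-stationarity becomes $\mathbb{E}_{\mathbf{W}}[\nabla D^*(\mathbf{W})]=\mathbf{0}$, and applying Stein's lemma to the $\Lambda$-derivative yields $\mathbb{E}_{\mathbf{W}}[\nabla^2 D^*(\mathbf{W})]\Lambda=\mathbf{0}$, hence $\mathbb{E}_{\mathbf{W}}[\nabla^2 D^*(\mathbf{W})]=\mathbf{0}$ for full-rank $\Lambda$. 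I would then substitute $\nabla D^*(\mathbf{x})=A^*\mathbf{x}+\tanh(\mathbf{p}^T\mathbf{x})\mathbf{p}-\tanh(\mathbf{q}^T\mathbf{x})\mathbf{q}$ and $\nabla^2 D^*(\mathbf{x})=A^*+\operatorname{sech}^2(\mathbf{p}^T\mathbf{x})\mathbf{p}\mathbf{p}^T-\operatorname{sech}^2(\mathbf{q}^T\mathbf{x})\mathbf{q}\mathbf{q}^T$, together with the inner first-order condition for $A$, which reads $A^*=\tfrac{1}{2\lambda}\bigl(\mathbb{E}[\mathbf{X}\mathbf{X}^T]-\mathbb{E}[G_{\Lambda,\boldsymbol{\mu}}(\mathbf{Z})G_{\Lambda,\boldsymbol{\mu}}(\mathbf{Z})^T]\bigr)=\tfrac{1}{2\lambda}\bigl((\Sigma_{\mathbf{X}}+\boldsymbol{\mu}_{\mathbf{X}}\boldsymbol{\mu}_{\mathbf{X}}^T)-(\Sigma+\boldsymbol{\mu}\boldsymbol{\mu}^T)\bigr)$, and the implicit conditions for $\mathbf{p},\mathbf{q}$ coming from the softmax weights. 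From the latter, the $\tfrac{\lambda}{2}$-regularizer pins $\mathbf{p}$ and $\mathbf{q}$ within $\mathcal{O}(\rho/\lambda)$ of $\mathbf{d}$, which is exactly the neighborhood appearing in the hypothesis.

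The heuristic core is then transparent. The neighborhood form of Condition \ref{Condition: SNR} guarantees that $\mathbf{p}^T\mathbf{W}$ and $\mathbf{q}^T\mathbf{W}$ are well-separated scalar Gaussians, so $\operatorname{sech}^2(\mathbf{p}^T\mathbf{W})$ and $\operatorname{sech}^2(\mathbf{q}^T\mathbf{W})$ are exponentially small and the rank-one corrections in $\mathbb{E}_{\mathbf{W}}[\nabla^2 D^*]=\mathbf{0}$ live essentially in the $\mathbf{d}\mathbf{d}^T$ direction; restricting the Hessian equation to $\mathbf{d}^{\perp}$ forces the restriction of $A^*$ to $\mathbf{d}^{\perp}$ to vanish, i.e. $\Sigma+\boldsymbol{\mu}\boldsymbol{\mu}^T$ matches $\Sigma_{\mathbf{X}}+\boldsymbol{\mu}_{\mathbf{X}}\boldsymbol{\mu}_{\mathbf{X}}^T$ on $\mathbf{d}^{\perp}$. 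Simultaneously $\tanh(\mathbf{p}^T\mathbf{W}),\tanh(\mathbf{q}^T\mathbf{W})\to 1$, so the $\boldsymbol{\mu}$-equation reduces to $A^*\boldsymbol{\mu}+\mathbf{p}-\mathbf{q}=\mathbf{0}$, which together with the $\mathbf{p},\mathbf{q}$ conditions and second-moment matching pins down $\boldsymbol{\mu}=\boldsymbol{\mu}_{\mathbf{X}}$ and then $\Sigma=\Sigma_{\mathbf{X}}$. Since Theorem \ref{Thm: GM-GAN approximation} already certifies $(\boldsymbol{\mu}_{\mathbf{X}},\Sigma_{\mathbf{X}})$ as a feasible global minimizer, hence a stationary point, this solution is realized and is the only one.

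The hardest part will be upgrading the ``exponentially small / approximately hard-assignment'' heuristics into an exact uniqueness claim. The relations defining $\mathbf{p}$, $\mathbf{q}$, and $A^*$ are transcendental and mutually coupled through the Gaussian expectations of $\tanh$ and $\operatorname{sech}^2$, so I cannot simply replace the saturating terms by their limiting values; the rank-one corrections along $\mathbf{p}\mathbf{p}^T$ and $\mathbf{q}\mathbf{q}^T$ are only approximately aligned with $\mathbf{d}\mathbf{d}^T$. The technical crux is to use the \emph{uniform} separability furnished by Condition \ref{Condition: SNR} over the whole $\rho/\lambda$-ball around $\mathbf{d}$ to control these residual soft-assignment terms quantitatively and show the coupled system is monotone enough to admit no second feasible solution, i.e. that the soft-assignment errors cannot conspire to produce a spurious zero of the gradient away from $(\boldsymbol{\mu}_{\mathbf{X}},\Sigma_{\mathbf{X}})$.
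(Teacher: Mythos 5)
Your proof matches the paper's own proof step-for-step through the entire setup: Danskin's theorem with strong concavity of the inner problem (guaranteed by $\mathbb{E}[\Vert\mathbf{X}\Vert_2^2]+\mathbb{E}[\Vert G_{\Lambda,\boldsymbol{\mu}}(\mathbf{Z})\Vert_2^2]\le\lambda$), the symmetry reduction $\mathbf{b}_2=-\mathbf{b}_1$, $\mathbf{b}_4=-\mathbf{b}_3$, the collapse onto a single Gaussian component $\mathbf{W}\sim\mathcal{N}(\boldsymbol{\mu},\Lambda\Lambda^T)$, Stein's lemma to turn the $\Lambda$-stationarity into a Hessian-type identity, the full-rank-$\Lambda$ cancellation, and the observation that strong concavity plus the regularizer pins the optimal $\mathbf{b}_1,\mathbf{b}_3$ inside the $\rho/\lambda$-ball around $\mathbf{d}$. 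The gap is exactly where you flag it: your core uniqueness step is perturbative (replace $\tanh(\mathbf{p}^T\mathbf{W})$ by $1$ and $\operatorname{sech}^2$ expectations by $0$, project onto $\mathbf{d}^{\perp}$), and as you concede, this can at best show that stationary points lie \emph{near} $(\boldsymbol{\mu}_{\mathbf{X}},\Sigma_{\mathbf{X}})$ — the exponentially small soft-assignment residuals along $\mathbf{p}\mathbf{p}^T$ and $\mathbf{q}\mathbf{q}^T$ could in principle shift the zero of the coupled transcendental system, and nothing in your sketch supplies the nondegeneracy needed to rule that out. A "control the residuals quantitatively" program would require uniform lower bounds on the Jacobian of the coupled fixed-point map, which you have not produced.

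The paper avoids perturbation entirely with an exact elimination-and-monotonicity argument that is the missing idea. Eliminating the residual matrix $M=\boldsymbol{\mu}\boldsymbol{\mu}^T+\Lambda\Lambda^T-\boldsymbol{\mu}_{\mathbf{X}}\boldsymbol{\mu}_{\mathbf{X}}^T-\Sigma_{\mathbf{X}}$ between the $\boldsymbol{\mu}$- and $\Lambda$-stationarity identities (valid for full-rank $\Lambda$) yields the exact collinearity relation
\begin{equation*}
\bigl(\mathbb{E}[\tanh(\mathbf{b}_1^T\mathbf{W})]-(\mathbf{b}_1^T\boldsymbol{\mu})\,\mathbb{E}[\tanh'(\mathbf{b}_1^T\mathbf{W})]\bigr)\mathbf{b}_1
=\bigl(\mathbb{E}[\tanh(\mathbf{b}_3^T\mathbf{W})]-(\mathbf{b}_3^T\boldsymbol{\mu})\,\mathbb{E}[\tanh'(\mathbf{b}_3^T\mathbf{W})]\bigr)\mathbf{b}_3,
\end{equation*}
where Lemma \ref{app: Lemma1 Thm3} shows both scalar coefficients are nonzero whenever the projected mean is nonzero — which Condition \ref{Condition: SNR} over the $\rho/\lambda$-ball guarantees, and the same ball excluding the origin forces $\mathbf{b}_3=\alpha\mathbf{b}_1$ with $\alpha>0$ (this, rather than tail control, is the actual role of your "pinning" observation). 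The problem is then one-dimensional: setting $Y=\mathbf{b}_1^T\mathbf{W}$ and $h(\alpha)=\alpha\bigl(\mathbb{E}[\tanh(\alpha Y)]-\mathbb{E}[\alpha Y]\mathbb{E}[\tanh'(\alpha Y)]\bigr)-\bigl(\mathbb{E}[\tanh(Y)]-\mathbb{E}[Y]\mathbb{E}[\tanh'(Y)]\bigr)$, Stein's lemma plus the inequality $2\mathbb{E}[\tanh''(\cdot)]+\mathbb{E}[\tanh'''(\cdot)]\le 0$ (Lemma \ref{app: Lemma2: Thm1}) combined with the separability inequality in Condition \ref{Condition: SNR} give $h'(\alpha)>0$ on the feasible range, so $\alpha=1$ is the unique root. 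With $\mathbf{b}_1=\mathbf{b}_3$, summing the two first-order conditions cancels the nonlinear moment terms \emph{exactly} and leaves $-2\lambda(\mathbf{b}_1-\mathbf{d})=\mathbf{0}$, i.e., $\mathbf{b}_1=\mathbf{b}_3=\mathbf{d}$; the stationarity conditions then become the exact moment equations $\mathbb{E}[G_{\Lambda,\boldsymbol{\mu}}(\mathbf{Z})G_{\Lambda,\boldsymbol{\mu}}(\mathbf{Z})^T]=\mathbb{E}[\mathbf{X}\mathbf{X}^T]$ and $\mathbb{E}[G_{\Lambda,\boldsymbol{\mu}}(\mathbf{Z})\tanh(\mathbf{d}^TG_{\Lambda,\boldsymbol{\mu}}(\mathbf{Z}))]=\mathbb{E}[\mathbf{X}\tanh(\mathbf{d}^T\mathbf{X})]$, and the identifiability argument already established in Theorem \ref{Thm: GM-GAN approximation}'s proof delivers $(\boldsymbol{\mu},\Sigma)=(\boldsymbol{\mu}_{\mathbf{X}},\Sigma_{\mathbf{X}})$. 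To repair your proposal, replace the saturation heuristics with this elimination step and the scalar monotonicity lemmas; no asymptotic control of $\operatorname{sech}^2$ tails is needed anywhere.
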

\begin{proof}
We defer the proof to the Appendix.
\end{proof}
Finally, Theorem \ref{Thm: GM-GAN Generalization} bounds the generalization error of estimating the GAT-GMM's minimax objective and its derivatives from empirical samples. This result implies that a stationary point of the empirical objective will lead to an approximate stationary point of the underlying objective given $\mathcal{O}(d)$ samples. 
% defer theorem \ref{Thm: GM-GAN Generalization}'s proof to the Appendix.
\begin{thm}\label{Thm: GM-GAN Generalization}
Consider the setting of Theorem 4 with the additional constraints $\mathbf{b}_1=-\mathbf{b}_2,\, \mathbf{b}_3=-\mathbf{b}_4$, which do not change the optimal solution to \eqref{Eq: GMGAN Binary Case}. Let $\widehat{\mathcal{L}}(G_{\Lambda,\boldsymbol{\mu}})$ denote the empirical objective for $n$ i.i.d. $\mathbf{x}_i$'s sampled from $P_{\mathbf{X}}$. Let $ \frac{\partial{\mathcal{L}}(G_{\Lambda,\boldsymbol{\mu}})}{\partial \Lambda},\, \frac{\partial{\mathcal{L}}(G_{\Lambda,\boldsymbol{\mu}})}{\partial \boldsymbol{\mu}}$ denote the objective's derivative with respect to $\Lambda ,\, \boldsymbol{\mu}$. Then, for every $\delta>0$, with probability at least $1-\delta$ the following bounds hold uniformly for every $\boldsymbol{\mu},\Lambda$ such that $\Vert\boldsymbol{\mu}\boldsymbol{\mu}^T +\Lambda\Lambda^T \Vert_F\le \mathbb{E}[\Vert\mathbf{X}\Vert^2_2]$:
\begin{align}
    &\big\vert \widehat{\mathcal{L}}(G_{\Lambda,\boldsymbol{\mu}})- {\mathcal{L}}(G_{\Lambda,\boldsymbol{\mu}})\big\vert \le \mathcal{O}\bigl(\sqrt{\frac{d^2\log(1/\delta) }{\lambda^2 n}}\bigr), \\
    &\big\Vert \frac{\partial\widehat{\mathcal{L}}(G_{\Lambda,\boldsymbol{\mu}})}{\partial \boldsymbol{\mu}}(\boldsymbol{\mu})- \frac{\partial\mathcal{L}(G_{\Lambda,\boldsymbol{\mu}})}{\partial \boldsymbol{\mu}}(\boldsymbol{\mu})\big\Vert_{2} + \big\Vert \frac{\partial\widehat{\mathcal{L}}(G_{\Lambda,\boldsymbol{\mu}})}{\partial \Lambda}(\Lambda)- \frac{\partial{\mathcal{L}}(G_{\Lambda,\boldsymbol{\mu}})}{\partial \Lambda}(\Lambda)\big\Vert_{\sigma} \, \le \, \mathcal{O}\bigl(\sqrt{\frac{d\log(1/\delta) }{\lambda^2 n}}\bigr).\nonumber
\end{align}
\end{thm}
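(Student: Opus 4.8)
The plan is to reduce both bounds to the uniform fluctuation of the only data-dependent piece of the objective. Writing the inner objective of \eqref{Eq: GMGAN Binary Case} as $F(A,(\mathbf{b}_i);\Lambda,\boldsymbol{\mu})$, I first observe that the generator term $\mathbb{E}[D(G_{\Lambda,\boldsymbol{\mu}}(\mathbf{Z}))]$ and the regularizer are computed exactly from the known generator law and carry no sampling error; only $\mathbb{E}[D(\mathbf{X})]$ is replaced by its empirical average in $\widehat{\mathcal{L}}$. With the constraints $\mathbf{b}_1=-\mathbf{b}_2$, $\mathbf{b}_3=-\mathbf{b}_4$ the discriminator \eqref{Eq: Discriminator_Uniform} collapses to $D(\mathbf{x})=\tfrac12\mathbf{x}^{T}A\mathbf{x}+\log\cosh(\mathbf{b}_1^{T}\mathbf{x})-\log\cosh(\mathbf{b}_3^{T}\mathbf{x})$, so the data term splits into a quadratic part $\tfrac12\langle A,\widehat{\Sigma}\rangle$, with $\widehat{\Sigma}=\tfrac1n\sum_i\mathbf{x}_i\mathbf{x}_i^{T}$ estimating the second-moment matrix $\Sigma:=\mathbb{E}[\mathbf{X}\mathbf{X}^{T}]$, plus two $\log\cosh$ terms. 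Since $A$ decouples from the $\mathbf{b}_i$, the inner maximization over $A$ is an explicit strongly concave quadratic with optimizer $A^{\star}=\tfrac{1}{2\lambda}(\widehat{\Sigma}-\Sigma_G)$ empirically and $\tfrac{1}{2\lambda}(\Sigma-\Sigma_G)$ in the population, where $\Sigma_G=\Lambda\Lambda^{T}+\boldsymbol{\mu}\boldsymbol{\mu}^{T}$; this closed form is what converts matrix-concentration rates into the stated bounds.

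For the value bound I would use $\bigl|\widehat{\mathcal{L}}-\mathcal{L}\bigr|=\bigl|\max F-\max\widehat F\bigr|\le\sup_{A,(\mathbf{b}_i)}\bigl|\mathbb{E}[D(\mathbf{X})]-\tfrac1n\sum_i D(\mathbf{x}_i)\bigr|$ over the feasible (hence norm-bounded, via the $\lambda$-regularizer and Condition \ref{Condition: SNR}) discriminator set. The quadratic part contributes $\tfrac12\langle A,\widehat{\Sigma}-\Sigma\rangle$, which I bound by a product of $\Vert A\Vert_F$ with the Frobenius deviation $\Vert\widehat{\Sigma}-\Sigma\Vert_F$; a fourth-moment / matrix-Bernstein estimate gives $\Vert\widehat{\Sigma}-\Sigma\Vert_F=\mathcal{O}(\sqrt{d^2/n})$, summing the $d^2$ entrywise variances, and this is exactly the source of the $d^2$ inside the first bound. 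The two $\log\cosh$ terms are handled by a uniform empirical-process argument over $\mathbf{b}_1,\mathbf{b}_3$: since $t\mapsto\log\cosh t$ is $1$-Lipschitz, Ledoux--Talagrand contraction reduces them to the Rademacher complexity of bounded linear functionals, contributing a lower-order $\mathcal{O}(\sqrt{d/n})$ term. Collecting these with the $\mathcal{O}(1/\lambda)$ scale of the feasible parameters and a standard sub-exponential concentration step yields the $\mathcal{O}\bigl(\sqrt{d^2\log(1/\delta)/(\lambda^2 n)}\bigr)$ rate.

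For the derivative bound the key is Danskin's (envelope) theorem. Under the setting of Theorem \ref{Thm: GM-GAN Optimization}, the hypothesis $\mathbb{E}[\Vert\mathbf{X}\Vert_2^2]+\mathbb{E}[\Vert G_{\Lambda,\boldsymbol{\mu}}(\mathbf{Z})\Vert_2^2]\le\lambda$ forces the inner objective to be $\Theta(\lambda)$-strongly concave in $(A,(\mathbf{b}_i))$ — the competing curvature of the quadratic and $\log\cosh$ terms is bounded by these second moments — so the maximizer $D^{\star}(\Lambda,\boldsymbol{\mu})$ is unique and $\nabla_{\operatorname{vec}(\boldsymbol{\mu},\Lambda)}\mathcal{L}=\nabla_{\operatorname{vec}(\boldsymbol{\mu},\Lambda)}\bigl[-\mathbb{E}[D^{\star}(G_{\Lambda,\boldsymbol{\mu}}(\mathbf{Z}))]\bigr]$, which depends on the data only through $D^{\star}$. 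Differentiating the closed form gives $\tfrac{\partial\mathcal{L}}{\partial\Lambda}=-A^{\star}\Lambda$ (plus $\log\cosh$ contributions), so the empirical–population gap of the quadratic part is $-\tfrac{1}{2\lambda}(\widehat{\Sigma}-\Sigma)\Lambda$, whose spectral norm I bound by $\tfrac{1}{2\lambda}\Vert\widehat{\Sigma}-\Sigma\Vert_\sigma\Vert\Lambda\Vert_\sigma$. Crucially, measuring the $\Lambda$-error in the operator norm $\Vert\cdot\Vert_\sigma$ lets me invoke operator-norm covariance concentration $\Vert\widehat{\Sigma}-\Sigma\Vert_\sigma=\mathcal{O}(\sqrt{d/n})$ rather than the Frobenius rate, which is precisely why the derivative bound carries $d$ in place of $d^2$. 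The $\boldsymbol{\mu}$-derivative and the $\log\cosh$ contributions are controlled the same way, the latter by combining strong-concavity stability of the argmax, $\Vert\widehat{\mathbf{b}}_i^{\star}-\mathbf{b}_i^{\star}\Vert_2\le\tfrac{1}{\Theta(\lambda)}\sup_{\mathbf{b}}\Vert\tfrac1n\sum_i\tanh(\mathbf{b}^{T}\mathbf{x}_i)\mathbf{x}_i-\mathbb{E}[\tanh(\mathbf{b}^{T}\mathbf{X})\mathbf{X}]\Vert_2$, with a $d$-dimensional vector-concentration bound of order $\sqrt{d/n}$.

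The main obstacle I anticipate is twofold. First, the $\log\cosh$ (softmax) terms are nonlinear in $\mathbf{b}$, so both bounds need genuine uniform control over $\mathbf{b}_1,\mathbf{b}_3$ (covering-number / Rademacher arguments with the $\tanh,\log\cosh$ Lipschitz constants), and one must verify that the convex $\log\cosh$ data term does not destroy strong concavity of the inner problem — this is exactly where the second-moment-versus-$\lambda$ hypothesis of Theorem \ref{Thm: GM-GAN Optimization} is essential, since it guarantees the regularizer dominates the induced curvature. Second, obtaining the sharp $d$-versus-$d^2$ separation requires deliberately matching the norm used in each concentration step (Frobenius for the scalar value, operator/Euclidean for the matrix/vector derivatives) and carefully propagating the $1/\lambda$ regularization scale; this bookkeeping, which turns the entrywise and operator-norm covariance-estimation rates into the final $\sqrt{\log(1/\delta)}$-type high-probability bounds, is routine but must be executed with the correct moment assumptions to avoid losing additional factors of $d$.
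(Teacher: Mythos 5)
Your proposal is correct and follows essentially the same route as the paper's proof: decompose $\mathcal{L}=\mathcal{L}_1+\mathcal{L}_2$ into the quadratic part with the closed-form optimal $A$ (so that second-moment concentration in Frobenius norm yields the $d^2$ rate for the value and in spectral norm the $d$ rate for the derivatives), control the $\log\cosh$ part uniformly over norm-bounded $\mathbf{b}_1,\mathbf{b}_3$ via Lipschitzness, and use strong concavity of the inner maximization together with Danskin's theorem to transfer argmax stability into the derivative bounds. The only deviations — matrix-Bernstein/Rademacher-contraction in place of the paper's covering-net arguments, and gradient-based rather than value-based stability of the optimal $\mathbf{b}_i$'s — are interchangeable standard tools that give the same rates.
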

\begin{proof}
We defer the proof to the Appendix.
\end{proof}

\section{Numerical Experiments}
\begin{figure}[t]
    \centering
    \includegraphics[width=1.0\textwidth]{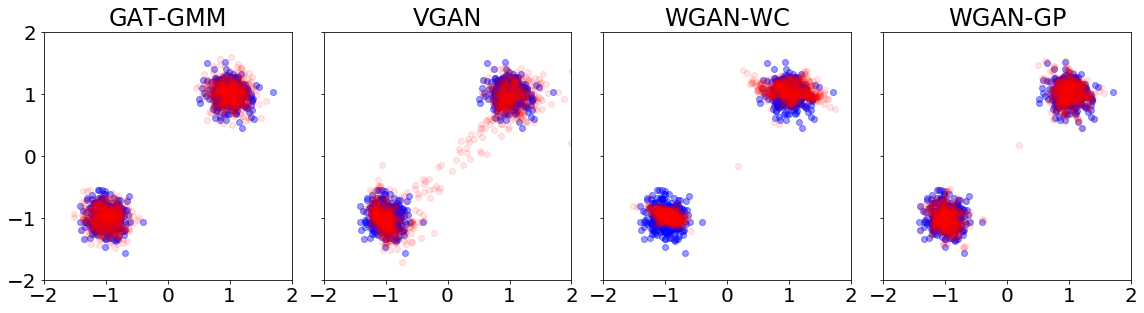}
    \caption{Samples produced from GAT-GMM, VGAN, WGAN-WC, WGAN-GP on isotropic task.}\label{fig:isotropic-samples}
\end{figure}

%\subsection{Setup}
We considered two datasets of symmetric, 2-component GMMs. The first was a well-separated, 20-dimensional GMM with isotropic covariance. The second was a 100-dimensional GMM with a randomly rotated and scaled covariance, which is more difficult to learn due to high dimensionality and lack of axis-alignment \cite{ashtiani2018nearly}. We numerically verified for both datasets that Condition \ref{Condition: SNR} holds along the top eigenvector of the empirical $\mathbb{E}[\mathbf{X}\mathbf{X}^T]$.
%For each dataset, we applied our proposed GAT-GMM formulation and compared with standard GANs and the EM algorithm. 
For each dataset, we trained GAT-GMM using alternating gradient descent ascent. We compared GAT-GMM with the EM algorithm and the following standard neural net-based GANs: vanilla GAN (VGAN) \cite{goodfellow2014generative}, spectrally-normalized GAN (SN-GAN) \cite{miyato2018spectral}, Pac-VGAN and Pac-SNGAN \cite{lin2018pacgan}, GM-GAN \cite{ben2018gaussian}, and Wasserstein GAN with weight clipping (WGAN-WC) \cite{arjovsky2017wasserstein} and gradient penalty (WGAN-GP) \cite{gulrajani2017improved}. Full details on the datasets and grid of hyper-parameter choices, as well as additional experiments for mixtures with more than 2 components, are provided in the Appendix.

To numerically evaluate our model, we computed the negative log-likelihood of the samples produced. In addition, for GAT-GMM and EM, we computed the following \emph{GMM Objective} which is the minimum of the 2-Wasserstein costs $W_c(\mathcal{N}(\boldsymbol{\mu},\Sigma),\mathcal{N}(\hat{\boldsymbol{\mu}},\hat{\Sigma}))$ and $W_c(\mathcal{N}(\boldsymbol{\mu},\Sigma),\mathcal{N}(-\hat{\boldsymbol{\mu}},\hat{\Sigma}))$:
\begin{equation}\label{Eq: Accuracy}
    %\|\Sigma^{1/2}-\hat{\Sigma}^{1/2}\|_F^2+\min\{\|\boldsymbol{\mu}-\hat{\boldsymbol{\mu}}\|^2.\|\boldsymbol{\mu}+\hat{\boldsymbol{\mu}}\|^2\}
   \operatorname{Tr}\bigl(\Sigma+\hat{\Sigma} - 2(\Sigma^{1/2}\hat{\Sigma}\Sigma^{1/2})^{1/2}\bigr)+\min\{\|\boldsymbol{\mu}-\hat{\boldsymbol{\mu}}\|^2,\|\boldsymbol{\mu}+\hat{\boldsymbol{\mu}}\|^2\}.
\end{equation}
For the baseline GANs where we did not have access to exact $\hat{\boldsymbol{\mu}}$ and $\hat{\Sigma}$, we used the trained generator to generate $10^5$ samples, divided the samples into those falling into the positive and negative orthants, computed the empirical means $\hat{\boldsymbol{\mu}}_1,\hat{\boldsymbol{\mu}}_2$ and covariances $\hat{\Sigma}_1,\hat{\Sigma}_2$ within the orthants, and estimated the GMM Objective using:
\begin{equation}\label{Eq: NN Accuracy}
\frac{1}{2}\sum_{i=1}^2\left[\operatorname{Tr}\bigl(\Sigma+\hat{\Sigma}_i - 2(\Sigma^{1/2}\hat{\Sigma}_i\Sigma^{1/2})^{1/2}\bigr)+\min\{\|\boldsymbol{\mu}-\hat{\boldsymbol{\mu}}_i\|^2,\|\boldsymbol{\mu}+\hat{\boldsymbol{\mu}}_i\|^2\}\right].
\end{equation}
%\frac{1}{2}\bigl[\|\Sigma^{1/2}-\hat{\Sigma}^{1/2}\|_F^2+\|\Sigma^{1/2}-\hat{\Sigma}_2^{1/2}\|_F^2\bigr]+\bigl[\min_{\pi\in S_2} \sum_{i=1}^2\frac{1}{2} \|\boldsymbol{\mu}_{\pi(i)}-\hat{\boldsymbol{\mu}}_i\|^2\bigr]
%where $S_2$ is the set of permutations of $(1,2)$. When the learned GMM is well-separated, this provides a reasonable comparison metric with (\ref{Eq: Accuracy}).

\begin{figure}[t]
    \centering
    \includegraphics[width=1.0\textwidth]{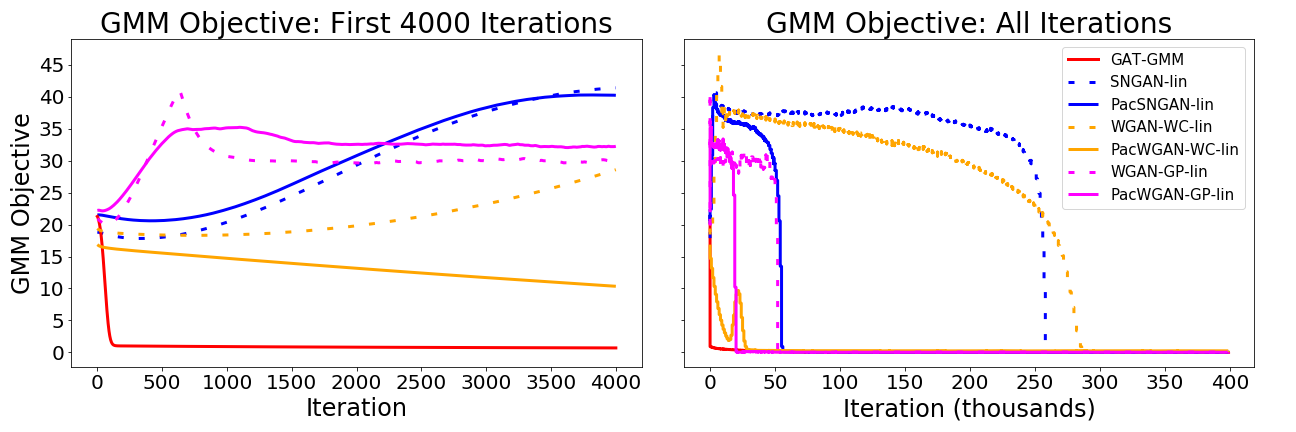}
    \caption{Left: GMM Objective over first 4000 iterations for GAT-GMM and various linear generator (\ref{Eq: Generator_binary}), neural net discriminator GANs. Right: GMM Objective over all training iterations.}\label{fig:loss}
\end{figure}

%\subsection{Results}

Figure \ref{fig:isotropic-samples} plots the samples generated by the trained GAT-GMM, vanilla GAN, WGAN-WC, and WGAN-GP for the isotropic case. The other baseline GANs either performed about as well as WGAN-GP or worse than WGAN-WC and vanilla GAN. Their generated samples, as well as the samples in the rotated covariance case, are displayed in the supplementary material. We observed that the GMM learned by GAT-GMM is visually very close to the underlying GMM. While WGAN-GP was able to qualitatively learn the GMM as well, both WGAN-WC and vanilla GAN produced visibly distinguishable samples. We note that WGAN-WC and vanilla GAN displayed mode collapse for many choices of hyper-parameters.

% Quantitatively, GAT-GMM achieved a GMM objective of \textbf{0.0061} and \textbf{0.862} for isotropic and rotated covariances, respectively, while EM achieved GMM objectives of \textbf{0.0062} and \textbf{0.860}. Thus, GAT-GMM resulted in numerical scores comparable to the EM's scores in those experiments. Meanwhile, the top GMM objective values obtained by baseline GANs were \textbf{0.023} and \textbf{6.081}, both achieved by WGAN-GP. Table \ref{tbl:isotropic_main} provides a full table of the numerical results including the evaluated GMM objectives as well as negative log-likelihoods (NLL). In all our experiments, the standard baseline GANs were unable to approach the EM's performance, indicating that obtaining EM-like performance via a minimax GAN framework is non-trivial in general. From a computational efficiency perspective, GAT-GMM ran quickly on CPUs and achieved good results very quickly with 1-1 gradient descent-ascent (Figure \ref{fig:loss}), while the neural net-based GANs required more discriminator steps per generator step to train properly. Also, we observed that for models with unstable training, PacGAN was able to stabilize optimization and alleviate mode collapse in our experiments. However, this improvement did not result in an EM-like numerical performance.

Quantitatively, GAT-GMM achieved a GMM objective of \textbf{0.0061} and \textbf{0.862} for isotropic and rotated covariances, respectively, while EM achieved GMM objectives of \textbf{0.0062} and \textbf{0.860}. Thus, GAT-GMM resulted in numerical scores comparable to the EM's scores in those experiments. Meanwhile, the top GMM objective values obtained by baseline GANs were \textbf{0.023} and \textbf{6.081}, both achieved by WGAN-GP. Table \ref{tbl:isotropic_main} provides a table of the numerical results for some GANs, including the evaluated GMM objectives as well as negative log-likelihoods (NLL). The full table is provided in the Appendix. 

In all our experiments, the standard baseline GANs were unable to approach EM's performance, indicating that obtaining EM-like performance via a minimax GAN framework is non-trivial in general. From a computational perspective, GAT-GMM ran quickly on CPUs and achieved good results very quickly with 1-1 gradient descent-ascent (Figure \ref{fig:loss}), while neural net-based GANs required more discriminator steps to train properly. To further isolate the computational benefits of our discriminator choice, we also trained the linear generator (\ref{Eq: Generator_binary}) with neural net discriminators. As shown in Figure \ref{fig:loss}, GAT-GMM training is far faster and more stable than neural net GAN training for the same generator class. %While PacGAN stabilized training for the neural net models, it did not significantly improve their numerical performance.

% did not always result in higher numerical accuracy, even when applied to the more successful GANs, such as WGAN-GP.
% The full numerical results are shown in Table (\ref{tbl:isotropic}) in the Appendix. We find that GAT-GMM outperforms the neural network GANs in our GMM objective score for both tasks, and even performs comparably with the EM algorithm. Although we tried several different choices of hyper-parameters for the neural network GAN training, none of these GANs could learn the parameters of these Gaussian Mixture to a comparable accuracy as EM and GAT-GMM. Indeed, the numerical difference in accuracy between GAT-GMM and neural net GANs becomes more pronounced in the rotated covariance task, which indicates that obtaining EM-like performance via minimax optimization is non-trivial in general. From a computational efficiency perspective, GAT-GMM ran quickly on CPU machines and achieved good results even with 1-1 gradient descent-ascent, while most of the neural network GANs required more discriminator steps per generator step to train properly. In addition, while PacGAN was able to stabilize the training and alleviate mode collapse for most of the neural network models, it did not generally result in better numerical accuracy, even when we applied it to the more successful neural net GANs, such as SNGAN.

 \begin{table}[b]
 \centering
 {\renewcommand{\arraystretch}{1.25}%
 \begin{tabular}{|c|cc|cc|}
 \hline
 & \multicolumn{2}{c|}{Isotropic} & \multicolumn{2}{c|}{Rotated}\\ \hline
   Method        & GMM Objective & NLL & GMM Objective & NLL    \\ \hhline{=|==|==}
 GAT-GMM               & \textbf{0.0061}         & -5.873 & 0.862    & \textbf{54.351} \\ 
 EM                  & 0.0062         & -5.966    & \textbf{0.860}    &  54.968    \\ 
 VGAN & 0.338         & 1.048  & 35.445& 180.46 \\ 
 SN-GAN               & 0.027         & -6.257  & 6.111    & 55.183  \\
 WGAN-WC             & 0.225          & \textbf{-7.525} & 16.906    & 64.526  \\ 
 WGAN-GP             & 0.023         & -7.094 & 6.081     & 55.656   \\
 \hline
 \end{tabular}}
 \caption{Numerical results on the isotropic and rotated datasets.}  \label{tbl:isotropic_main}
 \end{table}

\newpage
\bibliographystyle{unsrtnat}

\newpage
\begin{appendices}
\section{Experiment Details and Additional Figures}\label{section:additional-figs}
% Experimental detail
\subsection{Additional Details on Numerical Experiments}
As discussed in the main text, we considered two symmetric, 2-component GMM learning tasks with $n=640$ samples each. The first was an isotropic symmetric GMM with $d=20$, mean $\boldsymbol{\mu}_1=(1,...,1)$, $\boldsymbol{\mu}_2=-\boldsymbol{\mu}_1$, and shared covariance $\Sigma=0.03\times\text{I}_d$. The second was a high-dimensional symmetric GMM with a randomly rotated covariance matrix: we took $d=100$, $\boldsymbol{\mu}_1=(1,...,1)$, $\boldsymbol{\mu}_2=-\boldsymbol{\mu}_1$, and $\Sigma=Q\Lambda Q^T$ where $\Lambda$ is diagonal with entries distributed uniform on $(\frac{1}{2d}, \frac{1}{2})$ and $Q$ is a random orthogonal matrix.

All experiments were implemented using PyTorch \cite{paszke2019pytorch}. For GAT-GMM, we trained with $1$ or $5$ discriminator updates per generator update, all combinations of generator and discriminator learning rates in $\{1e-1,1e-2,1e-3,1e-4\}$, and regularization $\lambda \in \{0.1,1.0,2.0\}$. We initialized the generator mean $\mu$ to be uniform in $(-0.5,0.5)^d$ and covariance parameter $\Lambda =\sigma (I +0.01\times\mathcal{N}(0,1))$, where $\sigma=O(2^{-1/d})$ is chosen small enough so that the components are well-separated. For the discriminator, we initialized $A=I_d+0.01\mathcal{N}(0,1)$ and all of the $\mathbf{b_i}$ to be $0.01\mathcal{N}(0,1)$. We found generator/discriminator learning rate pairs of $(1e-3,1e-2)$ and regularization $\lambda=0.1$ to work best for $d=5$, and $(1e-2,1e-1)$ with regularization $\lambda=2.0$ to work best for $d=1$, although the training was robust to small changes in these hyperparameters.

For the neural network GANs, we used 4-layer nets of width 256 for both generator and discriminator with leaky ReLU activations of slope $0.2$, along with $1$ or $5$ discriminator updates per generator update. Parameters were initialized with the He initialization \cite{he2015delving}. The latent dimension was chosen to be the same as the data dimension. For VGAN, SN-GAN, and GM-GAN, we used the Adam \cite{kingma2014adam} optimizer with learning rates in $\{1e-5,1e-4,1e-3\}$, and other hyperparameters as default PyTorch settings ($\beta_1=0.9,\beta_2=0.999$). For VGAN and SN-GAN, we used batch normalization \cite{ioffe2015batch} with default PyTorch settings ($\epsilon=1e-05$, momentum $0.1$), and for SN-GAN we used spectral normalization with 1 power iteration, the default setting. For WGAN-WC and WGAN-GP, we used learning rates in $\{5e-5,4e-5\}$. WGAN-WC was optimized using RMSProp \cite{tieleman2012lecture} and weight clipping parameter $0.2$, and WGAN-GP was optimized using Adam with default hyperparameters and regularization strength $\{0.1,1.0\}$. The PacGAN variants were all trained with the same original hyperparameters and a packing level of 3. The results reported in this paper are for the best-performing GANs in the GMM objective.
\subsection{Additional Figures and Tables}
The full version of Table \ref{tbl:isotropic_main} is provided here in Table \ref{tbl:isotropic_appendix}. We provide images of the samples for all the models on our isotropic and rotated covariance models in Figures \ref{fig:isotropic-samples}, \ref{fig:rot-samples}, and \ref{fig:collapsed-samples}.

\begin{table}[h]
 \centering
 {\renewcommand{\arraystretch}{1.0}%
 \begin{tabular}{|c|cc|cc|}
 \hline
 & \multicolumn{2}{c|}{Isotropic} & \multicolumn{2}{c|}{Rotated}\\ \hline
   Method        & GMM Objective & NLL & GMM Objective & NLL    \\ \hhline{=|==|==}
 GAT-GMM               & \textbf{0.0061}         & -5.873 & 0.862    & \textbf{54.351} \\ 
 EM                  & 0.0062         & -5.966    & \textbf{0.860}    &  54.968    \\ 
 VGAN & 0.338         & 1.048  & 35.445& 180.46 \\ 
 SN-GAN               & 0.027         & -6.257  & 6.111    & 55.183  \\
 Pac-VGAN              & 0.324         & -5.731  &36.101 &65.136 \\ 
   Pac-SNGAN              & 0.030         & -6.176  & 6.229    & 58.924  \\
 GM-GAN             & 0.180         & \textbf{-9.730}  &33.651 &   75.003   \\ 
 WGAN-WC             & 0.225          & -7.525 & 16.906    & 64.526  \\ 
 WGAN-GP             & 0.023         & -7.094 & 6.081     & 55.656   \\
 \hline
 \end{tabular}}
 \caption{Numerical results on the isotropic and rotated datasets.}  \label{tbl:isotropic_appendix}
 \end{table}
 
\begin{figure}[h!]
    \centering
    \includegraphics[width=.8\textwidth]{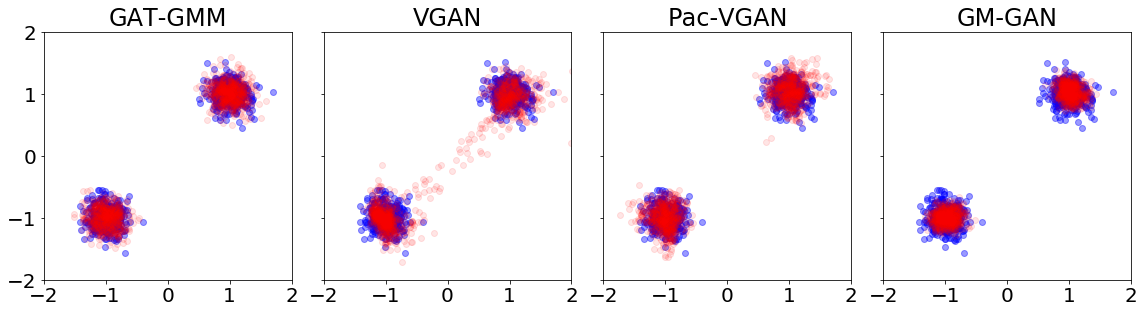}
    \includegraphics[width=.8\textwidth]{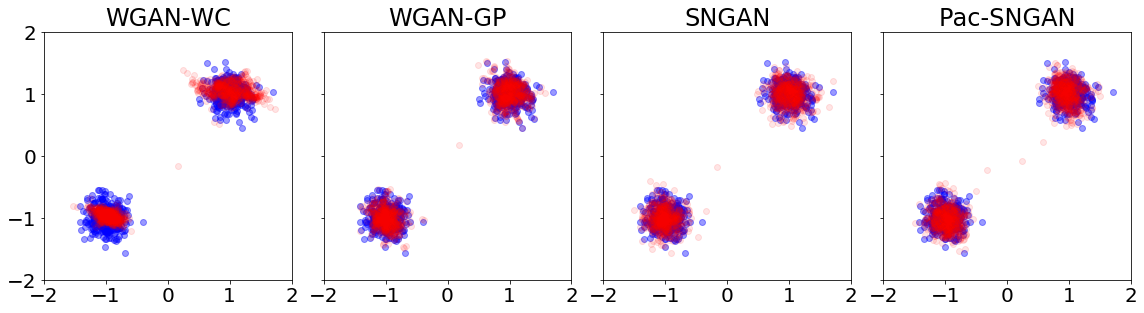}
    \caption{Samples from isotropic task.}\label{fig:isotropic-samples}
\end{figure}
\begin{figure}[h!]
    \centering
    \includegraphics[width=.8\textwidth]{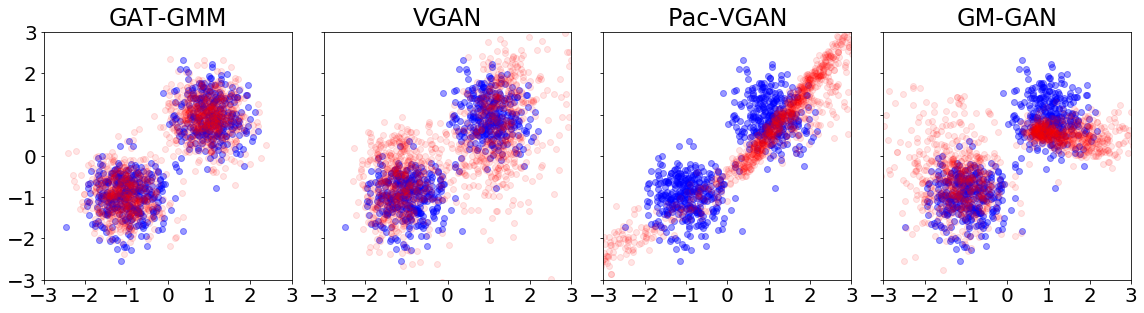}
    \includegraphics[width=.8\textwidth]{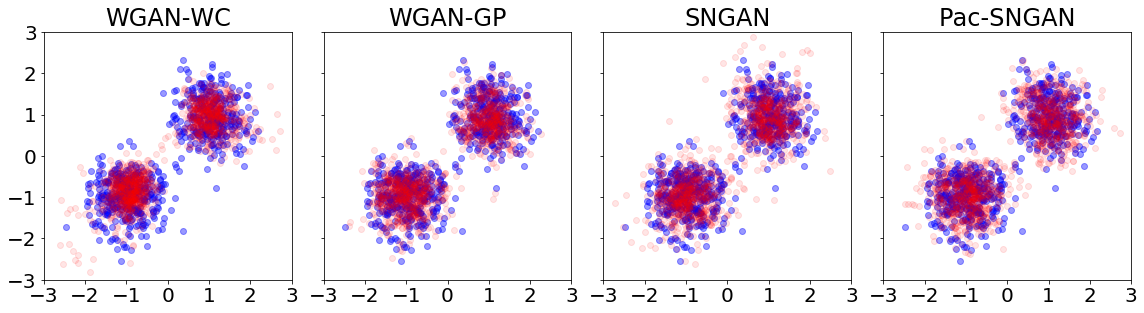}
    \caption{Samples of first two components from rotated task.}\label{fig:rot-samples}
\end{figure}
\begin{figure}[h!]
    \centering
    \includegraphics[width=.8\textwidth]{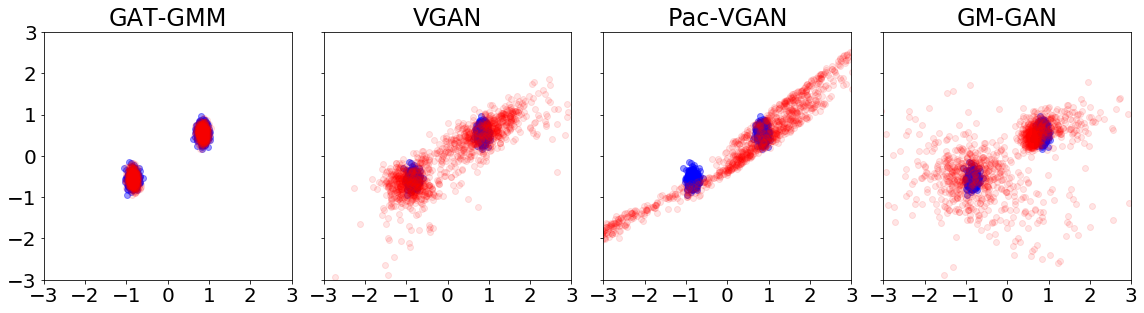}
    \includegraphics[width=.8\textwidth]{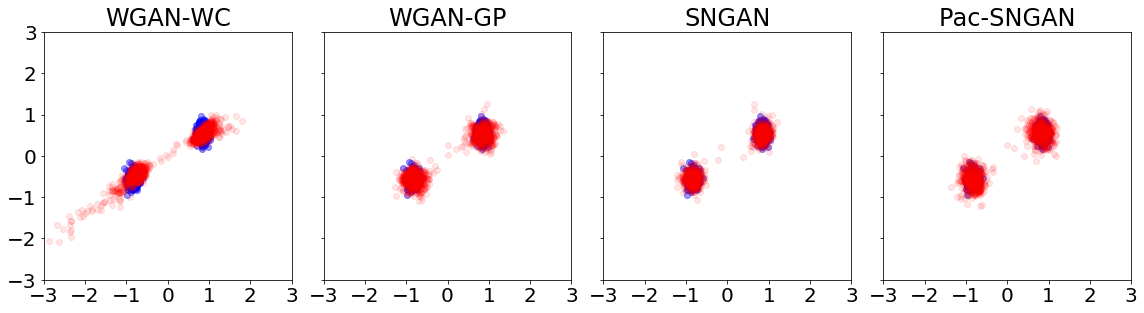}
    \caption{Samples along axes with small variance for rotated task.}\label{fig:collapsed-samples}
\end{figure}

% \begin{figure}
%     \centering
%     \includegraphics[width=.4\textwidth]{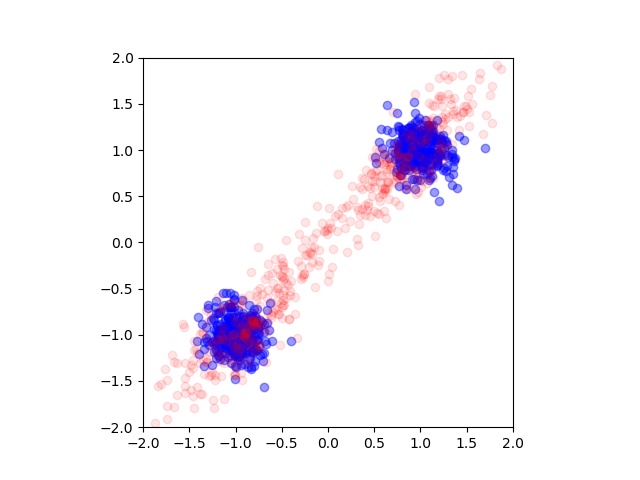}
%     \caption{W2GAN trained with no $b_2$ term, as in formulation \eqref{Eq: Discriminator_Uniform}}\label{fig:no-b2-failure}
% \end{figure}

% \textcolor{red}{Add figures on: PacGAN+GMMGAN, Linear gan neural network discriminator, }
% The samples are plotted in Figure \ref{fig:rotated-samples}, and 
% \begin{figure}
%     \centering
%     \includegraphics[width=.3\textwidth]{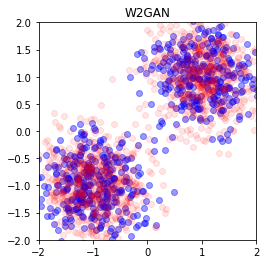}
%     \includegraphics[width=.3\textwidth]{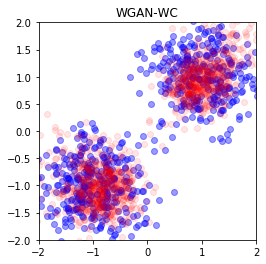}
%     \includegraphics[width=.3\textwidth]{wgan-gp-rot100d.png}
%     \caption{Samples produced from W2GAN and WGAN with Weight Clipping and Gradient Penalty on rotated covariance task.}\label{fig:rotated-samples}
% \end{figure}

% We note that in our experiments for GAT-GMM, the $b_2$ term was necessary, and training failed for every set of hyper-parameters we tried. An example of this failure is Figure \ref{fig:no-b2-failure} in \ref{section:additional-figs}

\subsection{Multi-component Mixtures}
While the theory in this paper is primarily for symmetric mixtures with two components, we provide preliminary experiments for a mixture with 4 components. We train GAT-GMM with random initialization on a four-component mixture in 20 dimensions, with the $\mathbf{d}_i$ parameters selected as the top eigenvectors of the empirical $\mathbb{E}[XX^T]$. While the samples in Figure~\ref{fig:four-components} show some bias, the GMM objective decreases nicely over training, and we suspect that with more careful hyper-parameter tuning, we would be able to achieve better performance.
\begin{figure}[h!]
    \centering
    \includegraphics[scale=.45]{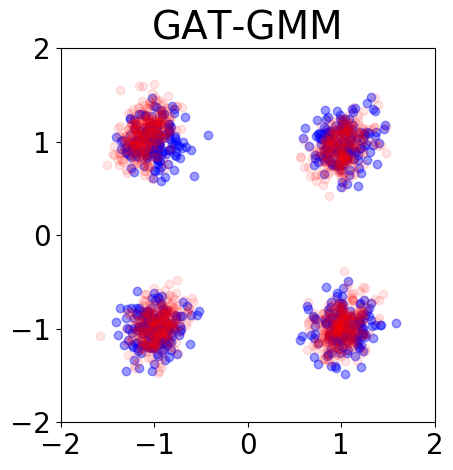}
    \includegraphics[scale=.45]{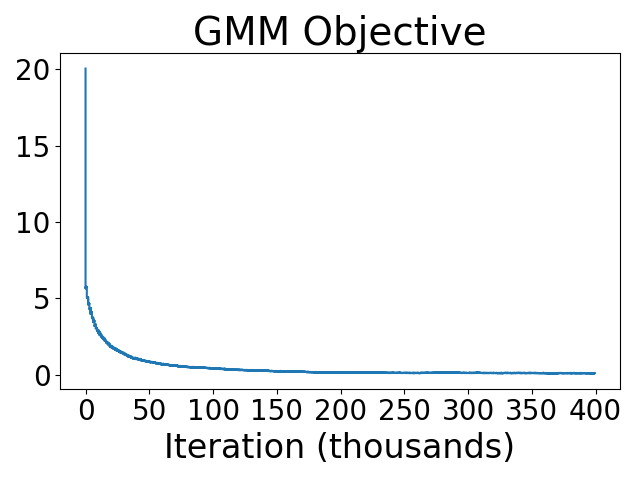}
    \caption{Samples and loss over time for GAT-GMM trained on a 4-component mixture.}\label{fig:four-components}
\end{figure}

\section{Proofs}
\subsection{Proof of Theorem 1}
%\begin{thm*}
%Consider the 2-Wasserstein GAN problem \eqref{app: Eq: general WGAN} with quadratic cost $c(\mathbf{x},\mathbf{x}')=\frac{1}{2}\Vert \mathbf{x}-\mathbf{x}'\Vert_2^2$. Assume $\psi$ defined in the main text is the gradient of a convex function $\phi$. Then, for $\widetilde{D}(\mathbf{x})=\frac{1}{2}\Vert \mathbf{x}\Vert_2^2 - \phi(\mathbf{x})$ we have
%\begin{equation}
%    0\le W_c(P_{\mathbf{X}},P_{G(\mathbf{Z})}) - \bigl\{\mathbb{E}[\widetilde{D}(\mathbf{X})] - \mathbb{E}[\widetilde{D}^c(G(\mathbf{Z}))]\bigr\} \le (\frac{3}{2}M_1+\sqrt{M_1 M_2})\sqrt{P_e} + \sqrt{M_1 M_2} \sqrt[4]{P_e},
%\end{equation}
%where $P_e=\Pr(Y^{\text{\rm opt}}(\mathbf{X})\neq Y)$ is the misclassification rate of the optimal classifier $Y^{\text{\rm opt}}(\mathbf{X})$ for predicting label $Y$ from underlying GMM $\mathbf{X}$. Also, we define $M_1,M_2$ in the following equations with $\Vert\cdot\Vert_\sigma$ denoting the maximum singular value, i.e., the spectral norm,
%\begin{align*}
%    M_1 \, & =\, 8\max_{i} \Vert\Gamma_i\Vert^2_{\sigma} \sqrt{\mathbb{E}[\Vert\mathbf{X}\Vert_2^4]} \, + \, 8\sqrt{P_e}\max_i \Vert \Gamma_i\boldsymbol{\mu}_i - \widetilde{\boldsymbol{\mu}}_i \Vert^2_2 \\
%    M_2\, &= \, 2\max_i \Vert \Gamma_i-I \Vert^2_{\sigma} \mathbb{E}[  \Vert\mathbf{X}\Vert_2^2] \, + \, 2\max_i\Vert \Gamma_i\boldsymbol{\mu}_i -\widetilde{\boldsymbol{\mu}}_i\Vert^2_2.
%\end{align*}
%\end{thm*}
%\begin{proof}

We start by proving the following two lemmas.
\begin{lemma}\label{app: Lemma1, Thm 1's proof}
Define $M_1:=8\max_{i} \Vert\Gamma_i\Vert^2_{\sigma} \sqrt{\mathbb{E}[\Vert\mathbf{X}\Vert_2^4]} \, + \, 8\sqrt{P_e}\max_i \Vert \Gamma_i\boldsymbol{\mu}_i - \widetilde{\boldsymbol{\mu}}_i \Vert^2_2$. Then, the following inequalities hold:
\begin{align*}
    W_c(P_{\psi(\mathbf{X})},P_{G(\mathbf{Z})})\, \le\, \mathbb{E}\bigl[ \big\Vert\Psi(\mathbf{X},Y) - \psi(\mathbf{X})\big\Vert^2_2\bigr]
    \, \le\, M_1\sqrt{P_e}. \numberthis
\end{align*}
\end{lemma}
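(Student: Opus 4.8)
The plan is to prove the two inequalities separately. For the left-hand inequality, I would exploit that $\psi(\mathbf{x})=\mathbb{E}[\Psi(\mathbf{X},Y)\,\vert\,\mathbf{X}=\mathbf{x}]$ by definition, so the pair $(\psi(\mathbf{X}),\Psi(\mathbf{X},Y))$ is a joint distribution whose first marginal is $P_{\psi(\mathbf{X})}$ and whose second marginal is $P_{\Psi(\mathbf{X},Y)}$. Since the main text notes $P_{\Psi(\mathbf{X},Y)}=P_{\widetilde{\mathbf{X}}}$ and $G(\mathbf{Z})$ realizes $P_{\widetilde{\mathbf{X}}}$, this pair is a feasible coupling in the definition of $W_c(P_{\psi(\mathbf{X})},P_{G(\mathbf{Z})})$. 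Evaluating the quadratic cost $c=\frac12\Vert\cdot\Vert_2^2$ on this particular coupling yields $W_c(P_{\psi(\mathbf{X})},P_{G(\mathbf{Z})})\le\frac12\mathbb{E}[\Vert\psi(\mathbf{X})-\Psi(\mathbf{X},Y)\Vert_2^2]\le\mathbb{E}[\Vert\Psi(\mathbf{X},Y)-\psi(\mathbf{X})\Vert_2^2]$, which is the first inequality.

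For the right-hand inequality, I would write $\mathbf{v}_i(\mathbf{x}):=\Gamma_i(\mathbf{x}-\boldsymbol{\mu}_i)+\widetilde{\boldsymbol{\mu}}_i$ and $p_i(\mathbf{x}):=\Pr(Y=i\,\vert\,\mathbf{X}=\mathbf{x})$, so that $\Psi(\mathbf{X},Y)=\mathbf{v}_Y(\mathbf{X})$ and $\psi(\mathbf{x})=\sum_i p_i(\mathbf{x})\mathbf{v}_i(\mathbf{x})$. Because $\psi(\mathbf{x})$ is exactly the conditional mean, the conditional second moment is a conditional variance: $\mathbb{E}[\Vert\Psi-\psi\Vert_2^2\,\vert\,\mathbf{X}=\mathbf{x}]=\sum_i p_i(\mathbf{x})\Vert\mathbf{v}_i(\mathbf{x})-\psi(\mathbf{x})\Vert_2^2$. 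The central step is to bound this conditional variance by the Bayes misclassification content at $\mathbf{x}$. Using that the conditional mean minimizes $\mathbf{u}\mapsto\sum_i p_i\Vert\mathbf{v}_i-\mathbf{u}\Vert_2^2$, I would replace $\psi(\mathbf{x})$ by $\mathbf{v}_m(\mathbf{x})$, where $m=Y^{\mathrm{opt}}(\mathbf{x})$ is the Bayes prediction (the index attaining $\max_i p_i(\mathbf{x})$), giving $\sum_i p_i\Vert\mathbf{v}_i-\psi\Vert_2^2\le\sum_{i\neq m}p_i\Vert\mathbf{v}_i-\mathbf{v}_m\Vert_2^2\le 4\bigl(\max_j\Vert\mathbf{v}_j(\mathbf{x})\Vert_2^2\bigr)\bigl(1-\max_i p_i(\mathbf{x})\bigr)$.

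Next I would control the two factors separately. The norm factor is bounded uniformly in the component index by $\Vert\mathbf{v}_i(\mathbf{x})\Vert_2^2\le 2\max_i\Vert\Gamma_i\Vert_\sigma^2\Vert\mathbf{x}\Vert_2^2+2\max_i\Vert\Gamma_i\boldsymbol{\mu}_i-\widetilde{\boldsymbol{\mu}}_i\Vert_2^2$, while the probability factor integrates to the Bayes error, $\mathbb{E}[1-\max_i p_i(\mathbf{X})]=P_e$. Taking the expectation over $\mathbf{X}$ and keeping the $\Vert\mathbf{x}\Vert_2^2$ term and the constant term apart is essential: the $\Vert\mathbf{x}\Vert_2^2$ term is handled by Cauchy--Schwarz together with $(1-\max_i p_i)^2\le 1-\max_i p_i$, contributing $8\max_i\Vert\Gamma_i\Vert_\sigma^2\sqrt{\mathbb{E}[\Vert\mathbf{X}\Vert_2^4]}\sqrt{P_e}$, whereas the constant term contributes $8\max_i\Vert\Gamma_i\boldsymbol{\mu}_i-\widetilde{\boldsymbol{\mu}}_i\Vert_2^2\,P_e$. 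Summing these and factoring out $\sqrt{P_e}$ reproduces exactly $M_1\sqrt{P_e}$.

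I expect the conditional-variance bound to be the main obstacle. The point is that one must compare against the Bayes-optimal center $\mathbf{v}_m$ rather than, say, crudely applying Minkowski to $V(\mathbf{x})=\max_j\Vert\mathbf{v}_j\Vert_2^2$ everywhere, so that the mean-gap term $\max_i\Vert\Gamma_i\boldsymbol{\mu}_i-\widetilde{\boldsymbol{\mu}}_i\Vert_2^2$ carries a full factor $P_e$ instead of only $\sqrt{P_e}$. This is precisely what makes the estimate close with the stated $M_1$, whose second summand already contains a $\sqrt{P_e}$.
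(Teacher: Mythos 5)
Your proposal is correct and follows essentially the same route as the paper's proof: the coupling $(\psi(\mathbf{X}),\Psi(\mathbf{X},Y))$ for the left inequality, and for the right inequality the variational property of the conditional mean to pass to the Bayes-optimal center $\Psi(\mathbf{X},Y^{\mathrm{opt}}(\mathbf{X}))$, followed by the same norm split with factors $4$ and $2$, Cauchy--Schwarz combined with $P_e^2(\mathbf{x})\le P_e(\mathbf{x})$ on the $\Vert\mathbf{x}\Vert_2^2$ term, and a full factor $P_e$ on the mean-gap term. Your closing remark about why the mean-gap term must carry $P_e$ rather than $\sqrt{P_e}$ to match the $\sqrt{P_e}$ already present in $M_1$'s second summand is exactly how the paper's computation closes.
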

\begin{proof}
Note that $G(\mathbf{Z})$ shares the same distribution with $\Psi(\mathbf{X},Y)$. Therefore,
\begin{align*}
   W_c(P_{\psi(\mathbf{X})},P_{G(\mathbf{Z})}) &\le \mathbb{E}\bigl[ \big\Vert\Psi(\mathbf{X},Y) - \psi(\mathbf{X})\big\Vert^2_2\bigr] \\
   &=  \mathbb{E}\bigl[ \big\Vert\Psi(\mathbf{X},Y) - \mathbb{E}[\Psi(\mathbf{X},Y)\,\big\vert\, \mathbf{X}]\big\Vert^2_2\bigr] \\
   &=  \mathbb{E}\bigl[ \operatorname{Var}\bigl(\Psi(\mathbf{X},Y)\,\big\vert\, \mathbf{X}\bigr)\bigr] \\
   &\le \mathbb{E}\bigl[ \big\Vert\Psi(\mathbf{X},Y) -\Psi(\mathbf{X},Y^{\text{\rm opt}}(\mathbf{X})) \big\Vert^2_2\bigr] \\
   &\le 8\sqrt{P_e}\max_{i} \Vert\Gamma_i\Vert^2_{\sigma} \sqrt{\mathbb{E}[\Vert\mathbf{X}\Vert_2^4]} \, + \, 8P_e\max_i \Vert \Gamma_i\boldsymbol{\mu}_i - \widetilde{\boldsymbol{\mu}}_i \Vert^2_2  \\
   & = M_1\sqrt{P_e} \numberthis
\end{align*}
Here the last inequality holds due to the tower property of expectations, implying that
\begin{align*}
&\mathbb{E}\bigl[ \big\Vert\Psi(\mathbf{X},Y) -\Psi(\mathbf{X},Y^{\text{\rm opt}}(\mathbf{X})) \big\Vert^2_2\bigr] \\
=\: &\mathbb{E}\bigl[ \mathbb{E}\bigl[\big\Vert\Psi(\mathbf{X},Y) -\Psi(\mathbf{X},Y^{\text{\rm opt}}(\mathbf{X})) \big\Vert^2_2 \, \big\vert \, \mathbf{X}\bigr]\bigr]    \\
\le\: & \mathbb{E}\bigl[ P_e(\mathbf{X}) \max_{i,j}\,\bigl\{ \big\Vert (\Gamma_i-\Gamma_j)\mathbf{X} + (\Gamma_i\boldsymbol{\mu}_i - \widetilde{\boldsymbol{\mu}}_i - \Gamma_j\boldsymbol{\mu}_j + \widetilde{\boldsymbol{\mu}}_j) \big\Vert^2_2 \bigr\}\bigr] \\
\le\: & 4\mathbb{E}\bigl[ P_e(\mathbf{X}) \max_{i}\,\bigl\{ \big\Vert \Gamma_i\mathbf{X} + (\Gamma_i\boldsymbol{\mu}_i - \widetilde{\boldsymbol{\mu}}_i) \big\Vert^2_2 \bigr\}\bigr] \\
\stackrel{(a)}{\le}\: & 8 \mathbb{E}\bigl[ P_e(\mathbf{X}) \max_{i}\,\bigl\{ \Vert \Gamma_i\mathbf{X}\Vert_2^2 + \Vert \Gamma_i\boldsymbol{\mu}_i - \widetilde{\boldsymbol{\mu}}_i \Vert^2_2 \bigr\} \bigr] \\
\stackrel{(b)}{\le}\: & 8 \mathbb{E}\bigl[ P_e(\mathbf{X})\max_{i}\,\bigl\{ \Vert \Gamma_i\Vert^2_{\sigma}\Vert\mathbf{X}\Vert_2^2 + \Vert \Gamma_i\boldsymbol{\mu}_i - \widetilde{\boldsymbol{\mu}}_i \Vert^2_2 \bigr\} \bigr] \\
\le\: & 8\max_{i}\bigl\{\Vert \Gamma_i\Vert^2_{\sigma} \mathbb{E}[ P_e(\mathbf{X}) \Vert\mathbf{X}\Vert_2^2]\bigr\} \, + \, 8\max_i \bigl\{ \Vert \Gamma_i\boldsymbol{\mu}_i - \widetilde{\boldsymbol{\mu}}_i \Vert^2_2  \mathbb{E}[ P_e(\mathbf{X})] \bigr\}  \\
=\: & 8\max_{i}\{\Vert \Gamma_i\Vert^2_{\sigma}\} \mathbb{E}[ P_e(\mathbf{X}) \Vert\mathbf{X}\Vert_2^2] \, + \, 8\max_i \bigl\{ \Vert \Gamma_i\boldsymbol{\mu}_i - \widetilde{\boldsymbol{\mu}}_i \Vert^2_2  \bigr\} \mathbb{E}[ P_e(\mathbf{X})]  \\
\stackrel{(c)}{\le}\: & 8\max_{i}\{\Vert \Gamma_i\Vert^2_{\sigma}\} \sqrt{\mathbb{E}[ P^2_e(\mathbf{X})] \mathbb{E}[\Vert\mathbf{X}\Vert_2^4]} \, + \, 8\max_i \bigl\{ \Vert \Gamma_i\boldsymbol{\mu}_i - \widetilde{\boldsymbol{\mu}}_i \Vert^2_2  \bigr\} \mathbb{E}[ P_e(\mathbf{X})]  \\
\stackrel{(d)}{\le}\: &8\max_{i}\{\Vert \Gamma_i\Vert^2_{\sigma}\} \sqrt{\mathbb{E}[ P_e(\mathbf{X})]} \sqrt{\mathbb{E}[\Vert\mathbf{X}\Vert_2^4]} \, + \, 8\max_i \bigl\{ \Vert \Gamma_i\boldsymbol{\mu}_i - \widetilde{\boldsymbol{\mu}}_i \Vert^2_2  \bigr\} \mathbb{E}[ P_e(\mathbf{X})]  \\
\stackrel{(e)}{=}\: & 8\sqrt{P_e}\max_{i}\{\Vert \Gamma_i\Vert^2_{\sigma}\} \sqrt{\mathbb{E}[\Vert\mathbf{X}\Vert_2^4]} \, + \, 8P_e\max_i \bigl\{ \Vert \Gamma_i\boldsymbol{\mu}_i - \widetilde{\boldsymbol{\mu}}_i \Vert^2_2  \bigr\} . \numberthis
\end{align*}
In the above equations, $P_e(\mathbf{x}):= \Pr(Y^{\operatorname{opt}}(\mathbf{x})\neq Y|\mathbf{X}=\mathbf{x})$ denotes the error probability of optimal Bayes classifier at $\mathbf{X}=\mathbf{x}$ and therefore $\mathbb{E}[P_e(\mathbf{X})]=\mathbb{E}[P(Y^{\operatorname{opt}}(\mathbf{X})\neq Y|\mathbf{X}=\mathbf{X})]=\Pr(Y^{\operatorname{opt}}(\mathbf{X})\neq Y)=P_e$ is the expected classification error in predicting $Y$ from $\mathbf{X}$. $(a)$ holds since for any two vectors $\mathbf{b}_1,\mathbf{b}_2$ we have 
\begin{equation}
\Vert \mathbf{b}_1+\mathbf{b}_2\Vert^2_2= \Vert \mathbf{b}_1\Vert^2_2+\Vert \mathbf{b}_2\Vert^2_2 + 2\mathbf{b}_1^T\mathbf{b}_2\le 2(\Vert \mathbf{b}_1\Vert^2_2+\Vert \mathbf{b}_2\Vert^2_2).
\end{equation}
$(b)$ follows from the definition of the spectral norm implying that $\Vert A\mathbf{x} \Vert_2\le \Vert A \Vert_{\sigma}\Vert \mathbf{x} \Vert_2$. $(c)$ is the direct result of the Cauchy-Schwarz inequality. $(d)$ holds because we have $P^2_e(\mathbf{x})\le P_e(\mathbf{x})$ as $0\le P_e(\mathbf{x})\le 1$. Finally $(e)$ is based on the definition of misclassification rate $P_e:=\mathbb{E}[P_e(\mathbf{X})]$.
\end{proof}
\begin{lemma}\label{app: Lemma2, Thm 1's proof}
Defining $M_2 := 2\max_i \Vert \Gamma_i-I \Vert^2_{\sigma} \mathbb{E}[  \Vert\mathbf{X}\Vert_2^2] \, + \, 2\max_i\Vert \Gamma_i\boldsymbol{\mu}_i -\widetilde{\boldsymbol{\mu}}_i\Vert^2_2$, then
\begin{equation}
 \mathbb{E}\bigl[\Vert\nabla \widetilde{D}(\mathbf{X}) \Vert^2_2\bigr]=\mathbb{E}\bigl[\Vert\nabla \widetilde{D}^c(\psi(\mathbf{X})) \Vert^2_2\bigr]\le M_2.   
\end{equation}
\end{lemma}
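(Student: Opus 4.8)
The plan is to first compute $\nabla\widetilde{D}$ explicitly, then establish the claimed equality through the Legendre--conjugate structure of the $c$-transform, and finally bound the expectation by a Jensen-plus-triangle-inequality argument. Since Theorem~\ref{Thm: Approimxating Wasserstein Distance} assumes $\psi=\nabla\phi$ and $\widetilde{D}(\mathbf{x})=\frac12\Vert\mathbf{x}\Vert_2^2-\phi(\mathbf{x})$, differentiating gives $\nabla\widetilde{D}(\mathbf{x})=\mathbf{x}-\psi(\mathbf{x})$. Substituting the explicit form of $\psi$ from \eqref{def: psi function} and using $\sum_{i=1}^k\Pr(Y=i\mid\mathbf{X}=\mathbf{x})=1$ to write $\mathbf{x}=\sum_i\Pr(Y=i\mid\mathbf{X}=\mathbf{x})\,\mathbf{x}$, I would obtain the convex-combination representation
\[
\nabla\widetilde{D}(\mathbf{x})=\sum_{i=1}^k\Pr(Y=i\mid\mathbf{X}=\mathbf{x})\bigl[(I-\Gamma_i)\mathbf{x}+\Gamma_i\boldsymbol{\mu}_i-\widetilde{\boldsymbol{\mu}}_i\bigr],
\]
which is the key structural identity driving both parts of the proof.

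For the equality $\mathbb{E}[\Vert\nabla\widetilde{D}(\mathbf{X})\Vert_2^2]=\mathbb{E}[\Vert\nabla\widetilde{D}^c(\psi(\mathbf{X}))\Vert_2^2]$, I would compute the $c$-transform directly. Expanding the quadratic cost, $\widetilde{D}(\mathbf{x})-\frac12\Vert\mathbf{x}-\mathbf{y}\Vert_2^2=\mathbf{x}^T\mathbf{y}-\phi(\mathbf{x})-\frac12\Vert\mathbf{y}\Vert_2^2$, so that $\widetilde{D}^c(\mathbf{y})=\phi^*(\mathbf{y})-\frac12\Vert\mathbf{y}\Vert_2^2$ where $\phi^*$ is the convex conjugate of $\phi$. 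Hence $\nabla\widetilde{D}^c(\mathbf{y})=\nabla\phi^*(\mathbf{y})-\mathbf{y}$, and the conjugate-gradient identity $\nabla\phi^*\circ\nabla\phi=\mathrm{id}$ evaluated at $\mathbf{y}=\psi(\mathbf{x})=\nabla\phi(\mathbf{x})$ yields $\nabla\widetilde{D}^c(\psi(\mathbf{x}))=\mathbf{x}-\psi(\mathbf{x})=\nabla\widetilde{D}(\mathbf{x})$. The two integrands agree pointwise as functions of $\mathbf{X}$, so the equality of second moments is immediate.

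It then remains to bound $\mathbb{E}[\Vert\nabla\widetilde{D}(\mathbf{X})\Vert_2^2]$. Applying Jensen's inequality to the convex combination above, then the elementary estimate $\Vert\mathbf{a}+\mathbf{b}\Vert_2^2\le 2\Vert\mathbf{a}\Vert_2^2+2\Vert\mathbf{b}\Vert_2^2$ together with $\Vert A\mathbf{x}\Vert_2\le\Vert A\Vert_\sigma\Vert\mathbf{x}\Vert_2$ term by term, and finally replacing each coefficient by its maximum over $i$ (which eliminates the probabilities because they sum to one), I reach the pointwise bound $\Vert\nabla\widetilde{D}(\mathbf{x})\Vert_2^2\le 2\max_i\Vert\Gamma_i-I\Vert_\sigma^2\Vert\mathbf{x}\Vert_2^2+2\max_i\Vert\Gamma_i\boldsymbol{\mu}_i-\widetilde{\boldsymbol{\mu}}_i\Vert_2^2$. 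Taking expectations gives exactly $M_2$, reusing the same spectral-norm manipulations as in Lemma~\ref{app: Lemma1, Thm 1's proof}.

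The main obstacle is the equality step rather than the bound: one must justify the identity $\nabla\phi^*\circ\nabla\phi=\mathrm{id}$, which requires $\phi$ to be (essentially strictly) convex so that $\psi=\nabla\phi$ is invertible on its range -- precisely the convexity hypothesis already imposed in Theorem~\ref{Thm: Approimxating Wasserstein Distance}. Once this conjugate-duality fact is in place, the remainder is a routine Jensen-and-triangle-inequality computation.
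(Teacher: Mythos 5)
Your proof is correct, and the bounding half coincides with the paper's argument: you write $\nabla\widetilde{D}(\mathbf{x})=\mathbf{x}-\psi(\mathbf{x})=\sum_{i=1}^k\Pr(Y=i\mid\mathbf{X}=\mathbf{x})\bigl[(I-\Gamma_i)\mathbf{x}+\Gamma_i\boldsymbol{\mu}_i-\widetilde{\boldsymbol{\mu}}_i\bigr]$, apply Jensen to the convex combination, then $\Vert\mathbf{a}+\mathbf{b}\Vert_2^2\le 2\Vert\mathbf{a}\Vert_2^2+2\Vert\mathbf{b}\Vert_2^2$ and $\Vert A\mathbf{x}\Vert_2\le\Vert A\Vert_\sigma\Vert\mathbf{x}\Vert_2$, which is exactly the paper's chain of steps $(f)$--$(h)$. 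Where you genuinely diverge is the equality. The paper invokes Brenier's theorem: $\psi=\nabla\phi$ is the optimal transport map from $P_{\mathbf{X}}$ to $P_{\psi(\mathbf{X})}$ and $\mathbf{y}\mapsto\mathbf{y}-\nabla\widetilde{D}^c(\mathbf{y})$ the inverse optimal map, so both second moments equal $2\,W_c(P_{\mathbf{X}},P_{\psi(\mathbf{X})})$ and hence each other. You instead compute the $c$-transform in closed form, $\widetilde{D}^c(\mathbf{y})=\phi^*(\mathbf{y})-\frac{1}{2}\Vert\mathbf{y}\Vert_2^2$, and use $\nabla\phi^*\circ\nabla\phi=\mathrm{id}$ to obtain the \emph{pointwise} identity $\nabla\widetilde{D}^c(\psi(\mathbf{x}))=\nabla\widetilde{D}(\mathbf{x})$, which is strictly stronger than the moment equality and bypasses any appeal to optimality of the map; this makes your route more elementary and self-contained, whereas the paper's route additionally delivers the identification $W_c(P_{\mathbf{X}},P_{\psi(\mathbf{X})})=\mathbb{E}[\widetilde{D}(\mathbf{X})]-\mathbb{E}[\widetilde{D}^c(\psi(\mathbf{X}))]$ that is reused later in Theorem~\ref{Thm: Approimxating Wasserstein Distance}'s proof (step $(k)$). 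One small correction to your closing caveat: the identity $\nabla\phi^*(\nabla\phi(\mathbf{x}))=\mathbf{x}$ is \emph{not} ``precisely the convexity hypothesis already imposed''---Theorem~\ref{Thm: Approimxating Wasserstein Distance} assumes only that $\phi$ is convex, not essentially strictly convex, so the identity is not automatic pointwise; what saves both proofs is that it holds $P_{\mathbf{X}}$-almost everywhere in the standard Brenier setting of absolutely continuous marginals, which suffices under the expectation, and the paper's own inverse-map claim rests on the same almost-everywhere regularity. Soften your justification to an a.e.\ statement and the argument is complete.
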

\begin{proof}
Note that according to the Brenier's theorem \cite{villani2008optimal}, $\mathbf{x}+\nabla \widetilde{D}(\mathbf{x})$ provides the optimal transport map from $P_\mathbf{X}$ to $P_{\psi(\mathbf{X})}$. Similarly $\mathbf{x}-\nabla \widetilde{D}^c(\mathbf{x})$ provides the inverse optimal transport map from $P_{\psi(\mathbf{X})}$ to $P_\mathbf{X}$. Plugging these optimal transport maps into the definition of optimal transport cost $W_c$, we obtain
\begin{equation}
    W_c(P_\mathbf{X},P_{\psi(\mathbf{X})}) = \frac{1}{2}\mathbb{E}\bigl[\Vert\nabla \widetilde{D}(\mathbf{X}) \Vert^2_2 \bigr] = \frac{1}{2}\mathbb{E}\bigl[\Vert\nabla \widetilde{D}^c(\psi(\mathbf{X}) )\Vert^2_2 \bigr].
\end{equation}
Therefore, $\mathbb{E}\bigl[\Vert\nabla \widetilde{D}^c(\psi(\mathbf{X}) )\Vert^2_2 \bigr]  = \mathbb{E}\bigl[\Vert\nabla \widetilde{D}(\mathbf{X}) \Vert^2_2 \bigr]$ holds. Moreover, note that
\begin{align*}
    \mathbb{E}\bigl[\Vert\nabla \widetilde{D}(\mathbf{X}) \Vert^2_2 \bigr] \, &= \, \mathbb{E}\bigl[\Vert \mathbf{X} - \psi(\mathbf{X}) \Vert^2_2 \bigr] \\
    &=\, \mathbb{E}\bigl[ \Vert \sum_{i=1}^k P(Y=i|\mathbf{X}=\mathbf{X})((I-\Gamma_i)\mathbf{X}+(\Gamma_i\boldsymbol{\mu}_i -\widetilde{\boldsymbol{\mu}}_i)) \Vert^2_2 \bigr] \\
    &\stackrel{(f)}{\le}\, \mathbb{E}\bigl[ \max_i \Vert  (I-\Gamma_i)\mathbf{X}+(\Gamma_i\boldsymbol{\mu}_i -\widetilde{\boldsymbol{\mu}}_i) \Vert^2_2 \bigr] \\
    &\stackrel{(g)}{\le}\, 2\mathbb{E}\bigl[ \max_i\bigl\{ \Vert  (I-\Gamma_i)\mathbf{X}\Vert_2^2+ \Vert \Gamma_i\boldsymbol{\mu}_i -\widetilde{\boldsymbol{\mu}}_i\Vert^2_2 \bigr\} \bigr] \\
    &\stackrel{(h)}{\le}\, 2\max_i \Vert \Gamma_i-I \Vert^2_{\sigma} \mathbb{E}[  \Vert\mathbf{X}\Vert_2^2] \, + \, 2\max_i\Vert \Gamma_i\boldsymbol{\mu}_i -\widetilde{\boldsymbol{\mu}}_i\Vert^2_2  . \numberthis
\end{align*}
Here $(f)$ is implied by the convexity of the Euclidean norm-squared function. $(g)$ holds since for any two vectors $\mathbf{b}_1,\mathbf{b}_2$ we have $\Vert\mathbf{b}_1+\mathbf{b}_2\Vert^2_2\le 2\Vert\mathbf{b}_1\Vert^2_2+2\Vert\mathbf{b}_2\Vert^2_2$. $(h)$ follows from the spectral norm's definition implying that $\Vert A\mathbf{b}\Vert_2\le \Vert A\Vert_{\sigma}\Vert\mathbf{b}\Vert_2$ for any matrix $A$ and vector $\mathbf{b}$. Hence, the lemma's proof is complete.
\end{proof}

To prove Theorem 1, note that according to the Kantorovich duality \cite{villani2008optimal} we have
\begin{align}
    W_c(P_{\mathbf{X}},P_{G(\mathbf{Z})}) &= \max_{D} \: \mathbb{E}[D(\mathbf{X})] - \mathbb{E}[D^c(G(\mathbf{Z}))]  \nonumber\\
    &\ge \mathbb{E}[\widetilde{D}(\mathbf{X})] - \mathbb{E}[\widetilde{D}^c(G(\mathbf{Z}))].
\end{align}
Therefore, in order to prove the theorem it suffices to show 
\begin{equation}
    W_c(P_{\mathbf{X}},P_{G(\mathbf{Z})}) - \bigl\{\mathbb{E}[\widetilde{D}(\mathbf{X})] - \mathbb{E}[\widetilde{D}^c(G(\mathbf{Z}))]\bigr\} \le \sqrt{P_e}M_1M_2+P_e M^2_1,
\end{equation}
Having the quadratic cost $c(\mathbf{x},\mathbf{x}')$, $\sqrt{W_c(P,Q)}$ results in the 2-Wasserstein distance which gives a metric distance among probability distributions. Therefore,
\begin{align*}
    & W_c(P_{\mathbf{X}},P_{G(\mathbf{Z})}) \\
    \, \stackrel{(i)}{\le} \, &\biggl( \sqrt{ W_c(P_{\mathbf{X}},P_{\psi(\mathbf{X})})} + \sqrt{W_c(P_{\psi(\mathbf{X})},P_{G(\mathbf{Z})})} \biggr)^2 \\
    =\, & W_c(P_{\mathbf{X}},P_{\psi(\mathbf{X})}) + W_c(P_{\psi(\mathbf{X})},P_{G(\mathbf{Z})}) + 2\sqrt{W_c(P_{\mathbf{X}},P_{\psi(\mathbf{X})})\,W_c(P_{\psi(\mathbf{X})},P_{G(\mathbf{Z})})} \\
    \stackrel{(j)}{\le} \, & W_c(P_{\mathbf{X}},P_{\psi(\mathbf{X})}) + \sqrt{P_e}M_1 + \sqrt{P_e M_1 M_2} \\
    \stackrel{(k)}{=} \, & \mathbb{E}[\widetilde{D}(\mathbf{X})] - \mathbb{E}[\widetilde{D}^c(\psi(\mathbf{X}))] + \sqrt{P_e}M_1 + \sqrt{P_e M_1 M_2}   \\
    \stackrel{(l)}{\le} \, & \mathbb{E}[\widetilde{D}(\mathbf{X})] - \mathbb{E}[\widetilde{D}^c(\Psi(\mathbf{X},Y))] + \sqrt{P_e}M_1 + \sqrt{P_e M_1 M_2}  \\
    &\quad + \mathbb{E}[\nabla \widetilde{D}^c(\psi(\mathbf{X}))^T(\psi(\mathbf{X})-\Psi(\mathbf{X},Y))] +\frac{1}{2}\mathbb{E}\bigl[\Vert \psi(\mathbf{X})-\Psi(\mathbf{X},Y) \Vert^2_2 \bigr]\\
    \stackrel{(m)}{\le} \, & \mathbb{E}[\widetilde{D}(\mathbf{X})] - \mathbb{E}[\widetilde{D}^c(G(\mathbf{Z}))] + \sqrt{P_e}M_1 + \sqrt{P_e M_1 M_2} \\
    &\quad + \sqrt{ \mathbb{E}\bigl[\Vert \nabla \widetilde{D}^c(\psi(\mathbf{X}))\Vert^2_2\bigr]\, \mathbb{E}\bigl[\Vert \psi(\mathbf{X})-\Psi(\mathbf{X},Y))]\Vert^2_2]} +\frac{1}{2}\mathbb{E}\bigl[\Vert \psi(\mathbf{X})-\Psi(\mathbf{X},Y) \Vert^2_2 \bigr]\\
    \stackrel{(n)}{\le} \, & \mathbb{E}[\widetilde{D}(\mathbf{X})] - \mathbb{E}[\widetilde{D}^c(G(\mathbf{Z}))] + \frac{3}{2}\sqrt{P_e}M_1 + \sqrt{P_e M_1 M_2} + \sqrt{ M_2M_1}\sqrt[4]{P_e} \numberthis
\end{align*}
In the above equations, $(i)$ follows from the application of the triangle inequality holding for the $2$-Wasserstein distance which is the square root of $W_c(P,Q)$. $(j)$ is the consequence of Lemma \ref{app: Lemma1, Thm 1's proof} and Lemma \ref{app: Lemma2, Thm 1's proof}. $(k)$ holds because of the Brenier theorem implying that $\widetilde{D}$ is the solution to the dual maximization problem to $W_c(P_{\mathbf{X}},P_{\psi(\mathbf{X})})$. $(l)$ uses the $c$-concavity of $\widetilde{D}^c$ implying $\widetilde{D}^c$'s Hessian is always upper-bounded by the identity matrix and therefore
\begin{equation}
    \widetilde{D}^c(G(\mathbf{z})) \le \widetilde{D}^c(\psi(\mathbf{x})) + \nabla \widetilde{D}^c(\psi(\mathbf{x}))^T (G(\mathbf{z}) - \psi(\mathbf{x})) + \frac{1}{2}\Vert G(\mathbf{z}) - \psi(\mathbf{x}) \Vert^2_2. 
\end{equation}
$(m)$ follows form applying the Cauchy-Shwarz inequality, and finally $(n)$ results from Lemma $\ref{app: Lemma1, Thm 1's proof}$ and $\ref{app: Lemma2, Thm 1's proof}$. Therefore, we have proved
\begin{align*}
  W_c(P_{\mathbf{X}},P_{G(\mathbf{Z})}) - \bigl\{ \mathbb{E}[\widetilde{D}(\mathbf{X})] - \mathbb{E}[\widetilde{D}^c(G(\mathbf{Z}))] \bigr\} \le (\frac{3}{2}M_1+\sqrt{M_1 M_2})\sqrt{P_e} + \sqrt{M_1 M_2} \sqrt[4]{P_e}
\end{align*}
which completes the proof.
%\end{proof}

\subsection{Proof of Proposition 1}
\begin{lemma}\label{app: Lemma1, Prop1's proof}
Consider GMM $\mathbf{X}\sim \sum_{i=1}^k \pi_i \mathcal{N}(\boldsymbol{\mu}_i,\Sigma_i)$ with component variable $Y$. Then, for $\phi(\mathbf{x}):= -\log\bigl(\sum_{i=1}^k \exp\bigl( \frac{1}{2}(\mathbf{x}-\boldsymbol{\mu}_i)^T\Sigma^{-1}_i(\mathbf{x}-\boldsymbol{\mu}_i) + \frac{1}{2}\log(\pi_i^2/\operatorname{det}(\Sigma_i))  \bigr) \bigr)$ we have
\begin{equation}
    \nabla \phi(\mathbf{x}) := \sum_{i=1}^k \Pr(Y=i|\mathbf{X}=\mathbf{x}) \Sigma^{-1}_i(\mathbf{x}-\boldsymbol{\mu}_i)
\end{equation}
\end{lemma}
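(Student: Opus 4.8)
The plan is to recognize $\phi$ as a log-sum-exp function, differentiate it directly by the chain rule, and then identify the resulting softmax weights with the posterior probabilities $\Pr(Y=i\mid\mathbf{X}=\mathbf{x})$ via Bayes' rule. The argument is a routine gradient computation; the only point requiring care is the bookkeeping of the Gaussian normalizing constants together with the sign of the quadratic term in the exponent. In particular, for the identity to hold the quadratic $\tfrac12(\mathbf{x}-\boldsymbol{\mu}_i)^T\Sigma_i^{-1}(\mathbf{x}-\boldsymbol{\mu}_i)$ must enter with a \emph{minus} sign so as to match the Gaussian exponent, which I read as the intended form.

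First I would set $f_i(\mathbf{x}) := -\tfrac12(\mathbf{x}-\boldsymbol{\mu}_i)^T\Sigma_i^{-1}(\mathbf{x}-\boldsymbol{\mu}_i) + \tfrac12\log(\pi_i^2/\operatorname{det}(\Sigma_i))$, so that $\phi(\mathbf{x}) = -\log\bigl(\sum_{i=1}^k e^{f_i(\mathbf{x})}\bigr)$. Expanding the Gaussian density and folding the factor $\pi_i/\sqrt{\operatorname{det}(\Sigma_i)}$ into the exponent shows $e^{f_i(\mathbf{x})} = (2\pi)^{d/2}\,\pi_i\,\mathcal{N}(\mathbf{x}\mid\boldsymbol{\mu}_i,\Sigma_i)$, so that $\sum_i e^{f_i(\mathbf{x})} = (2\pi)^{d/2} p_{\mathbf{X}}(\mathbf{x})$ is the GMM density up to the common constant $(2\pi)^{d/2}$. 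Thus $\phi$ is the negative log-density of $\mathbf{X}$ up to an $\mathbf{x}$-independent additive constant, which already explains why its gradient should be the negative score function.

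Next, using $\nabla f_i(\mathbf{x}) = -\Sigma_i^{-1}(\mathbf{x}-\boldsymbol{\mu}_i)$, the chain rule applied to the log-sum-exp gives
\begin{equation*}
\nabla\phi(\mathbf{x}) = -\frac{\sum_{i=1}^k e^{f_i(\mathbf{x})}\,\nabla f_i(\mathbf{x})}{\sum_{j=1}^k e^{f_j(\mathbf{x})}} = \sum_{i=1}^k \frac{e^{f_i(\mathbf{x})}}{\sum_{j=1}^k e^{f_j(\mathbf{x})}}\,\Sigma_i^{-1}(\mathbf{x}-\boldsymbol{\mu}_i).
\end{equation*}
Finally I would identify each softmax weight with a posterior: cancelling the common $(2\pi)^{d/2}$ factor from numerator and denominator,
\begin{equation*}
\frac{e^{f_i(\mathbf{x})}}{\sum_{j=1}^k e^{f_j(\mathbf{x})}} = \frac{\pi_i\,\mathcal{N}(\mathbf{x}\mid\boldsymbol{\mu}_i,\Sigma_i)}{\sum_{j=1}^k \pi_j\,\mathcal{N}(\mathbf{x}\mid\boldsymbol{\mu}_j,\Sigma_j)} = \Pr(Y=i\mid\mathbf{X}=\mathbf{x}),
\end{equation*}
where the last equality is Bayes' rule with prior $\Pr(Y=i)=\pi_i$ and class-conditional density $\mathcal{N}(\mathbf{x}\mid\boldsymbol{\mu}_i,\Sigma_i)$. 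Substituting this into the previous display yields exactly $\nabla\phi(\mathbf{x}) = \sum_{i=1}^k \Pr(Y=i\mid\mathbf{X}=\mathbf{x})\,\Sigma_i^{-1}(\mathbf{x}-\boldsymbol{\mu}_i)$. There is no substantive obstacle here; the one subtlety, as noted, is getting the sign of the quadratic exponent right so that the softmax weights coincide with the Bayes posteriors rather than their reciprocals.
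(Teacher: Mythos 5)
Your proof is correct and follows essentially the same route as the paper, which simply invokes Bayes' rule to identify the softmax weights $e^{f_i(\mathbf{x})}/\sum_j e^{f_j(\mathbf{x})}$ with the posteriors $\Pr(Y=i\mid\mathbf{X}=\mathbf{x})$ and treats the log-sum-exp differentiation as immediate; you merely spell out that chain-rule step explicitly. You are also right that the quadratic term must carry a minus sign in the exponent for the identity to hold --- the plus sign in the lemma statement (and in the denominator of the paper's own Bayes-rule display) is a typo, as confirmed by the minus sign in the numerator of that display and by the analogous definition of $\phi_1$ later in the same appendix.
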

\begin{proof}
The lemma is an immediate consequence of the following application of the Bayes rule:
\begin{equation}
    \Pr(Y=i|\mathbf{X}=\mathbf{x})= \frac{\exp\bigl( -\frac{1}{2}(\mathbf{x}-\boldsymbol{\mu}_i)^T\Sigma^{-1}_i(\mathbf{x}-\boldsymbol{\mu}_i) + \frac{1}{2}\log(\pi_i^2/\operatorname{det}(\Sigma_i))  \bigr)}{\sum_{i=1}^k \exp\bigl( \frac{1}{2}(\mathbf{x}-\boldsymbol{\mu}_i)^T\Sigma^{-1}_i(\mathbf{x}-\boldsymbol{\mu}_i) + \frac{1}{2}\log(\pi_i^2/\operatorname{det}(\Sigma_i))  \bigr)}. 
\end{equation}
\end{proof}
Let's rewrite function $\psi$ as
\begin{align}\label{app: Prop1 proof Eq 1}
    \psi(\mathbf{x}) \, &= \, \sum_{i=1}^k \bigl[\Pr(Y=i|\mathbf{X}=\mathbf{x}) \widetilde{\Sigma}^{1/2}_i{\Sigma}^{-1/2}_i(\mathbf{x}-\boldsymbol{\mu}_i)+\widetilde{\boldsymbol{\mu}}_i \bigr]\nonumber \\
    &= \, \sum_{i=1}^k \bigl[\Pr(Y=i|\mathbf{X}=\mathbf{x}) \widetilde{\Sigma}^{1/2}_i{\Sigma}^{-1/2}_i(\mathbf{x}-\boldsymbol{\mu}_i)\bigr] + \sum_{i=1}^k\bigl[ \Pr(Y=i|\mathbf{X}=\mathbf{x})\widetilde{\boldsymbol{\mu}}_i\bigr]. 
\end{align}
Considering $\bar{\mathbf{X}}_1\sim \sum_{i=1}^k \pi_i \mathcal{N}(\boldsymbol{\mu}_i,\widetilde{\Sigma}^{-1/2}_i\Sigma^{1/2}_i)$ with label variable $\bar{Y}_1$, Lemma \ref{app: Lemma1, Prop1's proof} together with the assumption that $\Sigma_i$ and $\widetilde{\Sigma}_i$ commute imply that for function $\phi_1$ defined as
\begin{equation}\label{app: Prop1: Phi1}
    \phi_1(\mathbf{x}):= -\log\biggl(\sum_{i=1}^k \exp\bigl( -\frac{1}{2}(\mathbf{x}-\boldsymbol{\mu}_i)^T\widetilde{\Sigma}^{1/2}_i\Sigma^{-1/2}_i(\mathbf{x}-\boldsymbol{\mu}_i) + \log(\pi_i\operatorname{det}(\widetilde{\Sigma}_i)/\operatorname{det}(\Sigma_i))  \bigr) \biggr)
\end{equation}
we have
\begin{equation}\label{app: Prop1: Phi1 Gradient}
    \nabla \phi_1(\mathbf{x}) = \sum_{i=1}^k \bigl[\Pr(\bar{Y}_1=i|\bar{\mathbf{X}}_1=\mathbf{x}) \widetilde{\Sigma}^{1/2}_i{\Sigma}^{-1/2}_i(\mathbf{x}-\boldsymbol{\mu}_i)\bigr].
\end{equation}
Similarly, for GMM $\bar{\mathbf{X}}_2\sim \sum_{i=1}^k \widetilde{\pi}_i \mathcal{N}({\widetilde{\boldsymbol{\mu}}}_i,I)$ and its component variable $\bar{Y}_2$ we define
\begin{align}\label{app: Prop1: Phi2}
    \phi_2(\mathbf{x}):&= -\log\biggl(\sum_{i=1}^k \exp\bigl(  -\frac{1}{2}\Vert\mathbf{x}-\widetilde{\boldsymbol{\mu}}_i\Vert^2_2 + \log(\pi_i)  \bigr) \biggr)\nonumber \\
    &= \frac{1}{2}\Vert \mathbf{x}\Vert^2_2 - \log\biggl(\sum_{i=1}^k \exp\bigl( \mathbf{x}^T\widetilde{\boldsymbol{\mu}}_i -\frac{1}{2}\Vert \widetilde{\boldsymbol{\mu}}_i\Vert^2_2 + \log(\pi_i)  \bigr) \biggr)
\end{align}
for which Lemma \ref{app: Lemma1, Prop1's proof} implies
\begin{align}\label{app: Prop1: Phi2 Gradient}
   \nabla \phi_2(\mathbf{x}) &= -\sum_{i=1}^k \bigl[\Pr(\bar{Y}_2=i|\bar{\mathbf{X}}_2=\mathbf{x}) (\mathbf{x}-\widetilde{\boldsymbol{\mu}}_i)\bigr] \nonumber \\
   & = \mathbf{x} - \sum_{i=1}^k \Pr(\bar{Y}_2=i|\bar{\mathbf{X}}_2=\mathbf{x}) \widetilde{\boldsymbol{\mu}}_i
\end{align}
Therefore, the proposition's assumptions imply that for every $\mathbf{x}$ we have
\begin{align*}
    &\big\Vert \nabla \phi_1(\mathbf{x}) - \sum_{i=1}^k \Pr(Y=i|\mathbf{X}=\mathbf{x}) \widetilde{\Sigma}^{1/2}_i{\Sigma}^{-1/2}_i(\mathbf{x}-\boldsymbol{\mu}_i)\big\Vert_2 \\
    = \, &\big\Vert \sum_{i=1}^k \bigl[\bigl(\Pr(Y=i|\mathbf{X}=\mathbf{x}) - \Pr(\bar{Y}_1=i|\bar{\mathbf{X}}_1=\mathbf{x})\bigr) \widetilde{\Sigma}^{1/2}_i{\Sigma}^{-1/2}_i(\mathbf{x}-\boldsymbol{\mu}_i)\bigr]\big\Vert_2 \\
    \le \, &  \sum_{i=1}^k\bigl[\bigl\vert\Pr(Y=i|\mathbf{X}=\mathbf{x}) - \Pr(\bar{Y}_1=i|\bar{\mathbf{X}}_1=\mathbf{x})\bigr\vert\bigr]\, \max_{i}\,\Vert\widetilde{\Sigma}^{1/2}_i{\Sigma}^{-1/2}_i(\mathbf{x}-\boldsymbol{\mu}_i)\Vert_2 \\
    \le \, & \epsilon \max_{i}\,\sqrt{\Vert\widetilde{\Sigma}_i\Sigma^{-1}_i\Vert_{\sigma}}\bigl(\Vert\mathbf{x}\Vert_2+\Vert\boldsymbol{\mu}_i\Vert_2\bigr). \numberthis
\end{align*}
Also, we have
\begin{align*}
    &\big\Vert \mathbf{x} - \nabla  \phi_2(\mathbf{x}) - \sum_{i=1}^k \Pr(Y=i|\mathbf{X}=\mathbf{x})\widetilde{\boldsymbol{\mu}}_i\big\Vert_2 \\
    =\, & \big\Vert\sum_{i=1}^k\bigl[ \bigl(\Pr(Y=i|\mathbf{X}=\mathbf{x})-\Pr(\bar{Y}_2=i|\bar{\mathbf{X}}_2=\mathbf{x})\bigr)\widetilde{\boldsymbol{\mu}}_i\bigr]\big\Vert_2 \\
    \le\, & \sum_{i=1}^k\bigl[\vert  \bigl(\Pr(Y=i|\mathbf{X}=\mathbf{x})-\Pr(\bar{Y}_2=i|\bar{\mathbf{X}}_2=\mathbf{x})\vert\bigr]\, \max_i\, \Vert \widetilde{\boldsymbol{\mu}}_i\Vert_2 \\
    \le\, &\epsilon\, \max_i\, \Vert \widetilde{\boldsymbol{\mu}}_i\Vert_2.
\end{align*}
Combining the above inequalities with \eqref{app: Prop1 proof Eq 1} we obtain
\begin{equation}
    \big\Vert \nabla\bigl\{\frac{1}{2}\Vert\mathbf{x}\Vert^2_2+\phi_1(\mathbf{x}) - \phi_2(\mathbf{x})\bigr\} - \psi(\mathbf{x}) \big\Vert_2 \, \le \, \epsilon \left( \max_i\, \Vert \boldsymbol{\mu}_i\Vert_2 +  \max_{i}\,\sqrt{\Vert\widetilde{\Sigma}_i\Sigma^{-1}_i\Vert_{\sigma}}\bigl(\Vert\mathbf{x}\Vert_2+\Vert\widetilde{\boldsymbol{\mu}}_i\Vert_2\bigr) \right) 
\end{equation}
Note that $\phi_1(\mathbf{x})= -\log(\sum_{i=1}^k\exp(\frac{1}{2}\mathbf{x}^TA_i\mathbf{x}+\mathbf{b}_i^T\mathbf{x}+c_i))$ for some choice of $(A_i,\mathbf{b}_i,c_i)^k_{i=1}$. Also, $\frac{1}{2}\Vert\mathbf{x}\Vert^2_2-\phi_2(\mathbf{x})=\log(\sum_{i=k+1}^{2k}\exp(\mathbf{b}_i^T\mathbf{x}+c_i))$ for some choice of $(\mathbf{b}_i,c_i)_{i=k+1}^{2k}$. Therefore, $\frac{1}{2}\Vert\mathbf{x}\Vert^2_2-\phi_2(\mathbf{x})+\phi_1(\mathbf{x})$ provides a function of the form $D_{(A_i)_{i=1}^k,(\mathbf{b}_i,c_i)_{i=1}^{2k}}$. Moreover, the above inequality implies that for this choice of $D_{(A_i)_{i=1}^k,(\mathbf{b}_i,c_i)_{i=1}^{2k}}$
\begin{align}
    &\mathbb{E}\biggl[\bigl\Vert \psi(\mathbf{X})- \nabla D_{(A_i)_{i=1}^k,(\mathbf{b}_i,c_i)_{i=1}^{2k}}(\mathbf{X})\bigr\Vert_2 \biggr] \nonumber \\
    \le \, &\epsilon \left( \max_i\, \Vert \boldsymbol{\mu}_i\Vert_2 +  \max_{i}\,\sqrt{\Vert\widetilde{\Sigma}_i\Sigma^{-1}_i\Vert_{\sigma}}\mathbb{E}[\Vert\mathbf{X}\Vert_2]+ \max_i\, \sqrt{\Vert\widetilde{\Sigma}_i\Sigma^{-1}_i\Vert_{\sigma}}\Vert\widetilde{\boldsymbol{\mu}}_i\Vert_2\bigr) \right). \label{Prop 2: final Equation}
\end{align}
which completes the proof. 

For the special case where we have symmetric mixtures of two well-separated Gaussians $\mathbf{X}\sim\frac{1}{2}\mathcal{N}(\boldsymbol{\mu},\Sigma)+\frac{1}{2}\mathcal{N}(-\boldsymbol{\mu},\Sigma)$ and $\widetilde{\mathbf{X}}\sim\frac{1}{2}\mathcal{N}(\widetilde{\boldsymbol{\mu}},\widetilde{\Sigma})+\frac{1}{2}\mathcal{N}(-\widetilde{\boldsymbol{\mu}},\widetilde{\Sigma})$, it can be seen that
\begin{equation}
    \Pr(Y=0|\mathbf{X}=\mathbf{x})=\frac{1}{1+\exp(2\boldsymbol{\mu}^T\Sigma^{-1}\mathbf{x})},\quad  \Pr(Y=1|\mathbf{X}=\mathbf{x})=\frac{1}{1+\exp(-2\boldsymbol{\mu}^T\Sigma^{-1}\mathbf{x})}.
\end{equation}
As a result, for every $\mathbf{x}$ such that $\big\vert\boldsymbol{\mu}^T\Sigma^{-1}{\mathbf{x}}\big\vert>\frac{1}{2}\log(\frac{2}{\epsilon})$, $\big\vert\boldsymbol{\mu}^T\widetilde{\Sigma}^{1/2}\Sigma^{-1/2}{\mathbf{x}}\big\vert>\frac{1}{2}\log(\frac{2}{\epsilon})$ and $\big\vert\widetilde{\boldsymbol{\mu}}^T{\mathbf{x}}\big\vert>\frac{1}{2}\log(\frac{2}{\epsilon})$ hold we will have $\sum_{i=1}^2 \vert \Pr(Y=i\,\vert\,\mathbf{X}=\mathbf{x}) - \Pr(\bar{Y}=i\,\vert\,\bar{\mathbf{X}}=\mathbf{x})\vert \le \epsilon$ for both choices of the GMMs in the proposition. Provided that the two GMMs have well-separated components we assume that $\boldsymbol{\mu}^T\Sigma^{-1}\boldsymbol{\mu}>4\max\{1,\frac{\lambda_{\max}(\widetilde{\Sigma})}{\lambda_{\min}(\Sigma)}\}\log(6/\epsilon)$. Note that according to standard Gaussian tail bounds we have
\begin{align*}
    \Pr\bigl(\vert\boldsymbol{\mu}^T\Sigma^{-1}\mathbf{X}\vert \le \boldsymbol{\mu}^T\Sigma^{-1}\boldsymbol{\mu} -t\sqrt{\boldsymbol{\mu}^T\Sigma^{-1}\boldsymbol{\mu}}\bigr) \, &\le\, \exp(\frac{-t^2}{2}), \\
    \Pr\bigl(\vert\boldsymbol{\mu}^T\widetilde{\Sigma}^{1/2}\Sigma^{-1/2}\mathbf{X}\vert \le \boldsymbol{\mu}^T\widetilde{\Sigma}^{1/2}\Sigma^{-1/2}\boldsymbol{\mu} -t\sqrt{\boldsymbol{\mu}^T\widetilde{\Sigma}\boldsymbol{\mu}}\bigr) \, &\le \, \exp(\frac{-t^2}{2}), \\
    \Pr\bigl(\vert\widetilde{\boldsymbol{\mu}}^T\mathbf{X}\vert \le \vert \widetilde{\boldsymbol{\mu}}^T\boldsymbol{\mu}\vert -t\sqrt{\widetilde{\boldsymbol{\mu}}^T\Sigma\widetilde{\boldsymbol{\mu}}}\bigr) \, &\le\, \exp(\frac{-t^2}{2}). \numberthis
\end{align*}
Therefore, choosing $t=\sqrt{2\log(6/\epsilon)}$ and taking a union bound shows that the outcomes $\vert\boldsymbol{\mu}^T\Sigma^{-1}\mathbf{X}\vert> \frac{1}{2}\log(\frac{2}{\epsilon})$, $\big\vert\boldsymbol{\mu}^T\widetilde{\Sigma}^{1/2}\Sigma^{-1/2}{\mathbf{X}}\big\vert>\frac{1}{2}\log(\frac{2}{\epsilon})$ and $\big\vert\widetilde{\boldsymbol{\mu}}^T{\mathbf{X}}\big\vert>\frac{1}{2}\log(\frac{2}{\epsilon})$ simultaneously hold with probability at least $1-\epsilon$ given that 
\begin{equation}
\vert \widetilde{\boldsymbol{\mu}}^T\boldsymbol{\mu}\vert>\sqrt{\log(6/\epsilon)\widetilde{\boldsymbol{\mu}}^T\Sigma\widetilde{\boldsymbol{\mu}}}+\frac{1}{2}\log(\frac{2}{\epsilon}) =\mathcal{O}\bigl(\log(1/{\epsilon})\bigr).    
\end{equation}
Defining the intersection of the above outcomes as event $E$, we can apply the law of total probability to rewrite \eqref{Prop 2: final Equation} as 
\begin{align*}
    &\mathbb{E}\biggl[\bigl\Vert \psi(\mathbf{X})- \nabla D_{(A_i)_{i=1}^k,(\mathbf{b}_i,c_i)_{i=1}^{2k}}(\mathbf{X})\bigr\Vert_2 \biggr] \nonumber \\
    \le \, &P(E)\epsilon \left( \max_i\, \Vert \boldsymbol{\mu}_i\Vert_2 +  \max_{i}\,\sqrt{\Vert\widetilde{\Sigma}_i\Sigma^{-1}_i\Vert_{\sigma}}\mathbb{E}[\Vert\mathbf{X}\Vert_2| E ]+ \max_i\, \sqrt{\Vert\widetilde{\Sigma}_i\Sigma^{-1}_i\Vert_{\sigma}}\Vert\widetilde{\boldsymbol{\mu}}_i\Vert_2\bigr) \right)  \\
    &\: +  (1-P(E)) \left( \max_i\, \Vert \boldsymbol{\mu}_i\Vert_2 +  \max_{i}\,\sqrt{\Vert\widetilde{\Sigma}_i\Sigma^{-1}_i\Vert_{\sigma}}\mathbb{E}[\Vert\mathbf{X}\Vert_2| E^{\sim} ]+ \max_i\, \sqrt{\Vert\widetilde{\Sigma}_i\Sigma^{-1}_i\Vert_{\sigma}}\Vert\widetilde{\boldsymbol{\mu}}_i\Vert_2\bigr) \right) \\
    \le \, & \epsilon \biggl( \max_i\, \Vert \boldsymbol{\mu}_i\Vert_2 +  \max_{i}\,\sqrt{\Vert\widetilde{\Sigma}_i\Sigma^{-1}_i\Vert_{\sigma}}\bigl(P(E)\mathbb{E}[\Vert\mathbf{X}\Vert_2| E ] + \mathbb{E}[\Vert\mathbf{X}\Vert_2| E^{\sim} ]\bigr)\\
    &\quad + \max_i\, \sqrt{\Vert\widetilde{\Sigma}_i\Sigma^{-1}_i\Vert_{\sigma}}\Vert\widetilde{\boldsymbol{\mu}}_i\Vert_2\bigr) \biggr) \\
    \le \, & 2\epsilon \left( \max_i\, \Vert \boldsymbol{\mu}_i\Vert_2 +  \max_{i}\,\sqrt{\Vert\widetilde{\Sigma}_i\Sigma^{-1}_i\Vert_{\sigma}}\mathbb{E}[\Vert\mathbf{X}\Vert_2] + \max_i\, \sqrt{\Vert\widetilde{\Sigma}_i\Sigma^{-1}_i\Vert_{\sigma}}\Vert\widetilde{\boldsymbol{\mu}}_i\Vert_2\bigr) \right) \numberthis
\end{align*}
where $E^{\sim}$ denotes the complement of event $E$. Note that the last inequality holds because of the tower property of expectations implying that $P(E)\mathbb{E}[\Vert\mathbf{X}\Vert_2| E ] \le \mathbb{E}[\Vert\mathbf{X}\Vert_2]$ and noting that $\mathbb{E}[\Vert\mathbf{X}\Vert_2| E^{\sim} ]\le \mathbb{E}[\Vert\mathbf{X}\Vert_2]$ since the event $E^{\sim}$ only requires bounded projected norms along the characterized vectors.

%\end{proof}

\subsection{Proof of Remark 1}
%\begin{remark*}
%In Proposition \ref{app: Prop 1}, if we also assume that the components of each GMM share the same covariance matrix, then the function $\psi$ can be approximated by the quadratic softmax-based $D_{A,(\mathbf{b}_i,c_i)_{i=1}^{2k}}$ with the same approximation guarantee:
%\begin{equation}\label{app: Eq: Discriminator_Uniform}
%    D_{A,(\mathbf{b}_i,c_i)_{i=1}^{2k}}(\mathbf{x}) = \frac{1}{2}\mathbf{x}^T A\mathbf{x}+\log\,\biggl(\,\frac{\sum_{i=1}^k \exp(\mathbf{b}_i^T\mathbf{x}+c_i)}{\sum_{i=k+1}^{2k} \exp(\mathbf{b}_i^T\mathbf{x}+c_i)}\,\biggr).
%\end{equation}
%In the case of a mixture of two symmetric Gaussians, we can further remove constant $c_i$'s and achieve the same approximation guarantee via
%\begin{equation}\label{app: Eq: Discriminator_Uniform_Binary}
%    D_{A,(\mathbf{b}_i)_{i=1}^{2k}}(\mathbf{x}) = \frac{1}{2}\mathbf{x}^T A\mathbf{x}+\log\,\bigl(\,\frac{\exp(\mathbf{b}_1^T\mathbf{x})+ \exp(\mathbf{b}_2^T\mathbf{x})}{\exp(\mathbf{b}_3^T\mathbf{x})+ \exp(\mathbf{b}_4^T\mathbf{x})}\,\bigr).
%\end{equation}
%\end{remark*}
%\begin{proof}
Note that if $\mathbf{X},\widetilde{\mathbf{X}}$ have the same covariance matrix across their components, we will have $\Sigma_i=\Sigma_j$ and $\widetilde{\Sigma}_i=\widetilde{\Sigma}_j$ for every $i,j$. Therefore, the exponents summed in $\phi_1$'s definition in Lemma \ref{app: Lemma2, Thm 1's proof} share the same quadratic term $-\frac{1}{2}\mathbf{x}^T\widetilde{\Sigma}^{1/2}_i\Sigma^{-1/2}_i\mathbf{x}$ which can be factored out from the log-sum-exp function. As a result, function $\frac{1}{2}\Vert\mathbf{x}\Vert^2_2-\phi_2(\mathbf{x})+\phi_1(\mathbf{x})$ used for approximating $\psi$ in Proposition \ref{Prop 1}'s proof can be simplified and parameterized as $D_{A,(\mathbf{b}_i,c_i)_{i=1}^{2k}}$. 

In the case of a symmetric mixture of two Gaussians, we have $\pi_1=\pi_2=\frac{1}{2}$ and also $\boldsymbol{\mu}_1=-\boldsymbol{\mu}_2$. As a result, the constant terms in approximation $D_{A,(\mathbf{b}_i,c_i)_{i=1}^{4}}$ will be equal and we have $c_1=c_2$, and $c_3=c_4$. Therefore, $D_{A,(\mathbf{b}_i,c_i)_{i=1}^{4}}(\mathbf{x})$ can be simplified to $ D_{A,(\mathbf{b}_i)_{i=1}^{4}}(\mathbf{x})+c$ for a constant $c$. However, the additive constant $c$ will be canceled out in the objective of the dual problem $\mathbb{E}[D(\mathbf{X})]-\mathbb{E}[D^c(\widetilde{\mathbf{X}})]$ and hence can be removed to reach $ D_{A,(\mathbf{b}_i)_{i=1}^{4}}$. 
%\end{proof}

\subsection{Proof of Proposition 2}
%\begin{prop*}
%Consider the discriminator function $D_{A,(\mathbf{b}_i,c_i)_{i=1}^{2k}}$ defined in \eqref{app: Eq: Discriminator_Uniform}. For constant $\eta > 0$, assume $\lambda_{\max}(A)+2\max_i \Vert \mathbf{b}_i \Vert^2_2\le \eta < 1$ where $\lambda_{\max}(\cdot)$ denotes the maximum eigenvalue. Then, given any set of vectors $(\mathbf{d}_i)_{i=1}^k$ and constants $(e_i)_{i=1}^k$ we have
%\begin{align*}
%     &\mathbb{E}\bigl[ D^c_{A,(\mathbf{b}_i,c_i)_{i=1}^{2k}}(\mathbf{X}) \bigr] \, \le \, \mathbb{E}\bigl[ D_{A,(\mathbf{b}_i,c_i)_{i=1}^{2k}}(\mathbf{X})\bigr] \\
%     &\quad + \frac{3k^2(\mathbb{E}[\Vert\mathbf{X}\Vert^2_2]+1)}{1-\eta}\biggl(\Vert A\Vert^2_2 + \sum_{i=1}^k \bigl[ \Vert\mathbf{b}_i - \mathbf{d}_i \Vert^2_2 + \Vert\mathbf{b}_{k+i} - \mathbf{d}_i \Vert^2_2 +(c_i-e_i)^2+ (c_{k+i}-e_i)^2\bigr] \biggr).
%\end{align*}
%\end{prop*}
%\begin{proof}

We first prove the following lemmas.
\begin{lemma}\label{app: Lemma1: Prop2}
For every $\mathbf{x}$, we have $\nabla_{\mathbf{x}}^2 D_{A,(\mathbf{b}_i,c_i)_{i=1}^{2k}}(\mathbf{x})\le (\lambda_{\max}(A)+2\max_i \Vert \mathbf{b}_i \Vert^2_2) I $.
\end{lemma}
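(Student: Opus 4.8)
The plan is to split $D_{A,(\mathbf{b}_i,c_i)_{i=1}^{2k}}$ into its quadratic part and two log-sum-exp parts, differentiate twice, and exploit the well-known fact that the Hessian of a log-sum-exp function is a (positive semidefinite) covariance matrix of a discrete random vector. Writing
\begin{equation}
D(\mathbf{x}) = \tfrac{1}{2}\mathbf{x}^T A\mathbf{x} + f(\mathbf{x}) - g(\mathbf{x}),\quad
f(\mathbf{x}) = \log\!\sum_{i=1}^{k} e^{\mathbf{b}_i^T\mathbf{x}+c_i},\quad
g(\mathbf{x}) = \log\!\sum_{i=k+1}^{2k} e^{\mathbf{b}_i^T\mathbf{x}+c_i},
\end{equation}
linearity of the Hessian gives $\nabla^2_{\mathbf{x}} D(\mathbf{x}) = A + \nabla^2 f(\mathbf{x}) - \nabla^2 g(\mathbf{x})$. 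So the whole proof reduces to understanding $\nabla^2 f$ and $\nabla^2 g$.

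The key observation I would use is that, defining the softmax weights $p_i(\mathbf{x}) = e^{\mathbf{b}_i^T\mathbf{x}+c_i}/\sum_{j=1}^{k} e^{\mathbf{b}_j^T\mathbf{x}+c_j}$, a direct computation yields $\nabla f(\mathbf{x}) = \sum_{i=1}^k p_i(\mathbf{x})\,\mathbf{b}_i$ and
\begin{equation}
\nabla^2 f(\mathbf{x}) = \sum_{i=1}^k p_i(\mathbf{x})\,\mathbf{b}_i\mathbf{b}_i^T - \Bigl(\sum_{i=1}^k p_i(\mathbf{x})\,\mathbf{b}_i\Bigr)\Bigl(\sum_{i=1}^k p_i(\mathbf{x})\,\mathbf{b}_i\Bigr)^T,
\end{equation}
which is exactly the covariance matrix of the random vector $\mathbf{b}_I$ where $I$ is drawn with probabilities $p_i(\mathbf{x})$; in particular $\nabla^2 f(\mathbf{x}) \succeq 0$, and identically $\nabla^2 g(\mathbf{x}) \succeq 0$. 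Since $g$ is subtracted and $\nabla^2 g \succeq 0$, I would immediately drop it to get the upper bound (in the Loewner order) $\nabla^2 D(\mathbf{x}) \preceq A + \nabla^2 f(\mathbf{x})$.

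It then remains to bound the covariance term. Discarding the (negative semidefinite) outer-product subtraction, $\nabla^2 f(\mathbf{x}) \preceq \sum_{i=1}^k p_i(\mathbf{x})\,\mathbf{b}_i\mathbf{b}_i^T$; using $\mathbf{b}_i\mathbf{b}_i^T \preceq \Vert\mathbf{b}_i\Vert_2^2\, I$ together with $\sum_i p_i(\mathbf{x}) = 1$ gives $\nabla^2 f(\mathbf{x}) \preceq \max_i \Vert\mathbf{b}_i\Vert_2^2\, I$. Combining this with $A \preceq \lambda_{\max}(A)\, I$ yields
\begin{equation}
\nabla^2_{\mathbf{x}} D(\mathbf{x}) \preceq \bigl(\lambda_{\max}(A) + \max_i \Vert\mathbf{b}_i\Vert_2^2\bigr) I \preceq \bigl(\lambda_{\max}(A) + 2\max_i \Vert\mathbf{b}_i\Vert_2^2\bigr) I,
\end{equation}
which is the claimed inequality (the clean argument in fact gives the slightly stronger constant $1$ in place of $2$). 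This argument is essentially routine once the covariance structure is recognized; the only point requiring care is the handling of the Loewner ordering, namely correctly justifying that $\nabla^2 g \succeq 0$ may be dropped and that the weighted sum $\sum_i p_i \mathbf{b}_i\mathbf{b}_i^T$ is controlled uniformly over $\mathbf{x}$ by its largest summand. I expect no genuine obstacle beyond keeping these matrix inequalities straight.
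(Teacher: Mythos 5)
Your proposal is correct and follows essentially the same route as the paper: decompose $D$ into the quadratic term plus a difference of log-sum-exp functions, identify each log-sum-exp Hessian as a softmax covariance matrix, and bound it by $\max_i \Vert\mathbf{b}_i\Vert_2^2\, I$ (the paper does this last step via $\lambda_{\max}\le\operatorname{Tr}$ for PSD matrices rather than your direct bound $\mathbf{b}_i\mathbf{b}_i^T\preceq\Vert\mathbf{b}_i\Vert_2^2 I$, but this is immaterial). Your observation that dropping the subtracted PSD Hessian yields the sharper constant $1$ in place of $2$ is also consistent with the paper's argument, which likewise only genuinely needs the numerator term; the paper's factor $2$ is simply a conservative slack that does not affect how the lemma is used.
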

\begin{proof}
Since the Hessian of $\frac{1}{2}\mathbf{x}^TA\mathbf{x}$ is $A$, we only need to show for $g_{(\mathbf{b}_i,c_i)_{i=1}^{k}}(\mathbf{x}):=\log(\sum_{i=1}^k \exp(b_i^T\mathbf{x}+c_i))$ we have $\nabla^2 g_{(\mathbf{b}_i,c_i)_{i=1}^{k}}(\mathbf{x}) \le \max_i \Vert \mathbf{b}_i\Vert^2_2$. To show this, note that defining multinomial distribution $q_i(\mathbf{x}):=\frac{\exp(\mathbf{b}^T_i\mathbf{x}+c_i)}{\sum_{j=1}^k \exp(\mathbf{b}^T_j\mathbf{x}+c_j)}$ we have 
\begin{align*}
 \nabla^2 g_{(\mathbf{b}_i,c_i)_{i=1}^{k}}(\mathbf{x}) &= \sum_{i} q_i(\mathbf{x})  \mathbf{b}_i\mathbf{b}^T_i - \sum_{i,j} q_i(\mathbf{x})q_j(\mathbf{x})  \mathbf{b}_i\mathbf{b}^T_j \\
 &= \sum_{i} q_i(\mathbf{x})  \mathbf{b}_i\mathbf{b}^T_i - \big(\sum_i q_i(\mathbf{x}) \mathbf{b}_i \bigr)\bigl(\sum_i q_i(\mathbf{x}) \mathbf{b}_i\bigr)^T\numberthis
\end{align*}
Therefore, since $\sum_i q_i(\mathbf{x})=1$ we have
\begin{align*}
    \operatorname{Tr}(\nabla^2 g_{(\mathbf{b}_i,c_i)_{i=1}^{k}}(\mathbf{x})) &\le \operatorname{Tr}\bigl(\sum_{i} q_i(\mathbf{x})  \mathbf{b}_i\mathbf{b}^T_i \bigr) \\
    &= \sum_i q_i(\mathbf{x}) \Vert \mathbf{b}_i \Vert^2_2 \\
    &\le \max_i \,\Vert\mathbf{b}_i \Vert^2_2.\numberthis
\end{align*}
Noting that for every positive semi-definite matrix $B$, $\lambda_{\max}(B)\le \operatorname{Tr}(B)$, the proof is complete.
\end{proof}
\begin{lemma}\label{app: Lemma 2: Prop2}
For vectors $(\mathbf{d}_i)_{i=1}^k$ and constants $(e_i)_{i=1}^k$ we have
\begin{align*}
 &\bigl\Vert \nabla_{\mathbf{x}} D_{A,(\mathbf{b}_i,c_i)_{i=1}^{2k}}(\mathbf{x}) \bigr\Vert^2_2 \\
 \le\, &  3k^2(1+\Vert\mathbf{x}\Vert_2)^2\biggl(\Vert A\Vert^2_{\sigma} + \sum_{i=1}^k \bigl[ \Vert\mathbf{b}_i - \mathbf{d}_i \Vert^2_2 + \Vert\mathbf{b}_{k+i} - \mathbf{d}_i \Vert^2_2 +( c_i-e_i)^2 + \vert (c_{k+i}-e_i)^2\bigr] \biggr). \numberthis 
\end{align*}
\end{lemma}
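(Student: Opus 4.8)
The plan is to differentiate $D$ in closed form and then measure its gradient against the reference parameters $(\mathbf{d}_i,e_i)$, exploiting that the gradient vanishes when $A=0$ and $(\mathbf{b}_i,c_i)=(\mathbf{d}_i,e_i)$ for every $i$ (in that case numerator and denominator softmaxes coincide and $D\equiv 0$). Introducing the softmax weights $q_i(\mathbf{x})=\exp(\mathbf{b}_i^T\mathbf{x}+c_i)/\sum_{j=1}^k\exp(\mathbf{b}_j^T\mathbf{x}+c_j)$ for the numerator block and $r_{k+i}(\mathbf{x})$ analogously for the denominator block, the gradient is
\[
\nabla_{\mathbf{x}} D_{A,(\mathbf{b}_i,c_i)_{i=1}^{2k}}(\mathbf{x}) = A\mathbf{x} + \sum_{i=1}^k q_i(\mathbf{x})\mathbf{b}_i - \sum_{i=1}^k r_{k+i}(\mathbf{x})\mathbf{b}_{k+i}.
\]
Since $\sum_{i=1}^k q_i=\sum_{i=1}^k r_{k+i}=1$, I would subtract the reference vectors inside each weighted average and regroup into three pieces,
\[
\nabla_{\mathbf{x}} D = A\mathbf{x} + \sum_{i=1}^k q_i(\mathbf{b}_i-\mathbf{d}_i) - \sum_{i=1}^k r_{k+i}(\mathbf{b}_{k+i}-\mathbf{d}_i) + \sum_{i=1}^k\bigl(q_i-r_{k+i}\bigr)\mathbf{d}_i,
\]
and then bound $\Vert\nabla_{\mathbf{x}}D\Vert_2^2$ by $3$ times the sum of the squared norms of the three groups via $\Vert u+v+w\Vert_2^2\le 3(\Vert u\Vert_2^2+\Vert v\Vert_2^2+\Vert w\Vert_2^2)$, which supplies the factor $3$.

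The first two groups are routine. For $A\mathbf{x}$ I would use $\Vert A\mathbf{x}\Vert_2\le\Vert A\Vert_\sigma\Vert\mathbf{x}\Vert_2$. For the two weighted-deviation sums I would use convexity of $\Vert\cdot\Vert_2^2$ together with $\sum_i q_i=1$ (the same averaging idea behind the trace computation in the preceding Hessian lemma) to bound them by $\max_i\Vert\mathbf{b}_i-\mathbf{d}_i\Vert_2^2$ and $\max_i\Vert\mathbf{b}_{k+i}-\mathbf{d}_i\Vert_2^2$, after which a single Cauchy–Schwarz pass over the $k$ indices converts these maxima into the full sums $\sum_i\Vert\mathbf{b}_i-\mathbf{d}_i\Vert_2^2$ and $\sum_i\Vert\mathbf{b}_{k+i}-\mathbf{d}_i\Vert_2^2$ at the cost of one factor of $k$, already matching the target form with the $(1+\Vert\mathbf{x}\Vert_2)^2$ envelope absorbing the trivial scaling.

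The \emph{main obstacle} is the last group $\sum_{i=1}^k (q_i-r_{k+i})\mathbf{d}_i$, which is the only place where the constants $c_i,e_i$ must enter and where the reference vectors $\mathbf{d}_i$ appear explicitly. This term vanishes exactly when the two softmax distributions coincide, so I would control it through the Lipschitz dependence of the softmax map on its logits: the logit gap at index $i$ is $(\mathbf{b}_i-\mathbf{b}_{k+i})^T\mathbf{x}+(c_i-c_{k+i})$, which the triangle inequality splits into contributions from $\Vert\mathbf{b}_i-\mathbf{d}_i\Vert_2$ and $\Vert\mathbf{b}_{k+i}-\mathbf{d}_i\Vert_2$ (scaled by $\Vert\mathbf{x}\Vert_2$) and from $|c_i-e_i|$ and $|c_{k+i}-e_i|$; summing these $k$ logit deviations and applying Cauchy–Schwarz across $i$ a second time yields the remaining factor of $k$ (hence the overall $k^2$), the $(c_i-e_i)^2$ and $(c_{k+i}-e_i)^2$ terms, and the $(1+\Vert\mathbf{x}\Vert_2)^2$ factor. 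The delicate point here is that the naive estimate $\sum_i|q_i-r_{k+i}|\,\Vert\mathbf{d}_i\Vert_2$ would leave free $\Vert\mathbf{d}_i\Vert_2$ factors, which are absent from the right-hand side; the argument must instead pair the softmax discrepancy directly with the logit deviations so that the transported mass is itself of the order of the parameter deviations, keeping the $\mathbf{d}_i$-factors absorbed. Once this softmax-discrepancy estimate is in hand, reassembling the three groups through the factor-$3$ split gives the stated inequality.
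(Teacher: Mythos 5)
Your gradient formula and factor-$3$ split are sound, and your decomposition is close in spirit to the paper's, which writes $\nabla D = A\mathbf{x} + \sum_{i=1}^k[q_i(\mathbf{x})\mathbf{b}_i - p_i(\mathbf{x})\mathbf{d}_i] - \sum_{i=1}^k[q_{k+i}(\mathbf{x})\mathbf{b}_{k+i} - p_i(\mathbf{x})\mathbf{d}_i]$ with $p_i$ the softmax of the reference logits $\mathbf{d}_i^T\mathbf{x}+e_i$, and bounds each bracket by a mean-value estimate: it computes the Jacobian of $(\mathbf{b}_j,c_j)\mapsto \sum_i q_i(\mathbf{x})\mathbf{b}_i$ and integrates along the segment from $(\mathbf{d}_j,e_j)$ to $(\mathbf{b}_j,c_j)$, the Jacobian's spectral norm being at most $1+(k-1)(1+\Vert\mathbf{x}\Vert_2)\max_i\Vert\mathbf{b}_i\Vert_2$. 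The genuine gap in your proposal is the treatment of the third group $\sum_{i=1}^k(q_i - r_{k+i})\mathbf{d}_i$. You correctly diagnose that the naive bound leaves free $\Vert\mathbf{d}_i\Vert_2$ factors, but the remedy you posit --- pairing the softmax discrepancy with the logit deviations so that the $\mathbf{d}_i$-factors are ``absorbed'' with no boundedness assumption --- cannot exist, because the inequality is false for unbounded $\mathbf{d}_i$ when $k\ge 2$. Take $A=0$, $\mathbf{x}=\mathbf{0}$, $\mathbf{b}_i=\mathbf{b}_{k+i}=\mathbf{d}_i$, $e_i=c_{k+i}=0$ for all $i$, $c_i=0$ for $i\ge 2$, and $c_1=\epsilon$; set $\mathbf{d}_1=R\mathbf{v}$ with $\Vert\mathbf{v}\Vert_2=1$ and $\mathbf{d}_2=\cdots=\mathbf{d}_k=\mathbf{0}$. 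Then the right-hand side equals $3k^2\epsilon^2$, while exactly $\nabla D(\mathbf{0}) = \bigl(\frac{e^{\epsilon}}{e^{\epsilon}+k-1}-\frac{1}{k}\bigr)R\mathbf{v}$, whose squared norm is of order $\epsilon^2 R^2(k-1)^2/k^4$ for small $\epsilon$ and exceeds $3k^2\epsilon^2$ once $R>\sqrt{3}\,k^3/(k-1)$. So the discrepancy $q_i-r_{k+i}$ is indeed of the order of the logit deviations, but the weights $\mathbf{d}_i$ genuinely multiply it, and no rearrangement removes them.

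What the paper actually does at this juncture is insert the normalization $\max_i\Vert\mathbf{b}_i\Vert_2\le 1$ and $\max_i\Vert\mathbf{d}_i\Vert_2\le 1$ inside the proof (a hypothesis consistent with Proposition 2's assumption $\lambda_{\max}(A)+2\max_i\Vert\mathbf{b}_i\Vert_2^2\le\eta<1$ and with the $\mathbf{d}_i$ being designer-chosen directions, though strictly it should appear in the lemma statement too). Once you grant that same normalization, your route closes without any delicate absorption: the estimate you dismissed as naive suffices, since
\begin{equation*}
\Bigl\Vert \sum_{i=1}^k(q_i-r_{k+i})\mathbf{d}_i\Bigr\Vert_2 \;\le\; \sum_{i=1}^k\vert q_i-r_{k+i}\vert \;\le\; 2\max_{i}\bigl\vert(\mathbf{b}_i-\mathbf{b}_{k+i})^T\mathbf{x} + c_i - c_{k+i}\bigr\vert,
\end{equation*}
using that softmax is $2$-Lipschitz from $\ell_\infty$ logits to $\ell_1$ outputs, and each logit gap splits through $(\mathbf{d}_i,e_i)$ into the four deviations on the right-hand side (your constants then land within the $3k^2(1+\Vert\mathbf{x}\Vert_2)^2$ envelope for all but the smallest $k$, where a slightly sharper per-index bound is needed). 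Two minor points: passing from $\max_i$ to $\sum_i$ costs nothing (no Cauchy--Schwarz or factor of $k$ is needed there), and your observation that $D\equiv 0$ at the reference parameters is correct but plays no role in the actual estimate.
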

\begin{proof}
Defining $q_i(\mathbf{x}):=\frac{\exp(\mathbf{b}_i^T\mathbf{x}+c_i)}{\sum_{j=1}^k\exp(\mathbf{b}_j^T\mathbf{x}+c_j)}$ for $1\le i\le k$ and $q_i(\mathbf{x}):=\frac{\exp(\mathbf{b}_i^T\mathbf{x}+c_i)}{\sum_{j=k+1}^{2k}\exp(\mathbf{b}_j^T\mathbf{x}+c_j)}$ for $k+1\le i\le 2k$, we have
\begin{equation}
  \nabla_{\mathbf{x}} D_{A,(\mathbf{b}_i,c_i)_{i=1}^{2k}}(\mathbf{x}) = A\mathbf{x} + \sum_{i=1}^k q_i(\mathbf{x})\mathbf{b}_i -  \sum_{i=k+1}^{2k} q_i(\mathbf{x})\mathbf{b}_i. 
\end{equation}
Therefore, if we define $p_i(\mathbf{x}):=\frac{\exp(\mathbf{d}_i^T\mathbf{x}+e_i)}{\sum_{j=1}^k\exp(\mathbf{d}_j^T\mathbf{x}+e_j)}$ for $1\le i\le k$, we can write
\begin{equation}\label{app: Prop2: Lemma 2 Eq 1}
  \nabla_{\mathbf{x}} D_{A,(\mathbf{b}_i,c_i)_{i=1}^{2k}}(\mathbf{x}) = A\mathbf{x} + \sum_{i=1}^k\bigl[ q_i(\mathbf{x})\mathbf{b}_i - p_i(\mathbf{x})\mathbf{d}_i\bigr] -  \sum_{i=1}^{k} \bigl[ q_{k+i}(\mathbf{x})\mathbf{b}_{k+i} - p_i(\mathbf{x})\mathbf{d}_i\bigr]. 
\end{equation}
To bound the norm of $\sum_{i=1}^k\bigl[ q_i(\mathbf{x})\mathbf{b}_i - p_i(\mathbf{x})\mathbf{d}_i\bigr]$, consider the Jacobian matrix of $\sum_{i=1}^k q_i(\mathbf{x})\mathbf{b}_i$ with respect to vector $\mathbf{b}_j$ and constant $c_j$ which can be written as
\begin{align*}
    \mathbf{J}_{\nabla_{\mathbf{x}} D_{A,(\mathbf{b}_i,c_i)_{i=1}^{2k}}}(\mathbf{b}_j) &= q_j(\mathbf{x})I+\sum_{i=1}^k q_i(\mathbf{x})(1- q_j(\mathbf{x}))\mathbf{b}^T_i\mathbf{x} \numberthis\\
    \mathbf{J}_{\nabla_{\mathbf{x}} D_{A,(\mathbf{b}_i,c_i)_{i=1}^{2k}}}(c_j) &= (1- q_j(\mathbf{x}))\sum_{i=1}^k q_i(\mathbf{x})\mathbf{b}^T_i \numberthis
\end{align*}
Consequently, the spectral norm of the Jacobian of $\sum_{i=1}^k q_i(\mathbf{x})\mathbf{b}_i$ with respect to the concatenation $\operatorname{vec}([\mathbf{b}_1,\ldots , \mathbf{b}_k,c_1,\ldots,c_k])$ is bounded by
\begin{align*}
    &\sum_{j=1}^k\biggl[ \big\Vert q_j(\mathbf{x})I+\sum_{i=1}^k q_i(\mathbf{x})(1- q_j(\mathbf{x}))\mathbf{b}^T_i\mathbf{x} \big\Vert_{\sigma}+ \big\Vert\sum_{i=1}^k q_i(\mathbf{x})(1- q_j(\mathbf{x}))\mathbf{b}_i \big\Vert_{\sigma}\biggr]\\
    \le\: & \sum_{j=1}^k \bigl[ q_j(\mathbf{x}) +(1-q_j(\mathbf{x}))(\Vert\mathbf{x}\Vert_2+1)\sum_{i=1}^k q_i(\mathbf{x})\Vert\mathbf{b}_i\Vert_2 \bigr]\\
    \le \: &1 + (k-1)(\Vert\mathbf{x}\Vert_2+1)\max_i\,\Vert\mathbf{b}_i\Vert_2. \numberthis
\end{align*}
Therefore, assuming $\max_i \Vert \mathbf{b}_i\Vert_2\le 1$ and $\max_i \Vert \mathbf{d}_i\Vert_2\le 1$ we have:
\begin{align*}
 \big\Vert \sum_{i=1}^k\bigl[ q_i(\mathbf{x})\mathbf{b}_i - p_i(\mathbf{x})\mathbf{d}_i\bigr] \big\Vert_2 \, &\le \, k(1 + \Vert\mathbf{x}\Vert_2)\sqrt{\sum_{i=1}^k\bigl[\Vert\mathbf{b}_i-\mathbf{d}_i\Vert^2_2+(c_i-e_i)^2\bigr]}, \numberthis \\
 \big\Vert \sum_{i=1}^k\bigl[ q_{k+i}(\mathbf{x})\mathbf{b}_{k+i} - p_i(\mathbf{x})\mathbf{d}_i\bigr] \big\Vert_2 \, &\le \, k(1 + \Vert\mathbf{x}\Vert_2)\sqrt{\sum_{i=1}^k\bigl[\Vert\mathbf{b}_{k+i}-\mathbf{d}_i\Vert^2_2+(c_{k+i}-e_i)^2\bigr]}. \numberthis
\end{align*}
Combining the above bounds with \eqref{app: Prop2: Lemma 2 Eq 1} and noting that for any three vectors $\mathbf{a},\mathbf{b},\mathbf{c}$ we have $\Vert\mathbf{a}+\mathbf{b}+\mathbf{c}\Vert^2_2\le 3(\Vert\mathbf{a}\Vert_2^2+\Vert\mathbf{b}\Vert_2^2+\Vert\mathbf{c}\Vert_2^2)$, we obtain 
\begin{align*}
 &\bigl\Vert \nabla_{\mathbf{x}} D_{A,(\mathbf{b}_i,c_i)_{i=1}^{2k}}(\mathbf{x}) \bigr\Vert^2_2 \\
 \le\, &  3\biggl(\Vert A\mathbf{x}\Vert^2_{2} + k^2(1+\Vert\mathbf{x}\Vert_2)^2\sum_{i=1}^k \bigl[ \Vert\mathbf{b}_i - \mathbf{d}_i \Vert^2_2 + \Vert\mathbf{b}_{k+i} - \mathbf{d}_i \Vert^2_2 +( c_i-e_i)^2 +  (c_{k+i}-e_i)^2\bigr] \biggr).  \numberthis
\end{align*}
The lemma is an immediate result of the above inequality, noting that $\Vert A \mathbf{x}\Vert_2 \le \Vert A\Vert_{\sigma}\Vert\mathbf{x}\Vert_2$.
\end{proof}
According to Lemma \ref{app: Lemma1: Prop2} and the proposition's assumptions,
\begin{align*}
    D^c_{A,(\mathbf{b}_i,c_i)_{i=1}^{2k}}(\mathbf{x}) &:= \, \max_{\mathbf{u}}\: D_{A,(\mathbf{b}_i,c_i)_{i=1}^{2k}}(\mathbf{x}+\mathbf{u}) - \frac{1}{2}\Vert \mathbf{u}\Vert^2_2 \\
    &\le \, \max_{\mathbf{u}}\: D_{A,(\mathbf{b}_i,c_i)_{i=1}^{2k}}(\mathbf{x})+\mathbf{u}^T\nabla D_{A,(\mathbf{b}_i,c_i)_{i=1}^{2k}}(\mathbf{x}) - \frac{1-\eta}{2}\Vert \mathbf{u}\Vert^2_2 \\
    &= \, D_{A,(\mathbf{b}_i,c_i)_{i=1}^{2k}}(\mathbf{x})+\frac{\bigl\Vert \nabla D_{A,(\mathbf{b}_i,c_i)_{i=1}^{2k}}(\mathbf{x}) \bigr\Vert^2_2}{2(1-\eta)}.\numberthis
\end{align*}
Applying Lemma \ref{app: Lemma 2: Prop2} to the above inequality shows that
\begin{align*}
    &D^c_{A,(\mathbf{b}_i,c_i)_{i=1}^{2k}}(\mathbf{x}) \le D_{A,(\mathbf{b}_i,c_i)_{i=1}^{2k}}(\mathbf{x}) \\
    &\quad +\frac{3k^2(1+\Vert\mathbf{x}\Vert_2)^2\bigl(\Vert A\Vert^2_{\sigma} + \sum_{i=1}^k \bigl[ \Vert\mathbf{b}_i - \mathbf{d}_i \Vert^2_2 + \Vert\mathbf{b}_{k+i} - \mathbf{d}_i \Vert^2_2 +( c_i-e_i)^2 +  (c_{k+i}-e_i)^2\bigr] \bigr)}{2(1-\eta)}.\numberthis
\end{align*}
Considering that $\frac{(1+\Vert\mathbf{x}\Vert_2)^2}{2}\le 1+\Vert\mathbf{x}\Vert^2_2$ and $\Vert A\Vert_{\sigma}\le \Vert A\Vert_{F}$, Proposition 2 directly follows from the above inequality.
%\end{proof}

\subsection{Proof of Theorem 2}
%\begin{thm*}
%Consider the GAT-GMM minimax problem in the main text with the additional constraint that $\Vert \Lambda \Vert^2_F + \max_i \Vert \boldsymbol{\mu}_i\Vert_2^2 +1 \le \eta$. Assume that $\mathbb{E}[\Vert \mathbf{X} \Vert^2_2] \le \eta < \frac{\lambda}{2}$. Then, the $(1,1)$-gradient descent ascent optimization algorithm with maximization stepsize $\alpha_{\max}=\frac{1}{\lambda+2\eta}$ and minimization stepsize $\alpha_{\min}= \frac{1}{\kappa^2L}$ for $L=2\lambda+4\eta + 10(k+1)\bigl(\frac{\eta}{\lambda} +\max_i\Vert\mathbf{d}_i\Vert^2_2 +  \bigr)$ and $\kappa=\frac{L}{\lambda-2\eta}$ will find an approximate stationary point such that $\bigl\Vert \nabla_{\operatorname{vec}(\Lambda,(\boldsymbol{\mu}_i)_{i=1}^k)} \mathcal{L}\bigl(\Lambda,(\boldsymbol{\mu}_i)_{i=1}^k\bigr) \bigr\Vert_2 \le \epsilon$ over $\mathcal{O}\bigl(\frac{\kappa L ((2\eta/\lambda)^2 + \kappa)}{\epsilon^2} \bigr)$ iterations.
%\end{thm*}
%\begin{proof}

For simplicity, we merge $\mathbf{b}_i$ and $c_i$ to get $\mathbf{b}'_i=[\mathbf{b}_i,c_i]$ for each $i$ by adding a constant feature $1$ to get feature vector $\mathbf{X}'=[\mathbf{X},1]$. Under the theorem's assumption, the maximization's objective will be $(\lambda-2\eta)$-strongly concave in its variables, since the trace and hence the maximum eigenvalue of $h(\mathbf{z})=\log(\sum_{i=1}\exp(z_i))$'s Hessian is bounded by $1$. Thus, the term $\mathbb{E}[h(\mathbf{B}\mathbf{X})]$ for matrix $B=\bigl[\mathbf{b}'_1;\cdots ;\mathbf{b}'_k\bigr]$ has a $(\mathbb{E}[\Vert\mathbf{X} \Vert^2_2]+1)$-Lipschitz derivative with respect to $\mathbf{B}$. Also, note that $\mathbb{E}[\Vert G_{\Lambda,\boldsymbol{\mu}}(\mathbf{Z})\Vert^2_2]\le \Vert \Lambda \Vert^2_F + \max_i \Vert \boldsymbol{\mu}_i\Vert_2^2 +1 \le \eta$. As a result, the maximization objective is a $(\lambda-2\eta)$-strongly concave function which has a $(\lambda+2\eta)$-Lipschitz derivative. Hence, $\kappa$ results in an upper-bound on the condition number of this maximization problem. Furthermore, since $\mathbf{1}^T\nabla h(\mathbf{z})=1$ the optimal $\mathbf{B}'$ 's Frobenius distance to $[[\mathbf{d}_1,e_1];\cdots;[\mathbf{d}_k,e_k]]$ is bounded by $\frac{\mathbb{E}[\Vert\mathbf{X}'\Vert_2]+\mathbb{E}[\Vert G(\mathbf{Z})'\Vert_2]}{\lambda}\le \frac{2{\eta}}{\lambda}$. 

Regarding the minimization variable, we note that according to Lemma \ref{app: Lemma1: Prop2} the minimax objective's derivative with respect to the generator's parameters $\Lambda,(\boldsymbol{\mu}_i)_{i=1}^k$ provides a Lipschitz function, considering the Frobenius distance, with its Lipschitz constant upper-bounded by 
\begin{align*}
    (1+\mathbb{E}[\Vert\mathbf{Z}\Vert^2_2])\bigl(\lambda_{\max}(A)+2\max_i\Vert\mathbf{b}_i\Vert^2_2\bigr) &= (k+1)\bigl(\lambda_{\max}(A)+2\max_i\Vert\mathbf{b}_i\Vert^2_2\bigr) \\
    &\stackrel{(a)}{\le} (k+1)\bigl(\frac{\eta}{\lambda} +4\max_i\Vert\mathbf{d}_i\Vert^2_2+4\frac{\eta}{\lambda} \bigr) \\
    &\le 5(k+1)\bigl(\frac{\eta}{\lambda} +\max_i\Vert\mathbf{d}_i\Vert^2_2 \bigr)
\end{align*}
where $(a)$ holds because the optimal $A$ and $\mathbf{b}_i$'s satisfy the following inequalities
\begin{align*}
    &\lambda_{\max}(A)= \frac{1}{\lambda}\lambda_{\max} \bigl(\mathbb{E}[\mathbf{X}\mathbf{X}^T] - \mathbb{E}[G(\mathbf{Z})G(\mathbf{Z})^T] \bigr) \le \frac{\eta}{\lambda}, \\
    &\Vert\mathbf{b}_i\Vert^2_2\le 2(\Vert\mathbf{b}_i-\mathbf{d}_i\Vert^2_2 +\Vert\mathbf{d}_i\Vert^2_2)\le 2\frac{\eta}{\lambda}+2\max_i \Vert\mathbf{d}_i\Vert^2_2.
\end{align*}
As a result, the minimax objective is $5(k+1)\bigl(\frac{\eta}{\lambda} +\max_i\Vert\mathbf{d}_i\Vert^2_2 \bigr)$-smooth in the generator's variables. Therefore, the objective will be $L$-smooth with respect to the vector containing both the minimization and maximization variables given 
\begin{equation*}
    L = 2\lambda+4\eta + 10(k+1)\bigl(\frac{\eta}{\lambda} +\max_i\Vert\mathbf{d}_i\Vert^2_2  \bigr).
\end{equation*}
Having the strong convexity-degree bounded by $\lambda-2\eta$, condition number $\kappa=\frac{\lambda+2\eta}{\lambda-2\eta}$, and diameter of the feasible set for the maximization problem $\frac{2{\eta}}{\lambda}$, and also the smoothness coefficient $L$ in the non-convex strongly-concave optimization problem, the theorem follows from Theorem 4.4 in \cite{lin2019gradient}.
%\end{proof}

\subsection{Proof of Theorem 3}
%\begin{thm*}
%Consider the minimax problem in \eqref{app: Eq: GMGAN Binary Case} for learning a symmetric two-component GMM $\frac{1}{2}\mathcal{N}(\boldsymbol{\mu}_{\mathbf{X}},\Sigma_{\mathbf{X}})+\frac{1}{2}\mathcal{N}(-\boldsymbol{\mu}_{\mathbf{X}},\Sigma_{\mathbf{X}})$. Suppose that $(\boldsymbol{\mu}_{\mathbf{X}},\Sigma_{\mathbf{X}})$ satisfies Condition \ref{Condition: SNR} along vector $\mathbf{c}$. Then, $(\boldsymbol{\mu},\Sigma)=(\boldsymbol{\mu}_{\mathbf{X}},\Sigma_{\mathbf{X}})$ is the only $\mathcal{L}(G_{\Lambda,\boldsymbol{\mu}})$'s minimum solution satisfying Condition \ref{Condition: SNR} along $\mathbf{c}$.  
%\end{thm*}
%\begin{proof}

\begin{lemma}\label{app: Lemma1 Thm3}
Considering $\tanh(\mathbf{x})=\frac{\exp(\mathbf{x})-\exp(-\mathbf{x})}{\exp(\mathbf{x})+\exp(-\mathbf{x})}$ and a scalar Gaussian variable $X$, we have
\begin{equation}
    \mathbb{E}[X]\mathbb{E}[\tanh(X)] - \mathbb{E}[X]^2\mathbb{E}[\tanh'(X)] \, \ge\, 0
\end{equation}
where the inequality holds with equality if and only if $\mathbb{E}[X]=0$. 
\end{lemma}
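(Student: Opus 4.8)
The plan is to parametrize $X \sim \mathcal{N}(m,\sigma^2)$ and reduce the claim to a concavity property of a single scalar function. Writing $g(m) := \mathbb{E}[\tanh(X)] = \mathbb{E}_{Z\sim\mathcal{N}(0,1)}[\tanh(m+\sigma Z)]$, the boundedness of $\tanh$ and all of its derivatives justifies differentiation under the expectation, so that $\mathbb{E}[\tanh'(X)] = g'(m)$ and $\mathbb{E}[\tanh''(X)] = g''(m)$. The quantity to be bounded then becomes
\[
 E(m) := m\,\mathbb{E}[\tanh(X)] - m^2\,\mathbb{E}[\tanh'(X)] = m\bigl(g(m) - m\,g'(m)\bigr).
\]
Since $\tanh$ is odd, $g$ is odd and $g'$ is even, from which $E(-m)=E(m)$; hence it suffices to treat $m\ge 0$. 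For $m=0$ every term carries a factor of $m$, so $E(0)=0$, which will supply the equality case.

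For $m>0$ I would establish strict concavity of $g$ on $(0,\infty)$ and read the inequality off from it. Concretely, $g''(m) = \mathbb{E}[\tanh''(X)] = -2\,\mathbb{E}[\tanh(X)\tanh'(X)]$, so strict concavity is equivalent to $\mathbb{E}[\tanh(X)\tanh'(X)]>0$ for $m>0$. The key observation is that $\phi(x):=\tanh(x)\tanh'(x)$ is an odd function that is strictly positive on $(0,\infty)$. Symmetrizing the Gaussian integral about the origin,
\[
 \mathbb{E}[\phi(X)] = \int_0^\infty \phi(y)\bigl(p_m(y)-p_m(-y)\bigr)\,dy,
\]
where $p_m$ is the $\mathcal{N}(m,\sigma^2)$ density; for $y,m>0$ one has $(y-m)^2<(y+m)^2$, so $p_m(y)-p_m(-y)>0$, and together with $\phi(y)>0$ this forces $\mathbb{E}[\phi(X)]>0$ and hence $g''<0$ on $(0,\infty)$.

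Finally I would combine these facts. Strict concavity makes $g'$ strictly decreasing on $(0,\infty)$, so for $m>0$,
\[
 g(m) - m\,g'(m) = \bigl(g(m)-g(0)\bigr) - m\,g'(m) = \int_0^m\bigl(g'(t)-g'(m)\bigr)\,dt > 0,
\]
using $g(0)=0$ and $g'(t)>g'(m)$ for $t\in(0,m)$. Therefore $E(m)=m\bigl(g(m)-m\,g'(m)\bigr)>0$ for every $m>0$, and by evenness for every $m\neq 0$, while $E(0)=0$; this is exactly the asserted inequality with its equality case. The main obstacle is the strict positivity $\mathbb{E}[\tanh(X)\tanh'(X)]>0$ that drives the strict concavity; everything else is routine once the symmetrization argument — comparing the Gaussian density at $\pm y$ — is in place.
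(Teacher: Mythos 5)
Your proof is correct, and while it shares the paper's overall skeleton---write $X=m+\sigma Z$, factor the target quantity as $m\bigl(g(m)-m\,g'(m)\bigr)$ with $g(m)=\mathbb{E}[\tanh(m+\sigma Z)]$, and reduce everything to the sign of $\mathbb{E}[\tanh''(X)]$---your execution of the key step is genuinely different and in fact cleaner than the paper's. The paper differentiates $g_\sigma(\mu):=g(\mu)-\mu g'(\mu)$, obtains $g_\sigma'(\mu)=-\mu\,\mathbb{E}[\tanh''(\mu+\sigma Z)]$, and then signs $\mathbb{E}[\tanh'']$ by pairing $z$ with $-z$, i.e.\ symmetrizing about the mean; but the paired integrand $\tanh''(\mu+\sigma z)+\tanh''(\mu-\sigma z)$ does not have a constant sign, and the paper's displayed conclusion $\mathbb{E}[\tanh''(\mu+\sigma Z)]>0$ for $\mu>0$ is a sign slip (it contradicts the very next line of the paper, which needs $g_\sigma'(\mu)=-\mu\,\mathbb{E}[\tanh'']>0$; the true sign is negative). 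Your route---rewriting $\tanh''=-2\tanh\cdot\tanh'$ and symmetrizing about the origin, where both factors of the integrand are pointwise positive for $y,m>0$, namely $\phi(y)=\tanh(y)\tanh'(y)>0$ and $p_m(y)-p_m(-y)>0$ since $(y-m)^2<(y+m)^2$---delivers the correct sign $g''(m)=\mathbb{E}[\tanh''(X)]<0$ with a single-signed integrand, exactly repairing the step the paper fumbles. Your endgame, $g(m)-m\,g'(m)=\int_0^m\bigl(g'(t)-g'(m)\bigr)\,dt>0$ from strict concavity, is equivalent in content to the paper's monotonicity of $g_\sigma$ but avoids differentiating $g_\sigma$ altogether. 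One pedantic caveat: for a degenerate Gaussian ($\sigma=0$) your density comparison is vacuous, but there the claim is just $m\tanh(m)>m^2\tanh'(m)$ for $m\neq 0$, which is immediate from strict concavity of $\tanh$ on $(0,\infty)$, so nothing is lost.
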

\begin{proof}
Without loss of generality, we suppose $X=\mu+\sigma Z$ where we denote $X$'s mean and standard deviation using $\mu$ and $\sigma$ and $Z\sim N(0,1)$ has a standard Gaussian distribution. Note that $\tanh$ is an odd function, and therefore its derivative $\tanh'$ is even. Since for $\mu=0$ $\mathbf{X}$'s density function is symmetric around zero, if $\mu=0$ we have
\begin{equation}
    \mathbb{E}[\tanh(X)] - \mathbb{E}[X]\mathbb{E}[\tanh'(X)]=0.
\end{equation}
Note that in general
\begin{equation}
   \mathbb{E}[\tanh(X)] - \mathbb{E}[X]\mathbb{E}[\tanh'(X)] = E[\tanh(\mu+\sigma Z)] - \mu\mathbb{E}[\tanh'(\mu+\sigma Z)].
\end{equation}
\textbf{Claim:} $g_{\sigma}(\mu):=E[\tanh(\mu+\sigma Z)] - \mu\mathbb{E}[\tanh'(\mu+\sigma Z)]$ is strictly increasing with $\mu$.

To prove this claim, we consider $g_{\sigma}$'s derivative:
\begin{equation}
  g_{\sigma}'(\mu) =  E[\tanh'(\mu+\sigma Z)] - \mathbb{E}[\tanh'(\mu+\sigma Z)] - \mu \mathbb{E}[\tanh''(\mu+\sigma Z)] = - \mu \mathbb{E}[\tanh''(\mu+\sigma Z)].
\end{equation}
However, $\tanh''$ is an odd function which takes negative values for positive inputs. As a result, $\mathbb{E}[\tanh''(\mu+\sigma Z)]>0$ for any $\mu >0 $, because
\begin{align*}
   \mathbb{E}[\tanh''(\mu+\sigma Z)] &= \int_{-\infty}^{\infty} p_Z(z)\tanh''(\mu+\sigma z)\mathbf{d}z\\
   &= \int_{0}^{\infty} p_Z(z)\bigl(\tanh''(\mu+\sigma z) + \tanh''(\mu-\sigma z) \bigr) \mathbf{d}z\\
    &> 0. \numberthis
\end{align*}
Similarly, one can show $\mathbb{E}[\tanh''(\mu+\sigma Z)]$ is positive given a negative $\mu$. Hence, $g'_{\sigma}(\mu)=-\mu \mathbb{E}[\tanh''(\mu+\sigma Z)]$ is positive everywhere except at $\mu=0$ where it becomes zero. This implies that $g_{\sigma}(\mu)$ is strictly increasing in $\mu$ and the claim is valid. 

Since the claim holds and as shown earlier $g_{\sigma}(0)=0$, we always have
\begin{equation}
    \mu g_{\sigma}(\mu) \ge 0
\end{equation}
where the inequality holds with equality only if $\mu=0$. The lemma's proof is therefore complete.
\end{proof}
\begin{lemma}\label{app: Lemma2: Thm1}
Considering $\tanh(\mathbf{x})=\frac{\exp(\mathbf{x})-\exp(-\mathbf{x})}{\exp(\mathbf{x})+\exp(-\mathbf{x})}$ and a univariate Gaussianly-distributed $X$ with $\mathbb{E}[X]\ge 0$, then
\begin{equation}
    2\mathbb{E}[\tanh''(X)] + \mathbb{E}[\tanh'''(X)] \, \le\, 0.
\end{equation}
\end{lemma}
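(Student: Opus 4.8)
The plan is to reduce the statement to a pointwise reflection inequality for a single auxiliary function. First I would observe that the integrand is an exact derivative: setting $G(x) := 2\tanh'(x) + \tanh''(x)$, we have $2\tanh''(x) + \tanh'''(x) = G'(x)$, so the target becomes $\mathbb{E}[G'(X)] \le 0$. Writing $t = \tanh x$ and using $\tanh'(x) = 1 - t^2$ and $\tanh''(x) = -2t(1-t^2)$, a short computation gives the factored form $G(x) = 2(1-\tanh x)^2(1 + \tanh x)$, which is nonnegative on all of $\mathbb{R}$ since $1 + \tanh x > 0$.

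Next I would apply Gaussian integration by parts (Stein's identity). With $X = \mu + \sigma Z$, $Z \sim N(0,1)$ and $\mu = \mathbb{E}[X] \ge 0$, applying $\mathbb{E}[Z\,h(Z)] = \mathbb{E}[h'(Z)]$ to $h(z) = G(\mu + \sigma z)$ yields $\mathbb{E}[G'(X)] = \frac{1}{\sigma}\mathbb{E}[Z\, G(\mu + \sigma Z)]$. Splitting the expectation over $\{z>0\}$ and $\{z<0\}$ and folding the negative half onto the positive half gives
\[
\mathbb{E}[G'(X)] = \frac{1}{\sigma}\int_0^\infty z\,\phi(z)\bigl[G(\mu+\sigma z) - G(\mu - \sigma z)\bigr]\,dz,
\]
where $\phi$ is the standard normal density. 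Since $z\phi(z) \ge 0$ on $(0,\infty)$, it suffices to prove the \emph{pointwise reflection inequality} $G(\mu + a) \le G(\mu - a)$ for every $\mu \ge 0$ and $a \ge 0$, i.e.\ that $G$ is dominated by its mirror image about any nonnegative center.

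This reflection inequality is the crux and the main obstacle. I would prove it by the substitution $s = e^{-2x}$, under which $1 - \tanh x = \tfrac{2s}{1+s}$ and $1 + \tanh x = \tfrac{2}{1+s}$, so that $G$ takes the rational form $G(x) = \frac{16\, s^2}{(1+s)^3}$. Writing $s_0 = e^{-2\mu} \in (0,1]$ and $r = e^{2a} \ge 1$, the points $\mu + a$ and $\mu - a$ correspond to $s = s_0/r$ and $s = s_0 r$, and after clearing denominators the desired $G(\mu - a) \ge G(\mu + a)$ becomes the polynomial inequality $r(r + s_0)^3 \ge (1 + s_0 r)^3$. The key algebraic step is the factorization
\[
r(r+s_0)^3 - (1 + s_0 r)^3 = (r^2 - 1)\bigl(r^2 + (3s_0 - s_0^3)\,r + 1\bigr),
\]
whose first factor is nonnegative for $r \ge 1$ and whose second factor has all positive coefficients when $s_0 \in (0,1]$ (as $3s_0 - s_0^3 = s_0(3 - s_0^2) > 0$), hence is positive. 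This establishes the reflection inequality and therefore $\mathbb{E}[2\tanh''(X) + \tanh'''(X)] \le 0$. The only delicate point is that the hypothesis $\mu \ge 0$ enters precisely through $s_0 \le 1$, which is exactly what keeps the quadratic factor's coefficient positive; for $\mu < 0$ the inequality would reverse, consistent with the fact that $G$ is asymmetric about its mode at $\tanh^{-1}(-1/3) < 0$.
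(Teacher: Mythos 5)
Your proof is correct, and it takes a genuinely different route from the paper's. The paper splits into the cases $\mu=0$ and $\mu>0$, forms the even part $g_\mu(x)$ of $2\tanh''(\mu+x)+\tanh'''(\mu+x)$, argues it has a single sign change at some $x_\mu>0$, and then plays this crossing structure against the strictly decreasing Gaussian density together with the vanishing of $\int_0^\infty g_\mu(x)\,\mathbf{d}x$ --- a rearrangement-style argument whose delicate point is locating and proving uniqueness of $x_\mu$. You instead observe that the integrand is an exact derivative of $G(x)=2\tanh'(x)+\tanh''(x)=2(1-\tanh x)^2(1+\tanh x)\ge 0$, use Stein's identity to get $\mathbb{E}[G'(X)]=\frac{1}{\sigma}\int_0^\infty z\,\phi(z)\bigl[G(\mu+\sigma z)-G(\mu-\sigma z)\bigr]\,\mathbf{d}z$ with $\phi$ the standard normal density, and reduce everything to the pointwise reflection inequality $G(\mu+a)\le G(\mu-a)$; the substitution $s=e^{-2x}$ turns this into $r(r+s_0)^3\ge(1+s_0r)^3$ for $r=e^{2a}\ge 1$, $s_0=e^{-2\mu}\in(0,1]$, and your factorization is verifiably correct, since both $r(r+s_0)^3-(1+s_0r)^3$ and $(r^2-1)\bigl(r^2+(3s_0-s_0^3)r+1\bigr)$ expand to $r^4+(3s_0-s_0^3)r^3-(3s_0-s_0^3)r-1$, with the hypothesis $\mu\ge 0$ entering exactly as $s_0\le 1$ so that $3s_0-s_0^3>0$. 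What your approach buys: no case split on $\mu$, fully elementary and checkable algebra in place of the paper's crossing-point analysis, a clear localization of where the sign hypothesis is used, and in fact strict negativity whenever $\sigma>0$ (the factored form is strictly positive for $r>1$), matching the strictness the paper also extracts. What the paper's approach buys: it reuses the same single-crossing-versus-monotone-density template as the neighboring lemmas in the appendix, and it does not require finding a closed-form antiderivative, so it transfers to integrands where your explicit rational-function computation is unavailable. One pedantic remark: your Stein step divides by $\sigma$, so the degenerate case $\sigma=0$ should be dispatched separately, which is immediate since $G'(\mu)=2(\tanh\mu-1)(1+3\tanh\mu)/\cosh^2(\mu)\le 0$ for $\mu\ge 0$.
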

\begin{proof}
To show this lemma, note that $\tanh''$ is an odd function whereas $\tanh'''$ is an even function. Without loss of generality, suppose $X=\mu +\sigma Z$ for standard Gaussian $Z\sim\mathcal{N}(0,1)$ and $\mu\ge 0$. Note that if $\mu=0$, we have
\begin{align*}
   2\mathbb{E}[\tanh''(X)] + \mathbb{E}[\tanh'''(X)] = \mathbb{E}[\tanh'''(X)]  < 0. \numberthis
\end{align*}
Here, the last equality holds because $\tanh''$ is odd while the density function $p_X(x)$ is even given $\mu=0$. Also, $\mathbb{E}[\tanh'''(X)]<0$ for a zero-mean Gaussian $X$, because $\tanh'''(x)$ is an even function which has only one zero in the positive side ($x>0$) at $x_0=\frac{1}{2}\cosh^{-1}(2)$. For $0\le x<x_0$ $\tanh'''(x)<0$ is negative, and for $x_0<x$ we have $\tanh'''(x)>0$. This is while a zero-mean univariate normal density is strictly decreasing over $[0,+\infty)$ and hence
\begin{align*}
    \mathbb{E}[\tanh'''(X)]&=\int_{-\infty}^{+\infty}P_Z(z)\tanh'''(\sigma z)\mathbf{d}z \\
    & = 2\int_{0}^{+\infty}P_Z(z)\tanh'''(\sigma z)\mathbf{d}z \\
    & =2\bigl( \int_{0}^{x_0/\sigma}P_Z(z)\tanh'''(\sigma z)\mathbf{d}z+ \int_{x_0/\sigma}^{+\infty}P_Z(z)\tanh'''(\sigma z)\mathbf{d}z \bigr)  \\
    & < 2P_Z(x_0/\sigma)\bigl( \int_{0}^{x_0/\sigma}\tanh'''(\sigma z)\mathbf{d}z + \int_{x_0/\sigma}^{+\infty}\tanh'''(\sigma z)\mathbf{d}z \bigr) \\
    & = 0. \numberthis
\end{align*}
Here, the last equality holds because $\int_{0}^{\infty}\tanh'''(\sigma z)\mathbf{d}z=\tanh''(+\infty)-\tanh''(0) = 0$. Also, the last inequality holds because $\tanh'''(x)$ is positive over $(x_0,+\infty)$ and negative over $[0,x_0)$, while the positive $P_Z(\mathbf{z})$ is strictly decreasing over $[0,\infty).$ Therefore, the lemma holds for $\mu =0$.

We provide a similar proof for $\mu >0$. Define
\begin{equation}
    g_{\mu}(x) := 2\bigl[\tanh''(\mu+x)+\tanh''(\mu-x)\bigr] + \bigl[\tanh'''(\mu+x)+\tanh'''(\mu-x)\bigr]. 
\end{equation}
Clearly, $g_\mu$ is an even function for every $\mu$. 

\textbf{Claim:} If $\mu\ge 0$, there exists some $x_\mu>0$ for which we have $g_{\mu}(x)<0$ for every $0\le x<x_\mu$ and $g_{\mu}(x)>0$ for every $x_\mu<x$.

To show this claim, note that we have already proven this claim for $\mu=0$. If $\mu >0$, then note that there exists a unique $x_\mu$ such that $\mu+\tanh^{-1}(\frac{1}{3})<x_\mu<\mu-\log(\frac{7-\sqrt{33}}{8})/2$ and $g_{\mu}(x_{\mu})=0$. This is because $2\tanh''(x)+\tanh'''(x)=\frac{2(-1 + \tanh(x)) (1 + 3 \tanh(x))}{\cosh^2(x)}$ has only one zero at $-\tanh^{-1}(\frac{1}{3})$ above which it takes negative values and below which it takes positive values where it takes its maximum value at $x_{\max}=\log(\frac{7-\sqrt{33}}{8})/2$. Therefore, since $2\tanh''(x)+\tanh'''(x)$ monotonically changes from its maximum value to $0$ over $[\log(\frac{7-\sqrt{33}}{8})/2,-\tanh^{-1}(\frac{1}{3})]$, such a unique $x_\mu$ will exist. We claim that no other $x$ can achieve the same value. This is because $\tanh'''$ is even and $\tanh''$ is odd, resulting in the following inequality for every $\mu>0$
\begin{equation*}
    2\bigl[\tanh''(\mu+x_\mu)+\tanh''(-\mu-x_\mu)\bigr] + \bigl[\tanh'''(\mu+x_\mu)+\tanh'''(-\mu-x_\mu)\bigr] = 2 \tanh''(\mu+x_\mu)> 0
\end{equation*}
where the inequality holds because $x_{\mu}\ge x_0$ for every positive $\mu$. As a result, for any $x>\mu - x_{\max}$ we have
\begin{align*}
   g_{\mu}(x) &:= 2\bigl[\tanh''(\mu+x)+\tanh''(\mu-x)\bigr] + \bigl[\tanh'''(\mu+x)+\tanh'''(\mu-x)\bigr] \\
   &\ge 2\bigl[\tanh''(\mu+x)+\tanh''(-\mu-x)\bigr] + \bigl[\tanh'''(\mu+x)+\tanh'''(-\mu-x)\bigr] \\
   & = 2 \tanh'''(\mu+x) \\
   & > 0. \numberthis
\end{align*}
As a result, the even $g_\mu(x)$ is negative at every $|x|<x_\mu$ and takes positive values if $|x|>x_\mu$. Therefore, we have
\begin{align*}
    &\mathbb{E}[2\tanh''(X)+\tanh'''(X)] \\
    =\, & \int_{-\infty}^{+\infty} p_Z(z)[2\tanh''(\mu+\sigma Z)+\tanh'''(\mu+\sigma z)]\mathbf{d}z \\
    =\, & \int_{0}^{+\infty} p_Z(z)g_{\mu}(\sigma z)\mathbf{d}z \\
    = \, &\bigl( \int_{0}^{x_{\mu}/\sigma} p_Z(z)g_{\mu}(\sigma z)\mathbf{d}z+ \int_{x_{\mu}/\sigma}^{+\infty} p_Z(z)g_{\mu}(\sigma z)\mathbf{d}z \bigr) \\
    < \, & p_Z(x_{\mu}/\sigma) \bigl( \int_{0}^{x_{\mu}/\sigma} g_{\mu}(\sigma z)\mathbf{d}z + \int_{x_{\mu}/\sigma}^{\infty} g_{\mu}(\sigma z)\mathbf{d}z \bigr) \\
    = \, & p_Z(x_{\mu}/\sigma)  \int_{0}^{\infty} g_{\mu}(\sigma z)\mathbf{d}z \\
    = \, & 0. \numberthis
\end{align*}
In the above equations, the inequality holds because $P_Z(z)$ is strictly decreasing over $[0,+\infty)$ and as earlier shown $g_\mu(x)$ takes negative values over $[0,x_\mu)$ and positive values over $(x_\mu,+\infty)$. Moreover, the final equality holds, since $g_{\mu}(x)$ is defined as the even part of $2\tanh''(x+\mu)+\tanh'''(x+\mu)$, and hence its integral is the odd part of $2\tanh'(x+\mu)+\tanh''(x+\mu)$. However, both $\tanh'$ and $\tanh''$ go to zero as $x$ approaches either $-\infty$ or $+\infty$. Therefore, $\int_0^{+\infty}g_{\mu}(x)\mathbf{d}(x)=\frac{1}{2}\int_{-\infty}^{+\infty}g_{\mu}(x)\mathbf{d}(x) = 0$. Therefore, the lemma's proof is complete.   
\end{proof}
\begin{lemma}\label{app: Lemma3: Thm1}
Define $g(\mu)=\mathbb{E}\bigl[(\mu+\sqrt{s-\mu^2}Z)\tanh(\mu+\sqrt{s-\mu^2}Z) \bigr]$ where $Z\sim\mathcal{N}(0,1)$ is distributed according to a standard Gaussian distribution. Then, for any $0<\mu\le \sqrt{s}$ satisfying $2\mu^2 - 2\mu\sqrt{s-\mu^2} \ge s$, we have $g'(\mu)>0$.
\end{lemma}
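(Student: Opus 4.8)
The plan is to compute $g'(\mu)$ in closed form and then sign it using the two lemmas just established. Writing $X=\mu+\sigma Z$ with $\sigma=\sqrt{s-\mu^2}$, the variable $X$ is Gaussian with mean $\mu$ and variance $\sigma^2=s-\mu^2$, so its second moment $\mathbb{E}[X^2]=s$ stays fixed as $\mu$ varies; increasing $\mu$ trades variance for mean along the curve $v=s-\mu^2$. I would therefore view $g(\mu)=F(\mu,s-\mu^2)$ with $F(m,v):=\mathbb{E}_{X\sim\mathcal{N}(m,v)}[X\tanh X]$ and use the standard Gaussian relations $\partial_m\mathbb{E}[\phi(X)]=\mathbb{E}[\phi'(X)]$ and $\partial_v\mathbb{E}[\phi(X)]=\tfrac12\mathbb{E}[\phi''(X)]$, so that $g'(\mu)=\partial_m F-2\mu\,\partial_v F$. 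Applying these with $\phi(x)=x\tanh x$ (whose derivatives are $\phi'=\tanh+x\tanh'$ and $\phi''=2\tanh'+x\tanh''$) and clearing the stray factors of $X$ by Stein's identity $\mathbb{E}[X\,h(X)]=\mu\mathbb{E}[h(X)]+\sigma^2\mathbb{E}[h'(X)]$, every term collapses onto the four scalars $T_j:=\mathbb{E}[\tanh^{(j)}(X)]$, $j=0,1,2,3$.

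The result of this bookkeeping is the identity
\[
 g'(\mu)=\bigl(T_0-\mu T_1\bigr)+(\sigma^2-\mu^2)\,T_2-\mu\sigma^2\,T_3 .
\]
Now I would bring in the preceding lemmas. Lemma~\ref{app: Lemma1 Thm3} gives $T_0-\mu T_1>0$ for $\mu>0$; Lemma~\ref{app: Lemma2: Thm1} gives $2T_2+T_3\le 0$; and $T_2\le 0$ for $\mu\ge 0$ follows from the same odd-function / monotone-density argument used there, since $-\tanh''$ is odd and positive on the positive axis. The separability hypothesis $2\mu^2-2\mu\sigma\ge s=\mu^2+\sigma^2$ rearranges to $\mu^2-\sigma^2\ge 2\mu\sigma>0$, hence $\mu>\sigma$; in particular $\sigma^2-\mu^2<0$, so the middle term $(\sigma^2-\mu^2)T_2=(\mu^2-\sigma^2)(-T_2)\ge 0$ is already favorable.

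The crux is the last term $-\mu\sigma^2 T_3$, which is negative exactly when $T_3>0$ (the well-separated regime). Here I would substitute $T_3=(2T_2+T_3)-2T_2$ to write $-\mu\sigma^2 T_3=-\mu\sigma^2(2T_2+T_3)+2\mu\sigma^2 T_2$; the first piece is nonnegative by Lemma~\ref{app: Lemma2: Thm1}, leaving $(\sigma^2-\mu^2+2\mu\sigma^2)T_2$ to control. Since $T_2\le 0$, this is nonnegative as soon as its coefficient is nonpositive, and the condition $\mu^2-\sigma^2\ge 2\mu\sigma$ delivers this once $\sigma\le 1$. The main obstacle is the opposite regime $\sigma>1$: there the moment bounds alone no longer close, and one must lean on the strict positivity of $T_0-\mu T_1$ to dominate the residual deficit $2\mu\sigma(\sigma-1)(-T_2)$. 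I expect this quantitative balancing of $T_0-\mu T_1$ against the $T_2,T_3$ contributions under $\mu/\sigma\ge 1+\sqrt2$ to be the delicate step; an alternative that avoids the case split is to symmetrize $g'(\mu)=\mathbb{E}_Z\bigl[(1-\tfrac{\mu}{\sigma}Z)(\tanh X+X\tanh'X)\bigr]$ in $Z\mapsto -Z$ and argue the sign of the resulting integrand directly, in the same spirit as the two preceding lemmas.
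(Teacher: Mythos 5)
Your closed form for $g'(\mu)$ is exactly the one the paper derives via Stein's lemma,
\[
g'(\mu)=\bigl(T_0-\mu T_1\bigr)+\bigl(s-2\mu^2\bigr)T_2-\mu\bigl(s-\mu^2\bigr)T_3,
\qquad T_j:=\mathbb{E}\bigl[\tanh^{(j)}(\mu+\sigma Z)\bigr],\ \ \sigma^2:=s-\mu^2,
\]
and your deployment of Lemma~\ref{app: Lemma1 Thm3} (for $T_0-\mu T_1>0$) and Lemma~\ref{app: Lemma2: Thm1} (for $2T_2+T_3\le 0$) mirrors the paper's. But your argument does not finish. After the split $-\mu\sigma^2T_3=-\mu\sigma^2(2T_2+T_3)+2\mu\sigma^2T_2$, you are left needing $(\sigma^2-\mu^2+2\mu\sigma^2)T_2\ge 0$, i.e.\ $\mu^2\ge\sigma^2(1+2\mu)$, and as you correctly compute, the hypothesis $2\mu^2-2\mu\sigma\ge s$ (equivalently $\mu\ge(1+\sqrt{2})\sigma$) delivers this only when $\sigma\le 1$. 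For $\sigma>1$ you offer only plans --- ``balance against $T_0-\mu T_1$'' or ``symmetrize in $Z\mapsto -Z$'' --- with no estimate carried out, and the symmetrization you propose is just the pre-Stein form of the same derivative, so it does not obviously escape the difficulty. As submitted, this proves the lemma only under the additional restriction $\sigma\le 1$, which is a genuine gap relative to the stated claim (though it is worth noting that $\sigma\le 1$ together with your reduction $\mu^2\ge\sigma^2(1+2\mu)$ is exactly what a Condition~\ref{Condition: SNR}-type separation $\mu\ge\sigma+2\sigma^2$ would supply).

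For comparison, the paper does not case-split on $\sigma$: it invokes Lemma~\ref{app: Lemma2: Thm1} in the one-line form
\[
(s-2\mu^2)T_2-\mu(s-\mu^2)T_3\ \ge\ \bigl(s-2\mu^2-\mu(s-\mu^2)\bigr)T_3,
\]
which, since $s-2\mu^2=\sigma^2-\mu^2<0$ under the hypothesis, amounts to $T_2\le T_3$, and then concludes from $s-2\mu^2-\mu(s-\mu^2)<0$, a condition that holds for every $\sigma$ once $\mu>\sigma$. Under your level of scrutiny, however, that step is itself delicate: $T_2\le T_3$ follows from $2T_2+T_3\le 0$ only when $T_3\ge 0$, while the final conclusion treats the resulting lower bound $\bigl(s-2\mu^2-\mu(s-\mu^2)\bigr)T_3$ as harmless, which for $T_3\ge 0$ is nonpositive and does not by itself yield $g'(\mu)>0$ without quantitatively invoking $T_0-\mu T_1>0$. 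In other words, the regime you could not close is precisely where the paper's own argument is tersest; but measured against the stated lemma, your proposal is an honest partial proof, not a complete one.
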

\begin{proof}
We use Stein's lemma to further simplify $g'(\mu)$:
\begin{align*}
    g'(\mu)&=\bigl(1-\frac{\mu}{\sqrt{s-\mu^2}}Z\bigr)\bigl( \mathbb{E}\bigl[\tanh(\mu+\sqrt{s-\mu^2}Z)+(\mu+\sqrt{s-\mu^2}Z)\tanh'(\mu+\sqrt{s-\mu^2}Z)\bigr]\bigr) \\
    &\stackrel{(a)}{=} \mathbb{E}[\tanh(\mu+\sqrt{s-\mu^2}Z)] + \mu\mathbb{E}[\tanh'(\mu+\sqrt{s-\mu^2}Z)] \\
    &\quad +(s-\mu^2)\mathbb{E}[\tanh''(\mu+\sqrt{s-\mu^2}Z)] -\mu\mathbb{E}[\tanh'(\mu+\sqrt{s-\mu^2}Z)]\\
    &\quad -\mu^2\mathbb{E}[\tanh''(\mu+\sqrt{s-\mu^2}Z))]-\mu\mathbb{E}[Z^2\tanh'(\mu+\sqrt{s-\mu^2}Z)] \\
    &\stackrel{(b)}{=} \mathbb{E}[\tanh(\mu+\sqrt{s-\mu^2}Z)] +(s-2\mu^2) \mathbb{E}[\tanh''(\mu+\sqrt{s-\mu^2}Z))] \\
    &\quad -\mu \mathbb{E}[\tanh'(\mu+\sqrt{s-\mu^2}Z)+\sqrt{s-\mu^2}Z\tanh''(\mu+\sqrt{s-\mu^2}Z)] \\
    &\stackrel{(c)}{=} \mathbb{E}[\tanh(\mu+\sqrt{s-\mu^2}Z)] -\mu \mathbb{E}[\tanh'(\mu+\sqrt{s-\mu^2}Z)] \\
    &\quad +(s-2\mu^2) \mathbb{E}[\tanh''(\mu+\sqrt{s-\mu^2}Z))] -\mu(s-\mu^2) \mathbb{E}[\tanh'''(\mu+\sqrt{s-\mu^2}Z)]. \numberthis
\end{align*}
Here, $(a)$, $(b)$, and $(c)$ follow from Stein's lemma showing that for a normally distributed $X\sim\mathcal{N}(\mu,\sigma^2)$ and differentiable $h$, the following holds:
\begin{equation}
    \mathbb{E}[(X-\mu)g(X)]=\sigma^2\mathbb{E}[g'(\mathbf{X})]. 
\end{equation}
As a result, we have
\begin{align}\label{app: Thm 1, Lemma 3 's proof}
    g'(\mu) &= \mathbb{E}[\tanh(\mu+\sqrt{s-\mu^2}Z)] -\mu \mathbb{E}[\tanh'(\mu+\sqrt{s-\mu^2}Z)] \nonumber\\
    &\quad +(s-2\mu^2) \mathbb{E}[\tanh''(\mu+\sqrt{s-\mu^2}Z))] -\mu(s-\mu^2) \mathbb{E}[\tanh'''(\mu+\sqrt{s-\mu^2}Z)] 
\end{align}
According to Lemma \ref{app: Lemma1 Thm3}, $\mathbb{E}[\tanh(\mu+\sqrt{s-\mu^2}Z)] -\mu \mathbb{E}[\tanh'(\mu+\sqrt{s-\mu^2}Z)] >0$ for $\mu>0$. Also, as shown in Lemma \ref{app: Lemma2: Thm1}, under the assumption that $\mu > 0$
\begin{align}
    &(s-2\mu^2) \mathbb{E}[\tanh''(\mu+\sqrt{s-\mu^2}Z))] -\mu(s-\mu^2) \mathbb{E}[\tanh'''(\mu+\sqrt{s-\mu^2}Z)] \nonumber \\
    \ge \, &(s-2\mu^2- \mu(s-\mu^2))\mathbb{E}[\tanh'''(\mu+\sqrt{s-\mu^2}Z)].
\end{align}
As a result, $g'(\mu)>0$ if $s-2\mu^2- \mu(s-\mu^2)<0$ which completes the Lemma's proof.
\end{proof}
To show the theorem, we decompose $\mathcal{L}(G_{\Lambda,\boldsymbol{\mu}})=\mathcal{L}_1(G_{\Lambda,\boldsymbol{\mu}})+\mathcal{L}_2(G_{\Lambda,\boldsymbol{\mu}})$ into the following two components
\begin{align*}
    \mathcal{L}_1(G_{\Lambda,\boldsymbol{\mu}}):&= \max_A \mathbb{E}[\frac{1}{2}\mathbf{X}^TA\mathbf{X}] - \mathbb{E}[\frac{1}{2}G_{\Lambda,\boldsymbol{\mu}}(\mathbf{Z})^TAG_{\Lambda,\boldsymbol{\mu}}(\mathbf{Z})] + \frac{\lambda}{2}\Vert A\Vert^2_F \\
    &= \frac{1}{2\lambda}\big\Vert \mathbb{E}[\mathbf{X}\mathbf{X}^T]- \mathbb{E}[G_{\Lambda,\boldsymbol{\mu}}(\mathbf{Z})G_{\Lambda,\boldsymbol{\mu}}(\mathbf{Z})^T] \big\vert^2_F, \numberthis \\
    \mathcal{L}_2(G_{\Lambda,\boldsymbol{\mu}}) :&= \max_{\mathbf{b}_1,\mathbf{b}_2,\mathbf{b}_3,\mathbf{b}_4}\: \mathbb{E}\bigl[\log\bigl(\frac{\exp(\mathbf{b}_1^T\mathbf{X})+\exp(\mathbf{b}_2^T\mathbf{X})}{\exp(\mathbf{b}_3^T\mathbf{X})+\exp(\mathbf{b}_4^T\mathbf{X})}\bigr) \bigr] \\
    &\quad - \mathbb{E}\bigl[\log\bigl(\frac{\exp(\mathbf{b}_1^TG_{\Lambda,\boldsymbol{\mu}}(\mathbf{Z}))+\exp(\mathbf{b}_2^TG_{\Lambda,\boldsymbol{\mu}}(\mathbf{Z}))}{\exp(\mathbf{b}_3^TG_{\Lambda,\boldsymbol{\mu}}(\mathbf{Z}))+\exp(\mathbf{b}_4^TG_{\Lambda,\boldsymbol{\mu}}(\mathbf{Z}))}\bigr) \bigr] \\
    &\quad -\frac{\lambda}{2}\biggl(\Vert\mathbf{b}_1-\mathbf{d}\Vert^2_2 + \Vert\mathbf{b}_2+\mathbf{d}\Vert^2_2 + \Vert\mathbf{b}_3-\mathbf{d}\Vert^2_2+ \Vert\mathbf{b}_4+\mathbf{d}\Vert^2_2 \biggr). \numberthis
\end{align*}
Note that both $\mathcal{L}_1(G_{\Lambda,\boldsymbol{\mu}})$ and $\mathcal{L}_2(G_{\Lambda,\boldsymbol{\mu}})$ are non-negative, and hence their summation will be minimized if and only if both of them are zero. Notice that the zero value is achievable at $(\boldsymbol{\mu},\Sigma)=(\boldsymbol{\mu}_{\mathbf{X}},\Sigma_{\mathbf{X}})$.

Therefore, the necessary and sufficient conditions to achieve the minimum zero value is $\mathcal{L}_1(G_{\Lambda,\boldsymbol{\mu}})=0$, i.e. 
\begin{equation}\label{app: Thm3: Condition EX2}
\mathbb{E}[G_{\Lambda,\boldsymbol{\mu}}(\mathbf{Z})G_{\Lambda,\boldsymbol{\mu}}(\mathbf{Z})^T] = \mathbb{E}[\mathbf{X}\mathbf{X}^T], 
\end{equation} 
 and $\mathcal{L}_2(G_{\Lambda,\boldsymbol{\mu}})=0$ which holds only if the optimal maximization variables satisfy $\mathbf{b}_1=\mathbf{b}_3=-\mathbf{b}_2=-\mathbf{b}_4=\mathbf{d}$. Setting the gradient of the maximization objective to be zero at these values implies the following necessary and sufficient condition:
\begin{equation}\label{app: Thm3: Condition EXtanhx}
    \mathbb{E}\bigl[ \mathbf{X} \tanh(\mathbf{d}^T\mathbf{X})\bigr] =\mathbb{E}\bigl[ G_{\Lambda,\boldsymbol{\mu}}(\mathbf{Z}) \tanh(\mathbf{d}^TG_{\Lambda,\boldsymbol{\mu}}(\mathbf{Z}))\bigr]. 
\end{equation}
Notice that both $\mathbb{E}[\mathbf{X}\mathbf{X}^T]$ and $\mathbb{E}[\mathbf{X}\tanh(\mathbf{d}^T\mathbf{X})]$ represent expectations of even functions which are shared by the two symmetric Gaussian components in the underlying GMM and the generator's output GMM. Therefore, we only need to guarantee that the mean and covariance parameters of $\widetilde{\mathbf{X}}\sim\mathcal{N}(\boldsymbol{\mu}_{\mathbf{X}},\Sigma_{\mathbf{X}})$ are uniquely characterized by $\mathbb{E}[\widetilde{\mathbf{X}}\widetilde{\mathbf{X}}^T]$ and $\mathbb{E}[\widetilde{\mathbf{X}}\tanh(\mathbf{d}^T\widetilde{\mathbf{X}})]$.

First of all note that for $Y=\mathbf{d}^T\widetilde{\mathbf{X}}$, we can calculate $\mathbb{E}[Y^2]$ and $\mathbb{E}[Y\tanh(Y)]$ as
\begin{align*}
    \mathbb{E}[Y^2] &=\mathbf{d}^T \mathbb{E}[\widetilde{\mathbf{X}}\widetilde{\mathbf{X}}^T]\mathbf{d}, \\
    \mathbb{E}[Y\tanh(Y)] &= \mathbf{d}^T \mathbb{E}[\widetilde{\mathbf{X}}\tanh(\mathbf{d}^T\widetilde{\mathbf{X}})].\numberthis
\end{align*}
Therefore, due to the well-separable condition assumed in the theorem, Lemma \ref{app: Lemma3: Thm1} implies that $\mathbb{E}[Y\tanh(Y)]$ is a strictly increasing function of $\mathbb{E}[Y]$ conditioned to a fixed $\mathbb{E}[Y^2]$. As a result, there is only one Gaussian distribution $Y\sim\mathcal{N}(\mathbf{d}^T\boldsymbol{\mu}_{\mathbf{X}},\mathbf{d}^T\Sigma_\mathbf{X}\mathbf{d})$ that satisfies the well-separable condition which at the same time matches the known $\mathbb{E}[Y^2]$ and $\mathbb{E}[Y\tanh(Y)]$. 

Hence, $G_{\Lambda,\boldsymbol{\mu}}$ will match the underlying GMM along direction $\mathbf{d}$. We further claim that this result also implies $G_{\Lambda,\boldsymbol{\mu}}$ will match the underlying GMM along any direction $\widetilde{\mathbf{d}}$ and so has the same distribution as $\mathbf{X}$. To show this, note that if we denote $\widetilde{Y}=\widetilde{\mathbf{d}}^TG_{\Lambda,\boldsymbol{\mu}(\mathbf{Z})}\sim\mathcal{N}(\widetilde{\mathbf{d}}^T\boldsymbol{\mu},\widetilde{\mathbf{d}}^T\Lambda \Lambda^T\widetilde{\mathbf{d}})$. Note that we have
\begin{align}
    \mathbb{E}[\widetilde{Y}^2]&=\widetilde{\mathbf{d}}^T\mathbb{E}[\mathbf{X}\mathbf{X}^T]\widetilde{\mathbf{d}} \\
    \mathbb{E}[\widetilde{Y}Y]&=\widetilde{\mathbf{d}}^T\mathbb{E}[\mathbf{X}\mathbf{X}^T]{\mathbf{d}} \\
    \mathbb{E}[\widetilde{Y}\tanh(Y)]&=\widetilde{\mathbf{d}}^T\mathbb{E}[\mathbf{X}\tanh(\mathbf{d}^T\mathbf{X})].
\end{align}
We can evaluate all the above moments using the matched $\mathbb{E}[\mathbf{X}\mathbf{X}^T]$ and $\mathbb{E}[\mathbf{X}\tanh(\mathbf{d}^T\mathbf{X})]$. We claim that the above moments uniquely characterize the mean and variance of $\widetilde{Y}$. This is because $(Y,\widetilde{Y})$ have a jointly-Gaussian distribution and hence we can write $\widetilde{Y}=aY+V+u$ where $a$ and $u$ are constants and $V\sim\mathcal{N}(0,\sigma^2_V)$ is a zero-mean Gaussian variable independent from $Y$. Then, since $\mathbb{E}[VY]=\mathbb{E}[V\tanh(Y)]=0$ we have
\begin{align}\label{app: Thm3: Systems of Equations}
    \mathbb{E}[\widetilde{Y}Y]&=\mathbb{E}[Y^2]a+\mathbb{E}[Y]u,\\
    \mathbb{E}[\widetilde{Y}\tanh(Y)]&=\mathbb{E}[Y\tanh(Y)]a+\mathbb{E}[\tanh(Y)]u.
\end{align}
Note that if we consider the determinant of the above linear system of two equations and two unknowns ($a,u$), according to Stein's lemma we have
\begin{align*}
    &\mathbb{E}[Y^2]\mathbb{E}[\tanh(Y)]-\mathbb{E}[Y]\mathbb{E}[Y\tanh(Y)] \\
    =\, & \mathbb{E}[Y^2]\mathbb{E}[\tanh(Y)]-\mathbb{E}[Y]^2\mathbb{E}[\tanh(Y)] - \mathbb{E}[Y]\sigma^2_Y\mathbb{E}[\tanh'(Y)] \\
    =\, & \sigma^2_Y\bigl(\mathbb{E}[\tanh(Y)] - \mathbb{E}[Y] \mathbb{E}[\tanh'(Y)]\bigr) \numberthis
\end{align*}
which, due to Lemma \ref{app: Lemma1 Thm3}, is positive if $\mathbb{E}[Y]>0$. This condition will hold as a result of well-separated components implying that $\mathbb{E}[\mathbf{X}]$ is not orthogonal to $\mathbf{d}$. As a result, \eqref{app: Thm3: Systems of Equations} uniquely characterizes $a,u$ in terms of the known information. Knowing the value of $a,u$ we can furthr evaluate the variance of $V$ using $\mathbb{E}[\widetilde{Y}^2]$ which shows that the distribution of $\widetilde{Y}:=\widetilde{\mathbf{d}}^T\mathbf{X}$ can be uniquely characterized from $\mathbb{E}[\mathbf{X}\mathbf{X}^T]$ and $\mathbb{E}[\mathbf{X}\tanh(\mathbf{d}^T\mathbf{X})]$. Therefore, there exists only one probability distribution matching $\mathbb{E}[\mathbf{X}\mathbf{X}^T]$ and $\mathbb{E}[\mathbf{X}\tanh(\mathbf{d}^T\mathbf{X})]$ with the underlying GMM and satisfying the well-separable components assumption which will be the underlying GMM. The theorem's proof is hence complete. 

%\end{proof}

\subsection{Proof of Theorem 4}
%\begin{thm*}
%Consider the GMM learning setting in Theorem \ref{Thm: GM-GAN approximation}. Suppose that for any feasible $G_{\Lambda,\boldsymbol{\mu}}$ Condition \ref{Condition: SNR} holds for $(\boldsymbol{\mu}, \Sigma)$ along any $\widetilde{\mathbf{c}}$ in a $\frac{\rho}{\lambda}$-ball around $\mathbf{c}$, i.e., $\Vert \widetilde{\mathbf{c}} - \mathbf{c}\Vert_2\le \frac{\rho }{\lambda}$, with $\rho$ denoting the maximum $\mathbb{E}[\Vert G_{\Lambda,\boldsymbol{\mu}}(\mathbf{Z})\Vert_2]$ over feasible $G_{\Lambda,\boldsymbol{\mu}}$'s. Also, assume that the coefficient $\lambda$ satisfies $\mathbb{E}[\Vert\mathbf{X}\Vert^2_2]+\mathbb{E}[\Vert G_{\Lambda,\boldsymbol{\mu}}(\mathbf{Z})\Vert^2_2]\le\lambda$ for all feasible $\Lambda,\boldsymbol{\mu}$'s. Then, $(\boldsymbol{\mu}_{\mathbf{X}},\Sigma_\mathbf{X})$ is the only stationary point in the feasible set where $\nabla_{\operatorname{vec}(\boldsymbol{\mu},\Lambda)} \mathcal{L}(G_{\Lambda,\boldsymbol{\mu}}) = \mathbf{0}$.
%\end{thm*}
%\begin{proof}
Similar to Theorem 3's proof, we write $\mathcal{L}(G_{\Lambda,\boldsymbol{\mu}})$ as the summation of the following two non-negative components
\begin{align*}
    \mathcal{L}_1(G_{\Lambda,\boldsymbol{\mu}}):&= \max_A\: \mathbb{E}[\frac{1}{2}\mathbf{X}^TA\mathbf{X}] - \mathbb{E}[\frac{1}{2}G_{\Lambda,\boldsymbol{\mu}}(\mathbf{Z})^TAG_{\Lambda,\boldsymbol{\mu}}(\mathbf{Z})] + \frac{\lambda}{2}\Vert A\Vert^2_F \\
    &= \frac{1}{2\lambda}\big\Vert \mathbb{E}[\mathbf{X}\mathbf{X}^T]- \mathbb{E}[G_{\Lambda,\boldsymbol{\mu}}(\mathbf{Z})G_{\Lambda,\boldsymbol{\mu}}(\mathbf{Z})^T] \big\Vert^2_F, \numberthis \\
    \mathcal{L}_2(G_{\Lambda,\boldsymbol{\mu}}) :&= \max_{\mathbf{b}_1,\mathbf{b}_2,\mathbf{b}_3,\mathbf{b}_4}\: \mathbb{E}\bigl[\log\bigl(\frac{\exp(\mathbf{b}_1^T\mathbf{X})+\exp(\mathbf{b}_2^T\mathbf{X})}{\exp(\mathbf{b}_3^T\mathbf{X})+\exp(\mathbf{b}_4^T\mathbf{X})}\bigr) \bigr] \\
    &\quad - \mathbb{E}\bigl[\log\bigl(\frac{\exp(\mathbf{b}_1^TG_{\Lambda,\boldsymbol{\mu}}(\mathbf{Z}))+\exp(\mathbf{b}_2^TG_{\Lambda,\boldsymbol{\mu}}(\mathbf{Z}))}{\exp(\mathbf{b}_3^TG_{\Lambda,\boldsymbol{\mu}}(\mathbf{Z}))+\exp(\mathbf{b}_4^TG_{\Lambda,\boldsymbol{\mu}}(\mathbf{Z}))}\bigr) \bigr] \\
    &\quad -\frac{\lambda}{2}\biggl(\Vert\mathbf{b}_1-\mathbf{d}\Vert^2_2 + \Vert\mathbf{b}_2+\mathbf{d}\Vert^2_2 + \Vert\mathbf{b}_3-\mathbf{d}\Vert^2_2+ \Vert\mathbf{b}_4+\mathbf{d}\Vert^2_2 \biggr). \numberthis
\end{align*}
Note that the maximization objective of $\mathcal{L}_2(G_{\Lambda,\boldsymbol{\mu}})$ is $\lambda-\mathbb{E}[\Vert\mathbf{X} \Vert^2_2]-\mathbb{E}[\Vert G_{\Lambda,\boldsymbol{\mu}}(\mathbf{Z}) \Vert^2_2]$-strongly concave in its variables, since $\mathbb{E}[\log(\exp(a_1)+\exp(a_2))]$'s Hessian's maximum eigenvalue is upper-bounded by $1$. As a result, $\mathcal{L}_2(G_{\Lambda,\boldsymbol{\mu}})$ is the optimal value of maximizing a strongly-concave objective which has a unique solution.

Since we assume both $G_{\Lambda,\boldsymbol{\mu}}(\mathbf{Z}),\mathbf{X}$ are zero-mean (because of their symmetric components), the optimal solution $\mathbf{b}_i$'s will satisfy $\mathbf{b}_2=-\mathbf{b}_1$ and $\mathbf{b}_3=-\mathbf{b}_4$ for any feasible $G_{\Lambda,\boldsymbol{\mu}}$. This is because $\mathbb{E}[(\frac{\mathbf{b}_1+\mathbf{b}_2}{2})^T\mathbf{X}]=\mathbb{E}[(\frac{\mathbf{b}_1+\mathbf{b}_2}{2})^TG_{\Lambda,\boldsymbol{\mu}}(\mathbf{Z})]=0$ and $\mathbb{E}[(\frac{\mathbf{b}_3+\mathbf{b}_4}{2})^T\mathbf{X}]=\mathbb{E}[(\frac{\mathbf{b}_3+\mathbf{b}_4}{2})^TG_{\Lambda,\boldsymbol{\mu}}(\mathbf{Z})]=0$ always hold, and in addition
\begin{align}
    \Vert\mathbf{b}_1-\mathbf{d}\Vert^2_2 + \Vert\mathbf{b}_2+\mathbf{d}\Vert^2_2 &= 2\big\Vert \frac{\mathbf{b}_1-\mathbf{b}_2}{2}-\mathbf{d}\big\Vert^2_2 + 2\big\Vert \frac{\mathbf{b}_1+\mathbf{b}_2}{2}\big\Vert^2_2, \numberthis\\
    \Vert\mathbf{b}_3-\mathbf{d}\Vert^2_2 + \Vert\mathbf{b}_4+\mathbf{d}\Vert^2_2 &= 2\big\Vert \frac{\mathbf{b}_3-\mathbf{b}_4}{2}-\mathbf{d}\big\Vert^2_2 + 2\big\Vert \frac{\mathbf{b}_3+\mathbf{b}_4}{2}\big\Vert^2_2.
\end{align}
Hence, in analyzing $\mathcal{L}(G_{\Lambda,\boldsymbol{\mu}})$'s stationary points 
for which according to the Danskin's theorem \cite{bernhard1995theorem} we need to apply optimal $\mathbf{b}_i$'s, without loss of generality we simplify $\mathcal{L}_2(G_{\Lambda,\boldsymbol{\mu}})$ as 
\begin{align*}
       \mathcal{L}_2(G_{\Lambda,\boldsymbol{\mu}}) :&= \max_{\mathbf{b}_1,\mathbf{b}_3}\: \mathbb{E}\bigl[\log\bigl(\frac{\exp(\mathbf{b}_1^T\mathbf{X})+\exp(-\mathbf{b}_1^T\mathbf{X})}{\exp(\mathbf{b}_3^T\mathbf{X})+\exp(-\mathbf{b}_3^T\mathbf{X})}\bigr) \bigr] \\
    &\quad - \mathbb{E}\bigl[\log\bigl(\frac{\exp(\mathbf{b}_1^TG_{\Lambda,\boldsymbol{\mu}}(\mathbf{Z}))+\exp(-\mathbf{b}_1^TG_{\Lambda,\boldsymbol{\mu}}(\mathbf{Z}))}{\exp(\mathbf{b}_3^TG_{\Lambda,\boldsymbol{\mu}}(\mathbf{Z}))+\exp(-\mathbf{b}_3^TG_{\Lambda,\boldsymbol{\mu}}(\mathbf{Z}))}\bigr) \bigr]  -\lambda\bigl(\Vert\mathbf{b}_1-\mathbf{d}\Vert^2_2 + \Vert\mathbf{b}_3-\mathbf{d}\Vert^2_2\bigr).\numberthis 
\end{align*}
Notice that the above maximization objective and also $\mathbb{E}[G_{\Lambda,\boldsymbol{\mu}}(\mathbf{Z})G_{\Lambda,\boldsymbol{\mu}}(\mathbf{Z})^T]$ include even functions of $G_{\Lambda,\boldsymbol{\mu}}(\mathbf{Z})$. Thus, in the following analysis without loss of generality we suppose $G_{\Lambda,\boldsymbol{\mu}}(\mathbf{Z})\sim\mathcal{N}(\boldsymbol{\mu},\Lambda\Lambda^T)$ where $\boldsymbol{\mu}^T\mathbf{b}_1>0$ for optimal $\mathbf{b}_1$. This is because the other component $\mathcal{N}(-\boldsymbol{\mu},\Lambda\Lambda^T)$ of the GMM results in the same expected values given an even function.

To characterize the stationary points of
$\mathcal{L}(G_{\Lambda,\boldsymbol{\mu}})$, we expand both  $\nabla_{\boldsymbol{\mu}} \mathcal{L}(G_{\Lambda,\boldsymbol{\mu}}) = \mathbf{0}$ and $\nabla_{\operatorname{vec}(\Lambda)} \mathcal{L}(G_{\Lambda,\boldsymbol{\mu}}) = \mathbf{0}$ as 
\begin{align}
    -\mathbb{E}\bigl[\tanh\bigl(\mathbf{b}^T_1G_{\Lambda,\boldsymbol{\mu}}(\mathbf{Z})\bigr)\bigr]\mathbf{b}_1 +\mathbb{E}\bigl[\tanh\bigl(\mathbf{b}^T_3G_{\Lambda,\boldsymbol{\mu}}(\mathbf{Z})\bigr)\bigr]\mathbf{b}_3 + \frac{1}{2\lambda}M^T\boldsymbol{\mu} \,&=\, \mathbf{0}, \\
    -\mathbf{b}_1\mathbb{E}\bigl[\tanh\bigl(\mathbf{b}^T_1G_{\Lambda,\boldsymbol{\mu}}(\mathbf{Z})\bigr)\mathbf{Z}^T\bigr] +\mathbf{b}_3\mathbb{E}\bigl[\tanh\bigl(\mathbf{b}^T_3G_{\Lambda,\boldsymbol{\mu}}(\mathbf{Z})\bigr)\mathbf{Z}^T\bigr] + \frac{1}{2\lambda}M^T\boldsymbol{\Lambda} \,&=\, \mathbf{0}.
\end{align}
Here $M:=\boldsymbol{\mu}\boldsymbol{\mu}^T+\Lambda\Lambda^T - \boldsymbol{\mu}_{\mathbf{X}}\boldsymbol{\mu}_{\mathbf{X}}^T - \Sigma_\mathbf{X}$ denotes the residual appearing in $\mathcal{L}_1(G_{\Lambda,\boldsymbol{\mu}})$'s gradient. Also, we define $\mathbf{b}_1$ and $\mathbf{b}_3$ as the optimal solutions to the maximization problem for $G_{\Lambda,\boldsymbol{\mu}}$.

Since we reduced the analysis to a multi-variate Gaussian $G_{\Lambda,\boldsymbol{\mu}}(\mathbf{Z})$, we can apply the multivariate generalization of Stein's lemma.  This application implies $\mathbb{E}\bigl[\tanh\bigl(\mathbf{b}^TG_{\Lambda,\boldsymbol{\mu}}(\mathbf{Z})\bigr)\mathbf{Z}^T\bigr] = \mathbb{E}\bigl[\tanh'\bigl(\mathbf{b}^TG_{\Lambda,\boldsymbol{\mu}}(\mathbf{Z})\bigr)\bigr]\mathbf{b}^T\Lambda$ and reduces the above identities to
\begin{align}
    -\mathbb{E}\bigl[\tanh\bigl(\mathbf{b}^T_1G_{\Lambda,\boldsymbol{\mu}}(\mathbf{Z})\bigr)\bigr]\mathbf{b}_1 +\mathbb{E}\bigl[\tanh\bigl(\mathbf{b}^T_3G_{\Lambda,\boldsymbol{\mu}}(\mathbf{Z})\bigr)\bigr]\mathbf{b}_3 + M^T\boldsymbol{\mu} \,&=\, \mathbf{0},\\
    \bigl\{-\mathbb{E}\bigl[\tanh'(\mathbf{b}^T_1G_{\Lambda,\boldsymbol{\mu}}(\mathbf{Z}))\bigr]\mathbf{b}_1\mathbf{b}^T_1 +\mathbb{E}\bigl[\tanh'(\mathbf{b}^T_3G_{\Lambda,\boldsymbol{\mu}}(\mathbf{Z}))\bigr]\mathbf{b}_3\mathbf{b}_3^T + M^T\bigr\}\boldsymbol{\Lambda} \,&=\, \mathbf{0}. 
\end{align}
Assuming the optimal $\Lambda$ is a full-rank matrix, we obtain:
\begin{equation}
   M^T = \mathbb{E}\bigl[\tanh'(\mathbf{b}^T_1G_{\Lambda,\boldsymbol{\mu}}(\mathbf{Z}))\bigr]\mathbf{b}_1\mathbf{b}^T_1 -\mathbb{E}\bigl[\tanh'(\mathbf{b}^T_3G_{\Lambda,\boldsymbol{\mu}}(\mathbf{Z}))\bigr]\mathbf{b}_3\mathbf{b}_3^T.
\end{equation}
Combining the above equations results in
\begin{align}
    &\bigl(\mathbb{E}\bigl[\tanh\bigl(\mathbf{b}^T_1G_{\Lambda,\boldsymbol{\mu}}(\mathbf{Z})\bigr)\bigr] - (\mathbf{b}^T_1\boldsymbol{\mu})\mathbb{E}\bigl[\tanh'(\mathbf{b}^T_1G_{\Lambda,\boldsymbol{\mu}}(\mathbf{Z}))\bigr]\bigr)\mathbf{b}_1 \nonumber \\
    =\, & \bigl(\mathbb{E}\bigl[\tanh\bigl(\mathbf{b}^T_3G_{\Lambda,\boldsymbol{\mu}}(\mathbf{Z})\bigr)\bigr] - (\mathbf{b}^T_3\boldsymbol{\mu})\mathbb{E}\bigl[\tanh'(\mathbf{b}^T_3G_{\Lambda,\boldsymbol{\mu}}(\mathbf{Z}))\bigr]\bigr)\mathbf{b}_3.
\end{align}
The theorem's assumption together with Lemma \ref{app: Lemma1 Thm3} shows that the scalars $\mathbb{E}\bigl[\tanh\bigl(\mathbf{b}^T_1G_{\Lambda,\boldsymbol{\mu}}(\mathbf{Z})\bigr)\bigr] - (\mathbf{b}^T_1\boldsymbol{\mu})\mathbb{E}\bigl[\tanh'(\mathbf{b}^T_1G_{\Lambda,\boldsymbol{\mu}}(\mathbf{Z}))\bigr]$ and $\mathbb{E}\bigl[\tanh\bigl(\mathbf{b}^T_3G_{\Lambda,\boldsymbol{\mu}}(\mathbf{Z})\bigr)\bigr] - (\mathbf{b}^T_3\boldsymbol{\mu})\mathbb{E}\bigl[\tanh'(\mathbf{b}^T_3G_{\Lambda,\boldsymbol{\mu}}(\mathbf{Z}))\bigr]$ are non-zero, since $G_{\Lambda,\boldsymbol{\mu}}(\mathbf{Z})$ satisfies the well-separability assumption and hence has a non-zero mean along any optimal $\mathbf{b}_1$ or $\mathbf{b}_3$. Therefore, $\mathbf{b}_1$ and $\mathbf{b}_3$ have the same direction and for a real $\alpha\in\mathbb{R}$ we have $\mathbf{b}_3=\alpha\mathbf{b}_1$. Note that $\alpha>0$ holds because as supposed in the theorem both optimal $\mathbf{b}_1, \mathbf{b}_3$ are inside a ball with radius $\frac{\mathbb{E}[\Vert\mathbf{X} \Vert_2]}{\lambda}$ around $\mathbf{d}$ which does not include $\mathbf{0}$.  As a result, we can reduce the above identity to
\begin{align}
    &\mathbb{E}\bigl[\tanh\bigl(\mathbf{b}^T_1G_{\Lambda,\boldsymbol{\mu}}(\mathbf{Z})\bigr)\bigr] - (\mathbf{b}^T_1\boldsymbol{\mu})\mathbb{E}\bigl[\tanh'(\mathbf{b}^T_1G_{\Lambda,\boldsymbol{\mu}}(\mathbf{Z}))\bigr] \nonumber \\
    =\, &\alpha\mathbb{E}\bigl[\tanh\bigl(\alpha\mathbf{b}^T_1G_{\Lambda,\boldsymbol{\mu}}(\mathbf{Z})\bigr)\bigr] - \alpha^2(\mathbf{b}^T_1\boldsymbol{\mu})\mathbb{E}\bigl[\tanh'(\alpha\mathbf{b}^T_1G_{\Lambda,\boldsymbol{\mu}}(\mathbf{Z}))\bigr]. \label{app: Thm4: Proof, Eq1}
\end{align}
\textbf{Claim:} $\alpha=1.$

To show this claim we define $Y=\mathbf{b}^T_1G_{\Lambda,\boldsymbol{\mu}}(\mathbf{Z})\sim \mathcal{N}(\mathbf{b}^T_1\boldsymbol{\mu},\mathbf{b}^T_1\Lambda\Lambda^T\mathbf{b}^T_1)$. Based on this definition, we define function $h(\alpha)$ and simplify \eqref{app: Thm4: Proof, Eq1} to $h(\alpha)=0$:
\begin{equation}
 h(\alpha):=\alpha\biggl( \mathbb{E}\bigl[\tanh(\alpha Y)\bigr] - \mathbb{E}[\alpha Y]\mathbb{E}\bigl[\tanh'(\alpha Y)\bigr]\biggr) - \biggl(\mathbb{E}\bigl[\tanh(Y)\bigr] - \mathbb{E}[Y]\mathbb{E}\bigl[\tanh'(Y)\bigr]\biggr).
\end{equation}
Note that $h(1)=0$ holds by definition. For the derivative $h'(\alpha)$ we have
\begin{align*}
    h'(\alpha)\, &= \, \mathbb{E}\bigl[\tanh(\alpha Y)\bigr] - \mathbb{E}[\alpha Y]\mathbb{E}\bigl[\tanh'(\alpha Y)\bigr] + \\
    &\quad + \mathbb{E}\biggl[\bigl[\alpha Y-\mathbb{E}[\alpha Y]\bigr]\tanh'(\alpha Y) \biggr] - \mathbb{E}[\alpha Y]\mathbb{E}[\alpha Y\tanh''(\alpha Y)] \\
    &\stackrel{(a)}{=} \mathbb{E}\bigl[\tanh(\alpha Y)\bigr] - \mathbb{E}[\alpha Y]\mathbb{E}\bigl[\tanh'(\alpha Y)\bigr] \\
    &\quad + \alpha^2\bigl(\sigma^2_Y - \mathbb{E}[Y]^2\bigr) \mathbb{E}\bigl[\tanh''(\alpha Y) \bigr] - \alpha^3 \mathbb{E}[Y]\sigma^2_Y\mathbb{E}\bigl[\tanh'''(\alpha Y)\bigr]. \numberthis
\end{align*}
Here, $(a)$ follows from Stein's lemma. According to the theorem's assumption, the underlying GMM satisfies the well-separability condition along optimal $\mathbf{b}_3=\alpha\mathbf{b}_1$. As a result, for any feasible $\alpha$, $\alpha^2\bigl(\mathbb{E}[Y]^2-\sigma^2_Y \bigr)- 2\alpha^3 \mathbb{E}[Y]\sigma^2_Y\ge 0$. Then, Lemma \ref{app: Lemma2: Thm1} and Lemma \ref{app: Lemma1 Thm3} imply that 
\begin{align*}
    h'(\alpha)\, &= \mathbb{E}\bigl[\tanh(\alpha Y)\bigr] - \mathbb{E}[\alpha Y]\mathbb{E}\bigl[\tanh'(\alpha Y)\bigr] \\
    &\quad + \alpha^2\bigl(\sigma^2_Y - \mathbb{E}[Y]^2\bigr) \mathbb{E}\bigl[\tanh''(\alpha Y) \bigr] - \alpha^3 \mathbb{E}[Y]\sigma^2_Y\mathbb{E}\bigl[\tanh'''(\alpha Y)\bigr] \\
    & > 0. \numberthis
\end{align*}
Hence, $h$ is an increasing function for any feasible $\alpha$ implying that $\alpha=1$ is the only solution for which $h(\alpha)=1$. The claim's validity follows from this result.

Since the claim holds, we have $\mathbf{b}_1=\mathbf{b}_3$. However, since $\mathbf{b}_1=\mathbf{b}_3$ means $\log(\frac{\exp(\mathbf{b}_1^T\mathbf{x})+\exp(-\mathbf{b}_1^T\mathbf{x})}{\exp(\mathbf{b}_3^T\mathbf{x})+\exp(-\mathbf{b}_3^T\mathbf{x})}) = 0$, we must have $\mathbf{b}_1=\mathbf{b}_3=\mathbf{d}$ to avoid any additional penalty coming from the regularization term in the maximization problem. Setting the gradient of the maximization objective to be $0$ at $\mathbf{b}_1=\mathbf{b}_3=\mathbf{d}$ implies
\begin{equation}\label{app: Thm4: proof Eq 6}
    \mathbb{E}\bigl[G_{\Lambda,\boldsymbol{\mu}}(\mathbf{Z})\tanh(\mathbf{d}^TG_{\Lambda,\boldsymbol{\mu}}(\mathbf{Z}))\bigr] = \mathbb{E}\bigl[\mathbf{X}\tanh(\mathbf{d}^T\mathbf{X}) \bigr].
\end{equation} 
Moreover, since $\nabla_{\operatorname{vec}(\Lambda,\boldsymbol{\mu})} \mathcal{L}_2(G_{\Lambda,\boldsymbol{\mu}})=\mathbf{0}$, we should also have $\nabla_{\operatorname{vec}(\Lambda,\boldsymbol{\mu})} \mathcal{L}_1(G_{\Lambda,\boldsymbol{\mu}})=\mathbf{0}$ which for a full-rank $\Lambda$ implies
\begin{equation}\label{app: Thm4: proof Eq 7}
        \mathbb{E}\bigl[G_{\Lambda,\boldsymbol{\mu}}(\mathbf{Z})G_{\Lambda,\boldsymbol{\mu}}(\mathbf{Z})^T\bigr] = \mathbb{E}\bigl[\mathbf{X}\mathbf{X}^T\bigr].
\end{equation}
However, in Theorem 3's proof we showed \eqref{app: Thm4: proof Eq 6} and \eqref{app: Thm4: proof Eq 7} under Theorem 4's assumption implies the same distribution for $\mathbf{X}$ and $G_{\Lambda,\boldsymbol{\mu}}(\mathbf{Z})$. This result shows the only stationary point in the characterized set provides the underlying GMM. The proof is therefore complete.
%\end{proof}

\subsection{Proof of Theorem 5}
Considering the additional optimization constraint, all the moment functions in the minimax objective, i.e.,  $\mathbb{E}[\mathbf{X}\mathbf{X}^T]$ and $\mathbb{E}[\log(\exp(\mathbf{b}^T\mathbf{X})+\exp(-\mathbf{b}^T\mathbf{X}))]$, are the expected vales of even functions for which we have the same expected value under $\mathcal{N}(\boldsymbol{\mu}_{\mathbf{X}},\Sigma_{\mathbf{X}})$ and $\mathcal{N}(-\boldsymbol{\mu}_{\mathbf{X}},\Sigma_{\mathbf{X}})$. Therefore, without loss of generality we perform the generalization analysis assuming that all the $n$ samples are drawn from $\mathcal{N}(\boldsymbol{\mu}_{\mathbf{X}},\Sigma_{\mathbf{X}})$. Note that as discussed in the proof of Theorem 4, this additional constraint does not change the optimal solution to the minimax problem.

Similar to Theorem 4's proof, we consider $\mathcal{L}(G_{\Lambda,\boldsymbol{\mu}})$ as the summation of the following two non-negative components
\begin{align*}
    \mathcal{L}_1(G_{\Lambda,\boldsymbol{\mu}}):&= \max_A\: \mathbb{E}[\frac{1}{2}\mathbf{X}^TA\mathbf{X}] - \mathbb{E}[\frac{1}{2}G_{\Lambda,\boldsymbol{\mu}}(\mathbf{Z})^TAG_{\Lambda,\boldsymbol{\mu}}(\mathbf{Z})] + \frac{\lambda}{2}\Vert A\Vert^2_F \\
    &= \frac{1}{2\lambda}\big\Vert \mathbb{E}[\mathbf{X}\mathbf{X}^T]- \mathbb{E}[G_{\Lambda,\boldsymbol{\mu}}(\mathbf{Z})G_{\Lambda,\boldsymbol{\mu}}(\mathbf{Z})^T] \big\Vert^2_F, \numberthis\\
    \mathcal{L}_2(G_{\Lambda,\boldsymbol{\mu}}) :&= \max_{\mathbf{b}_1,\mathbf{b}_3}\: \mathbb{E}\bigl[\log\bigl(\frac{\exp(\mathbf{b}_1^T\mathbf{X})+\exp(-\mathbf{b}_1^T\mathbf{X})}{\exp(\mathbf{b}_3^T\mathbf{X})+\exp(-\mathbf{b}_3^T\mathbf{X})}\bigr) \bigr] \\
    &\quad - \mathbb{E}\bigl[\log\bigl(\frac{\exp(\mathbf{b}_1^TG_{\Lambda,\boldsymbol{\mu}}(\mathbf{Z}))+\exp(-\mathbf{b}_1^TG_{\Lambda,\boldsymbol{\mu}}(\mathbf{Z}))}{\exp(\mathbf{b}_3^TG_{\Lambda,\boldsymbol{\mu}}(\mathbf{Z}))+\exp(-\mathbf{b}_3^TG_{\Lambda,\boldsymbol{\mu}}(\mathbf{Z}))}\bigr) \bigr]  -\frac{\lambda}{2}\bigl(\Vert\mathbf{b}_1-\mathbf{d}\Vert^2_2  + \Vert\mathbf{b}_3-\mathbf{d}\Vert^2_2 \bigr). \numberthis
\end{align*}
We also use $\widehat{\mathcal{L}}_1(G_{\Lambda,\boldsymbol{\mu}})$ and $\widehat{\mathcal{L}}_2(G_{\Lambda,\boldsymbol{\mu}})$ to denote the empirical versions of the above definitions evaluated on the empirical distribution of $n$ observed samples. 

To establish the generalization bound, we first bound the convergence rate for estimating the empirical mean vector $\widehat{\boldsymbol{\mu}}$ and covariance and $\widehat{\Sigma}$ using $n$ i.i.d. samples from the Gaussian distribution $\mathcal{N}(\boldsymbol{\mu}_{\mathbf{X}},\Sigma_{\mathbf{X}})$. Applying standard covariance bound developed for $\Vert \Sigma\Vert_\sigma$ sub-Gaussian distribution $\mathcal{N}(\mathbf{0},\Sigma_{\mathbf{X}})$ we can show that for any $\delta>0$ with probability at least $1-\delta$ we have:
\begin{equation}\label{app: Thm 5: covariance}
    \big\Vert \widehat{\Sigma} - \Sigma_{\mathbf{X}} \big\Vert_{\sigma}\, \le \, \Vert\Sigma_{\mathbf{X}} \Vert_{\sigma}\sqrt{\frac{C_1 d\log(2/\delta)}{n}}
\end{equation}
where $d$ denotes $\mathbf{X}$'s dimension and $C_1$ is a universal constant. We refer the readers to \cite{vershynin2012close} for a complete proof of this result. For the convergence of the empirical mean $\widehat{\boldsymbol{\mu}}$ to the true mean $\boldsymbol{\mu}_{\mathbf{X}}$, we can use an $\epsilon$-covering over the unit ball with size $N=6^d$ and apply standard concentration inequalities to show there exists a universal constant $C_2$ that for any $\delta$ with probability at least $1-\delta$ the following holds for any unit-norm $\Vert \mathbf{u}\Vert_2=1$ 
\begin{equation}
    \big\vert (\mathbf{u}^T\widehat{\boldsymbol{\mu}})^2 - (\mathbf{u}^T\boldsymbol{\mu}_{\mathbf{X}})^2  \big\vert \le \Vert\Sigma_{\mathbf{X}} \Vert_{\sigma}\Vert\boldsymbol{\mu}_{\mathbf{X}}\Vert_2\sqrt{\frac{C_2 d\log(2/\delta)}{n}}.
\end{equation}

Having that $\Vert \boldsymbol{\mu}_1\boldsymbol{\mu}_1^T - \boldsymbol{\mu}_2\boldsymbol{\mu}_2^T  \Vert_\sigma = \max_{\Vert\mathbf{u}\Vert_2=1} \vert (\mathbf{u}^T\boldsymbol{\mu}_1)^2 - (\mathbf{u}^T\boldsymbol{\mu}_2)^2 \vert$, the above results say there exists a constant $C$ that for any $\delta>0$ with probability at least $1-\delta$ we have
\begin{equation}
    \big\Vert \widehat{\boldsymbol{\mu}}\widehat{\boldsymbol{\mu}}^T +\widehat{\Sigma} - {\boldsymbol{\mu}}_{\mathbf{X}}{\boldsymbol{\mu}}_{\mathbf{X}}^T - {\Sigma}_{\mathbf{X}} \big\Vert_{\sigma} \, \le \, \Vert\Sigma_{\mathbf{X}} \Vert_{\sigma}\bigl(1+\Vert\boldsymbol{\mu}_{\mathbf{X}}\Vert_2\bigr)\sqrt{\frac{C d\log(4/\delta)}{n}}.
\end{equation}
Since $\Vert A\Vert_F\le \sqrt{d}\Vert A\Vert_{\sigma}$ always holds for a $d\times d$ matrix $A$, the above inequality also implies
\begin{equation}
    \big\Vert \widehat{\boldsymbol{\mu}}\widehat{\boldsymbol{\mu}}^T +\widehat{\Sigma} - {\boldsymbol{\mu}}_{\mathbf{X}}{\boldsymbol{\mu}}_{\mathbf{X}}^T - {\Sigma}_{\mathbf{X}} \big\Vert_{F} \, \le \, d\Vert\Sigma_{\mathbf{X}} \Vert_{\sigma}\bigl(1+\Vert\boldsymbol{\mu}_{\mathbf{X}}\Vert_2\bigr)\sqrt{\frac{C \log(4/\delta)}{n}}.
\end{equation}
Therefore, the following inequalities will also hold
\begin{align}
    &\big\vert \widehat{\mathcal{L}}_1(G_{\Lambda,\boldsymbol{\mu}})- {\mathcal{L}}_1(G_{\Lambda,\boldsymbol{\mu}})\big\vert \nonumber\\
    \le\, & 2\bigl(\mathbb{E}[\Vert\mathbf{X}\Vert^2]+\mathbb{E}[\Vert G_{\Lambda,\boldsymbol{\mu}}(\mathbf{Z})\Vert^2]\bigr)d\Vert\Sigma_{\mathbf{X}} \Vert_{\sigma}\bigl(1+\Vert\boldsymbol{\mu}_{\mathbf{X}}\Vert_2\bigr)\sqrt{\frac{C \log(4/\delta)}{\lambda^2 n}}\nonumber  +\mathcal{O}\bigl(\frac{d^2\log(1/\delta)}{n}\bigr) \nonumber\\
    =\, & \mathcal{O}(\sqrt{\frac{d^2\log(1/\delta)}{\lambda^2 n}}), \label{app: Thm 5: L1 Value} \\
    &\big\Vert \nabla_{\boldsymbol{\mu}}\widehat{\mathcal{L}}_1(G_{\Lambda,\boldsymbol{\mu}})(\boldsymbol{\mu})- \nabla_{\boldsymbol{\mu}} {\mathcal{L}}_1(G_{\Lambda,\boldsymbol{\mu}})(\boldsymbol{\mu})\big\Vert_{\sigma}\nonumber \\
    \le\, & \Vert\boldsymbol{\mu}\Vert_2\Vert\Sigma_{\mathbf{X}} \Vert_{\sigma}\bigl(1+\Vert\boldsymbol{\mu}_{\mathbf{X}}\Vert_2\bigr)\sqrt{\frac{C d\log(4/\delta)}{\lambda^2 n}} \nonumber \\
    = &\mathcal{O}(\sqrt{\frac{d\log(1/\delta)}{\lambda^2 n}}),\label{app: Thm 5: L1 Mu Gradient}\\
    &\big\Vert \nabla_{\operatorname{vec}(\Lambda)}\widehat{\mathcal{L}}_1(G_{\Lambda,\boldsymbol{\mu}})(\Lambda)- \nabla_{\operatorname{vec}(\Lambda)} {\mathcal{L}}_1(G_{\Lambda,\boldsymbol{\mu}})(\Lambda)\big\Vert_{\sigma} \nonumber \\
    \le\, & \Vert\Lambda\Vert_{\sigma}\Vert\Sigma_{\mathbf{X}} \Vert_{\sigma}\bigl(1+\Vert\boldsymbol{\mu}_{\mathbf{X}}\Vert_2\bigr)\sqrt{\frac{C d\log(4/\delta)}{\lambda^2 n}} \nonumber \\
    =\,& \mathcal{O}(\sqrt{\frac{d\log(1/\delta)}{\lambda^2 n}}).\label{app: Thm 5: L1 Lambda Gradient}
\end{align}
Next, we bound the generalization error terms for $\mathcal{L}_2(G_{\Lambda,\boldsymbol{\mu}})$. Note that $h_{\mathbf{b}}(\mathbf{x})=\log\bigl(\exp(\mathbf{b}^T\mathbf{x})+\exp(-\mathbf{b}^T\mathbf{x})\bigr)$ is a Lipschitz function with Lipschitz constant $\Vert\mathbf{b}\Vert_2$. Also, since the maximization problem in
$\mathcal{L}_2(G_{\Lambda,\boldsymbol{\mu}})$ maximizes a $\{ \Lambda -\mathbb{E}[\Vert\mathbf{X}\Vert^2_2]-\mathbb{E}[\Vert G_{\Lambda,\boldsymbol{\mu}}(\mathbf{Z})\Vert^2_2] \}$-strongly concave objective, any optimal 
$\mathbf{b}_1$ or $\mathbf{b}_3$ has a Euclidean norm upper-bounded by $M=\Vert\mathbf{d}\Vert_2+\frac{\mathbb{E}[\Vert\mathbf{X}\Vert^2_2]+\mathbb{E}[\Vert G_{\Lambda,\boldsymbol{\mu}}(\mathbf{Z})\Vert^2_2]}{\lambda}$.

We hence consider a cover of size $O(M^d)$ for such norm-bounded $\mathbf{b}$'s. Note that for each $\Vert\mathbf{b}\Vert_2\le M$, $h_{\mathbf{b}}(\mathbf{X})=\log\bigl(\exp(\mathbf{b}^T\mathbf{X})+\exp(-\mathbf{b}^T\mathbf{X})\bigr)$ provides an $M$-Lipschitz function of Gaussian $\mathbf{X}$ for which we can apply standard concentration bounds \cite{vershynin2010introduction} to obtain
\begin{equation}
    \Pr\bigl(\big|\frac{1}{n}\sum_{i=1}^n h_{\mathbf{b}}(\mathbf{x}_i)- \mathbb{E}[h_\mathbf{b}(\mathbf{X})] \big|\ge t\bigr) \le \exp\bigl(-\frac{n t^2}{2M^2\Vert\Sigma_\mathbf{X}\Vert^2_\sigma C_3}\bigr)
\end{equation}
where $C_3$ is a universal constant. As a result, for any $\delta$ with probability at least $1-\delta$ the following generalization bound uniformly holds over the set of norm-bounded $\Vert\mathbf{b}\Vert_2\le M$
\begin{equation}
    \big|\frac{1}{n}\sum_{i=1}^n h_{\mathbf{b}}(\mathbf{x}_i)- \mathbb{E}[h_\mathbf{b}(\mathbf{X})] \big| \le \mathcal{O}\biggl(M\Vert\Sigma_\mathbf{X}\Vert_\sigma\sqrt{\frac{d\log(M/\delta) }{n}}\biggr).
\end{equation}
Therefore, the empirical maximization objective of $\mathcal{L}_2(G_{\Lambda,\boldsymbol{\mu}})$ is at most $\mathcal{O}\bigl(M\Vert\Sigma_\mathbf{X}\Vert_\sigma\sqrt{\frac{d\log(M/\delta) }{n}}\bigr)$-different from the underlying objective. Since $\mathcal{L}_2(G_{\Lambda,\boldsymbol{\mu}})$ reduces to maximizing a $\bigl\{\lambda-\mathbb{E}[\Vert\mathbf{X}\Vert^2_2]-\mathbb{E}[\Vert G_{\Lambda,\boldsymbol{\mu}}(\mathbf{Z})\Vert^2_2]\bigr\}$-strongly concave objective, this result implies that the optimal $\widehat{\mathbf{b}}_1,\widehat{\mathbf{b}}_3$ for the empirical objective will be different from the optimal solution $\mathbf{b}_1,\mathbf{b}_3$ for the underlying problem by at most
\begin{equation}
    \max\bigl\{\Vert\widehat{\mathbf{b}}_1-\mathbf{b}_1\Vert_2,\Vert\widehat{\mathbf{b}}_3-\mathbf{b}_3\Vert\bigr\} \le \mathcal{O}\biggl(M\Vert\Sigma_\mathbf{X}\Vert_\sigma\sqrt{\frac{d\log(M/\delta) }{\lambda^2 n}}\biggr).
\end{equation}
Remember from Theorem 4's proof that by applying the Danskin's theorem we get
\begin{align*}
    \nabla_{\boldsymbol{\mu}} \mathcal{L}_2(G_{\Lambda,\boldsymbol{\mu}}) &= \mathbb{E}\bigl[\tanh(\mathbf{b}_3^T G_{\Lambda,\boldsymbol{\mu}}(\mathbf{Z})) \bigr]\mathbf{b}_3- \mathbb{E}\bigl[\tanh(\mathbf{b}_1^T G_{\Lambda,\boldsymbol{\mu}}(\mathbf{Z})) \bigr]\mathbf{b}_1 \\
    \nabla_{\Lambda} \mathcal{L}_2(G_{\Lambda,\boldsymbol{\mu}}) &= \mathbf{b}_3\mathbb{E}\bigl[\tanh(\mathbf{b}_3^T G_{\Lambda,\boldsymbol{\mu}}(\mathbf{Z}))\mathbf{Z}^T \bigr]- \mathbf{b}_1\mathbb{E}\bigl[\tanh(\mathbf{b}_1^T G_{\Lambda,\boldsymbol{\mu}}(\mathbf{Z})) \mathbf{Z}^T\bigr].
\end{align*}
Consequently, with probability at least $1-\delta$ we will have the following inequalities hold for all feasible $\boldsymbol{\mu},\Lambda$
\begin{align}
    &\big\vert \widehat{\mathcal{L}}_2(G_{\Lambda,\boldsymbol{\mu}})- {\mathcal{L}}_2(G_{\Lambda,\boldsymbol{\mu}})\big\vert \le \mathcal{O}\biggl(M\Vert\Sigma_\mathbf{X}\Vert_\sigma\sqrt{\frac{d\log(M/\delta) }{n}}\biggr) \label{app: Thm 5: L2 Value} \\
    &\big\Vert \nabla_{\boldsymbol{\mu}}\widehat{\mathcal{L}}_1(G_{\Lambda,\boldsymbol{\mu}})(\boldsymbol{\mu})- \nabla_{\boldsymbol{\mu}} {\mathcal{L}}_2(G_{\Lambda,\boldsymbol{\mu}})(\boldsymbol{\mu})\big\Vert_{\sigma}  \le \mathcal{O}\biggl(\mathbb{E}[
    \Vert G_{\Lambda,\boldsymbol{\mu}}(\mathbf{Z}) \Vert_2]  M\Vert\Sigma_\mathbf{X}\Vert_\sigma\sqrt{\frac{d\log(M/\delta) }{\lambda^2 n}}\biggr),\label{app: Thm 5: L2 Mu Gradient}\\
    &\big\Vert \nabla_{\Lambda}\widehat{\mathcal{L}}_2(G_{\Lambda,\boldsymbol{\mu}})(\Lambda)- \nabla_{\Lambda} {\mathcal{L}}_2(G_{\Lambda,\boldsymbol{\mu}})(\Lambda)\big\Vert_{\sigma} \le \mathcal{O}\biggl(\mathbb{E}[
    \Vert G_{\Lambda,\boldsymbol{\mu}}(\mathbf{Z}) \Vert_2]  M\Vert\Sigma_\mathbf{X}\Vert_\sigma\sqrt{\frac{d\log(M/\delta) }{\lambda^2 n}}\biggr).\label{app: Thm 5: L2 Lambda Gradient}
\end{align}
Finally, combining \eqref{app: Thm 5: L1 Value}-\eqref{app: Thm 5: L1 Lambda Gradient} with \eqref{app: Thm 5: L2 Value}-\eqref{app: Thm 5: L2 Lambda Gradient} shows that for every $\delta>0$ with probability at least $1-\delta$ the following will hold for any feasible $G_{\Lambda,\boldsymbol{\mu}}$
\begin{align}
    &\big\vert \widehat{\mathcal{L}}(G_{\Lambda,\boldsymbol{\mu}})- {\mathcal{L}}(G_{\Lambda,\boldsymbol{\mu}})\big\vert \le \mathcal{O}\biggl(\sqrt{\frac{d^2\log(1/\delta) }{\lambda^2 n}}\biggr), \label{app: Thm 5: L Value} \\
    &\big\Vert \nabla_{\boldsymbol{\mu}}\widehat{\mathcal{L}}_1(G_{\Lambda,\boldsymbol{\mu}})(\boldsymbol{\mu})- \nabla_{\boldsymbol{\mu}} {\mathcal{L}}_2(G_{\Lambda,\boldsymbol{\mu}})(\boldsymbol{\mu})\big\Vert_{\sigma}   \nonumber \\
   & + \big\Vert \nabla_{\Lambda}\widehat{\mathcal{L}}_1(G_{\Lambda,\boldsymbol{\mu}})(\Lambda)- \nabla_{\Lambda}{\mathcal{L}}_2(G_{\Lambda,\boldsymbol{\mu}})(\Lambda)\big\Vert_{\sigma} \, \le \, \mathcal{O}\biggl(\sqrt{\frac{d\log(1/\delta) }{\lambda^2 n}}\biggr).\label{app: Thm 5: L Gradient} 
\end{align}
Therefore, the proof is complete.
%\end{proof}

\end{appendices}

\end{document}